\newtcolorbox{promptbox}[1]{
  enhanced,
  breakable,
  colback=white,
  colframe=black,
  boxrule=0.8pt,
  arc=1.5pt,
  left=8pt,right=8pt,top=8pt,bottom=8pt,
  fonttitle=\bfseries,
  title=#1
}
\crefname{claim}{claim}{claims}
\pgfplotsset{compat=1.18}
\Crefname{algocf}{Algorithm}{Algorithms}
\tikzstyle{vecArrow} = [thick, decoration={markings,mark=at position
\tikzstyle{innerWhite} = [semithick, white,line width=1.4pt, shorten >= 4.5pt]
\theoremstyle{plain}
\newtheorem{theorem}{Theorem}[section]
\newtheorem{lemma}[theorem]{Lemma}
\newtheorem{fact}[theorem]{Fact}
\newtheorem{proposition}[theorem]{Proposition}
\newtheorem{corollary}[theorem]{Corollary}
\theoremstyle{plain}
\newtheorem{definition}{Definition}[section] \newtheorem{example}[definition]{Example}
\theoremstyle{plain}
\newcommand{\squishlist}{
\begin{list}{{$\bullet$}}
{\setlength{\itemsep}{3pt}      \setlength{\parsep}{1pt}
\setlength{\topsep}{1pt}       \setlength{\partopsep}{0pt}
\setlength{\leftmargin}{1em} \setlength{\labelwidth}{1em}
\setlength{\labelsep}{0.5em} } }
\newcommand{\squishend}{  \end{list}}
\newcommand{\xhdr}[1]{\vspace{2mm}\noindent{\bf {#1.}\ }}
\newcommand{\Payoff}[2][]{\text{\bf Payoff}\ifthenelse{\not\equal{}{#1}}{_{#1}}{}\!\left[{\def\givenn{\middle|}#2}\right]}
\newcommand{\Reg}[2][]{\text{\bf Reg}\ifthenelse{\not\equal{}{#1}}{_{#1}}{}\!\left[{\def\givenn{\middle|}#2}\right]}
\newcommand{\action}{a}
\newcommand{\actionSpace}{\mathcal{A}}
\newcommand{\outcomeSpace}{\mathcal{Y}}
\newcommand{\agentU}{u}
\newcommand{\tildeagentU}{\tilde{\agentU}}
\newcommand{\predictor}{\mu}
\newcommand{\predictornew}{\nu}
\newcommand{\truePredic}{\kappa}
\newcommand{\prediction}{p}
\newcommand{\bestr}{a^*}
\newcommand{\NI}{\textsc{NI}}
\newcommand{\indirectU}{U}
\newcommand{\agentUClass}{\mathcal{U}}
\newcommand{\indirectUDeriv}{\delta_\indirectU}
\newcommand{\ECE}[2][]{\textsc{ECE}\ifthenelse{\not\equal{}{#1}}{_{#1}}{}\!\left[{\def\givenn{\middle|}#2}\right]}
\newcommand{\dCE}[2][]{\text{\cc{ dCE}}\ifthenelse{\not\equal{}{#1}}{_{#1}}{}\!\left[{\def\givenn{\middle|}#2}\right]}
\newcommand{\ddCE}[2][]{\text{\cc{ ddCE}}\ifthenelse{\not\equal{}{#1}}{_{#1}}{}\!\left[{\def\givenn{\middle|}#2}\right]}
\newcommand{\smCE}[2][]{\text{\cc{ smCE}}\ifthenelse{\not\equal{}{#1}}{_{#1}}{}\!\left[{\def\givenn{\middle|}#2}\right]}
\newcommand{\OBJ}[2][]{\text{\bf OBJ}\ifthenelse{\not\equal{}{#1}}{_{#1}}{}\!\left[{\def\givenn{\middle|}#2}\right]}
\newcommand{\OPT}[2][]{\text{\bf OPT}\ifthenelse{\not\equal{}{#1}}{_{#1}}{}\!\left[{\def\givenn{\middle|}#2}\right]}
\newcommand{\Primal}[2][]{\cc{Primal}\ifthenelse{\not\equal{}{#1}}{_{#1}}{}\!\left[{\def\givenn{\middle|}#2}\right]}
\newcommand{\earthDist}[2][]{\text{\cc{ EMD}}\ifthenelse{\not\equal{}{#1}}{_{#1}}{}\!\left[{\def\givenn{\middle|}#2}\right]}
\newcommand{\REMD}[2][]{\textsc{REMD}\ifthenelse{\not\equal{}{#1}}{_{#1}}{}\!\left[{\def\givenn{\middle|}#2}\right]}
\newcommand{\earthDistFlow}[2][]{\textsc{REMD$^{\textsc{MisC}}$}\ifthenelse{\not\equal{}{#1}}{_{#1}}{}\!\left[{\def\givenn{\middle|}#2}\right]}
\newcommand{\REMDNew}[2][]{\ensuremath{\overline{\mathrm{REMD}}^{\mathrm{MisC}}\ifthenelse{\equal{#1}{}}{}{_{#1}}\!\left[\def\givenn{\middle|}#2\right]}}
\newcommand{\empricaldFlow}[2][]{\ensuremath{\widehat{\mathrm{REMD}}^{\mathrm{MisC}}\ifthenelse{\equal{#1}{}}{}{_{#1}}\!\left[\def\givenn{\middle|}#2\right]}}
\newcommand{\empricaldInforGap}[2][]{\ensuremath{\widehat{\textsc{InfoGap}}\ifthenelse{\equal{#1}{}}{}{_{#1}}\!\left[\def\givenn{\middle|}#2\right]}}
\newcommand{\Marg}{\textsc{REMD}}
\newcommand{\Bayes}{\textsc{True}}
\newcommand{\upperdFlowC}[2][]{\overline{\textsc{REMD}}^{\mathrm{MisC}}\ifthenelse{\not\equal{}{#1}}{_{#1}}{}\!\left[{\def\givenn{\middle|}#2}\right]}
\newcommand{\feasibleflowSet}{\overline{\Pi}}
\newcommand{\EMD}[2][]{\textsc{EMD}\ifthenelse{\not\equal{}{#1}}{_{#1}}{}\!\left[{\def\givenn{\middle|}#2}\right]}
\newcommand{\CDL}[2][]{\textsc{CDL}\ifthenelse{\not\equal{}{#1}}{_{#1}}{}\!\left[{\def\givenn{\middle|}#2}\right]}
\newcommand{\UCal}[2][]{\textsc{UCal}\ifthenelse{\not\equal{}{#1}}{_{#1}}{}\!\left[{\def\givenn{\middle|}#2}\right]}
\newcommand{\inforGap}[2][]{\textsc{InfoGap}\ifthenelse{\not\equal{}{#1}}{_{#1}}{}\!\left[{\def\givenn{\middle|}#2}\right]}
\newcommand{\agentExpPay}[2][]{\textsc{Payoff}\ifthenelse{\not\equal{}{#1}}{_{#1}}{}\!\left[{\def\givenn{\middle|}#2}\right]}
\newcommand{\Dist}[2][]{\textsc{InfoMeasure}\ifthenelse{\not\equal{}{#1}}{_{#1}}{}\!\left[{\def\givenn{\middle|}#2}\right]}
\newcommand{\outcome}{y}
\newcommand{\reals}{\mathbb{R}}
\newcommand{\Bern}{\cc{Bern}}
\newcommand{\context}{x}
\newcommand{\CDF}{F_\predictor}
\newcommand{\CDFNew}{F_\predictornew}
\newcommand{\PDF}{f_\predictor}
\newcommand{\PDFNew}{f_\predictornew}
\newcommand{\SCDF}{S_\predictor}
\newcommand{\SCDFNew}{S_\predictornew}
\newcommand{\scdf}{S}
\newcommand{\join}{f_{\text{join}}}
\newcommand{\miscali}{\pi}
\newcommand{\couplingSpace}{\Pi}
\newcommand{\predictionNew}{q}
\newcommand{\dist}{f_1}
\newcommand{\distNew}{f_2}
\newcommand{\distSCDF}{S_1}
\newcommand{\distSCDFNew}{S_2}
\newcommand{\distCDF}{F_1}
\newcommand{\distCDFNew}{F_2}
\newcommand{\joinSCDF}{S_{\text{join}}}
\newcommand{\auxfunc}{V}
\newcommand{\abs}[1]{\left| #1 \right|}
\newcommand{\ProGOne}[1]{\text{\hyperref[eq:primal perfect]{$\textsc{P}_1[#1]$}}}
\newcommand{\ProGTwo}[1]{\text{\hyperref[eq:primal perfect intermiddle]{$\textsc{P}_2[#1]$}}}
\newcommand{\ProGOneMisC}[1]{\text{\hyperref[eq:primal miscali]{$\textsc{P}^\textsc{MisC}_1[#1]$}}}
\newcommand{\Dirac}{\textsc{Dirac}}
\newcommand{\completeConstOne}{c}
\newcommand{\soundConstOne}{s}
\newcommand{\newday}{t}
\newcommand{\absoIncr}{\eta}
\newcommand{\relaIncr}{\tau}
\newcommand{\frequency}{f}
\newcommand{\brierScore}[2][]{\textsc{BS}\ifthenelse{\not\equal{}{#1}}{_{#1}}{}\!\left[{\def\givenn{\middle|}#2}\right]}
\newcommand{\ChatGPT}{\sf ChatGPT}
\newcommand{\DeepSeek}{\sf DeepSeek}
\newcommand{\Gemini}{\sf Gemini}
\newcommand{\Qwen}{{\sf Qwen}}
\newcommand{\record}{x}
\newcommand{\IGtol}{s}
\newcommand{\infogapTolerance}{IG-tol}
\newcommand{\generalizedSCDF}{CA-SCDF}
\newcommand{\ProGeneralizedSCDF}{Calibration-adjusted SCDF}
\newcommand{\postProPred}{\hat{p}}
\newcommand{\InfoGap}{\textsc{InfoGap}}
\newcommand{\temperature}{T}
\newcommand{\plattCoef}{A}
\newcommand{\plattInterc}{B}
\newcommand{\logit}{z}
\newcommand{\binsNum}{B}
\newcommand{\scalingFunc}{g}
\newcommand{\scalingFuncFamily}{\mathcal{G}}
\newcommand{\hyperParam}{\alpha}
\newcommand{\calibData}{\mathcal{D}}
\newcommand{\bin}{b}
\newcommand{\binningModel}{M}
\newcommand{\binomPara}{\theta}
\newcommand{\predictPara}{\theta}
\newcommand{\sigmoid}[1]{\sigma({#1})}
\newcommand{\softmax}{\mathrm{softmax}}
\newcommand{\nll}[2][]{\textsc{NLL}\ifthenelse{\not\equal{}{#1}}{_{#1}}{}\!\left[{\def\givenn{\middle|}#2}\right]}
\newcommand{\postParam}{\theta}
\newcommand{\proPostParam}{\Theta}
\newcommand{\postParamEstim}[1]{\hat{\theta}\!\left(#1\right)}
	\DeclareMathOperator{\argmax}{argmax}
	\DeclareMathOperator{\argmin}{argmin}
\newcommand{\condition}{\,\mid\,}
\newcommand{\givenn}{\mid}
\newcommand{\prob}[2][]{\text{Pr}\ifthenelse{\not\equal{}{#1}}{_{#1}}{}\!\left[#2\right]}
\newcommand{\expect}[2][]{\mathbb{E}\ifthenelse{\not\equal{}{#1}}{_{#1}}{}\!\left[{\def\givenn{\middle|}#2}\right]}
\newcommand{\tparen}{\big}
\newcommand{\tprob}[2][]{\text{Pr}\ifthenelse{\not\equal{}{#1}}{_{#1}}{}\tparen[{\def\given{\tparen|}#2}\tparen]}
\newcommand{\texpect}[2][]{\mathbb{E}\ifthenelse{\not\equal{}{#1}}{_{#1}}{}\tparen[{\def\given{\tparen|}#2}\tparen]}
\newcommand{\sprob}[2][]{\text{Pr}\ifthenelse{\not\equal{}{#1}}{_{#1}}{}[#2]}
\newcommand{\sexpect}[2][]{\mathbb{E}\ifthenelse{\not\equal{}{#1}}{_{#1}}{}[#2]}
\newcommand{\dd}{{\mathrm d}}
\newcommand{\indicator}[1]{{\mathbbm{1}\left\{ #1 \right\}}}
\newcommand{\plus}[1]{{\left( #1 \right)^+}}
\newcommand{\cc}[1]{\ensuremath{\mathsf{#1}}}
\newcommand{\primed}{^{\dagger}}
\newcommand{\supp}{\cc{supp}}
\title{
Is This Predictor More Informative than Another? \\ A Decision-Theoretical Comparison}
\author{Yiding Feng\thanks{Hong Kong University of Science and Technology, Email: {\tt ydfeng@ust.hk}}  
\quad 
Liuhan Qian\thanks{Hong Kong University of Science and Technology, Email: {\tt lqianab@connect.ust.hk}}
\quad 
Wei Tang\thanks{The Chinese University of Hong Kong, Email: {\tt wtang2359@gmail.com}}
}
\date{}
\begin{document}

\maketitle
\begin{abstract}

In many real-world applications, a model provider provides probabilistic forecasts to downstream decision-makers who use them to make decisions under diverse payoff objectives. 
The provider may have access to multiple predictive models, each potentially miscalibrated, and must choose which model to deploy in order to maximize the usefulness of predictions for downstream decisions. A central challenge arises: how can the provider meaningfully compare two predictors when neither is guaranteed to be well-calibrated, and when the relevant decision tasks may differ across users and contexts?

To answer this, our first contribution introduces the notion of the \emph{informativeness gap} between any two predictors, defined as the maximum normalized payoff advantage one predictor offers over the other across all decision-making tasks. Our framework strictly generalizes several existing notions: it subsumes U-Calibration \citep{KLST-23} and Calibration Decision Loss \citep{HW-24}, which compare a miscalibrated predictor to its calibrated counterpart, and it recovers Blackwell informativeness \citep{B-51,B-53} as a special case when both predictors are perfectly calibrated. Our second contribution is a dual characterization of the informativeness gap, which gives rise to a natural informativeness measure that can be viewed as a relaxed variant of the earth mover's distance (EMD) between two prediction distributions. We show that this measure satisfies natural desiderata: it is complete and sound, and it can be estimated sample-efficiently in the prediction-only access setting. Along the way, we also obtain novel combinatorial structural results when applying this measure to perfectly calibrated predictors.
We complement our theory with experiments on LLM-based forecasters in real-world weather and Bitcoin prediction tasks, showing that the informativeness gap offers a more decision-relevant alternative to traditional metrics like the Brier score/ECE, and provides a principled lens for evaluating how ad hoc calibration post-processing affects downstream decision usefulness.

 \end{abstract}

\newpage

\section{Introduction}
\label{sec:intro}
\newcolumntype{C}[1]{>{\centering\arraybackslash}m{#1}}
\newcommand{\poly}{\text{poly}}
\newenvironment{mythm}[1][Theorem]{\par\medskip \noindent \textbf{#1.} \itshape }{\par\medskip }

\newcommand{\inforGapMea}{\text{infoGap}}

Over the last decade, the machine learning predictors have grown remarkably powerful, especially with the rapid advancements in large-scale models such as large language models (LLMs).
These predictors have demonstrated strong performance across a wide range of domains, providing high-quality predictions that are increasingly used by downstream decision-makers to inform their decisions.
However, there are usually  two key challenges that often hinder decision-makers from fully leveraging these predictions: 
(1) 
the underlying mechanisms used to generate predictions are frequently proprietary and opaque to external users;
(2) due to limitations in training data or computational constraints in the training process, these predictors may exhibit biases and fail to accurately reflect the empirical frequencies of outcomes.
To mitigate these challenges, one natural solution is to ensure that the predictions are calibrated.

A calibrated predictor regulates
that predicted probabilities align with the true (conditional) probability of the outcome \citep{D-82,FV-98}. For
example, predictions of ``80\% likelihood'' materialize approximately 80\% outcome realizations of the time. 
It is well established that agents who naively best respond to perfectly calibrated predictions incur no regret \citep{FV-98,FH-21}.
With this desired property, a variety of calibration error metrics have been proposed to quantify how much a predictor deviates from perfect calibration -- such as the Expected Calibration Error (ECE) \citep{FV-97}, the smooth calibration error  \citep{FH-18}, and the distance to calibration \citep{BGHN-23b}.
Remarkably, some ``decision-theoretic'' measures are proposed to directly quantify the decision-makers' regret when she best-responds to the predictor's forecasts.
\cite{KLST-23} introduced ``U-Calibration'' (UCal), a measure that captures the maximum payoff loss incurred by an agent who naively best responds to a miscalibrated predictor, compared to responding to a best calibrated predictor that provides a fixed prediction. 
Similarly, \cite{HW-24} proposed the ``Calibration Decision Loss'' (CDL) which quantifies the maximum payoff gap between naively responding to a miscalibrated predictor and responding to the true empirical frequencies of the outcomes associated with each prediction.

Both UCal and CDL therefore assess how far a  predictor falls short of its own calibrated counterpart -- they tell us ``how much better this predictor would be if it were perfectly calibrated.'' 
In practice, however, a model provider (e.g., a platform, API service, or forecasting vendor) may have access to multiple distinct predictors, each trained separately and potentially miscalibrated in different ways, and must decide which predictor to deploy to its downstream decision-makers. This deployment choice is often made without knowing the specific payoff objectives of every downstream user, and may need to serve many heterogeneous decision tasks simultaneously. 
Because neither predictor is necessarily the calibrated version of the other, existing metrics do not directly answer the questions like
``Which predictor should the provider deploy so as to maximize downstream decision-makers' payoff?'' ``When two candidate predictors are both miscalibrated, which one is more useful for downstream decisions?'' ``Is deploying a perfectly calibrated predictor always better than deploying a miscalibrated one?''
The answers to these questions are far from obvious. To illustrate this subtlety, we begin with a simple weather-forecasting example.
\begin{example}
\label{ex:comparisons}
Suppose the long‐run probability of rain is 50\%. We compare following predictors:
\begin{itemize}
    \item The predictor $\predictornew$ always forecasts a $50\%$ chance of rain.
    \item 
The predictor $\predictor_1$ forecasts four possible predictions $\{0\%, 49\%, 51\%, 100\%\}$, with a conditional prediction distribution constructed as
\begin{table}[H]
    \centering
\renewcommand{\arraystretch}{2.1}\begin{tabular}{l|C{1.5cm}|C{1.5cm}|C{1.5cm}|C{1.5cm}}
    \hline
        \rule{0pt}{1ex}
        & \bfseries 0\% 
        & \bfseries $\mathbf{49\%}$ 
        & \bfseries $\mathbf{51\%}$ 
        & \bfseries 100\% \\ \hline
    \bfseries When it rains             
        & $0$ 
        & $0.0051$ 
        & $0.0049$ 
        & 0.99 \\ \hline
    \bfseries When it does not rain          & 0.99 
        & $0.0049$ 
        & $0.0051$ 
        & $0$ \\ 
        \hline
    \end{tabular}
    \end{table}

    \item 
    The predictor $\predictor_2$ forecasts two possible predictions $\{1\%, 99\%\}$, with a conditional prediction distribution constructed as
    \begin{table}[H]
    \centering
\renewcommand{\arraystretch}{1.5}\begin{tabular}{l|C{1.5cm}|C{1.5cm}}
    \hline
        \rule{0pt}{2ex}
        & \bfseries $\mathbf{1\%}$ 
        & \bfseries $\mathbf{99\%}$ \\ \hline
    \rule{0pt}{2ex}
    \bfseries When it rains             
        & $0$ 
        & $1$ \\ \hline
    \rule{0pt}{2ex}
    \bfseries When it does not rain          
        & $1$ 
        & $0$ \\ 
        \hline
    \end{tabular}
    \end{table}
\end{itemize}
\end{example}
In this example, we can see that although the predictor $\predictornew$ is perfectly calibrated but it contains no information about the realized outcome beyond the base rate (empirical rainy frequency).
In contrast, although predictor $\predictor_1$ miscalibrates slightly on the two middle predictions -- it says $49\%$ (resp.\ $51\%$) but the true rain rate is $51\%$ (resp.\ $49\%$) -- but is otherwise nearly perfectly accurate: it predicts 100\% almost surely when it rains, and 0\% almost surely when it does not.
Because of this, any decision‐maker who acts on predictor $\predictor_1$'s predictions can make choices that more closely reflect the actual outcome, compared to relying solely on the base rate.
In fact, and perhaps unsurprisingly, one can show that for every decision problem -- no matter the payoff structure -- the expected payoff under miscalibrated predictor $\predictor_1$ is never lower (and often strictly higher) than under perfectly calibrated predictor $\predictornew$. 
This example demonstrates that a miscalibrated but more informative predictor can dominate a perfectly calibrated yet uninformative one.

However, this dominance is not guaranteed in general. If we slightly alter predictor $\predictor_1$ to be predictor $\predictor_2$ such that it miscalibrates on the two extreme predictions -- i.e., predicting $99\%$ when it actually rains, and predicting $1\%$ when it actually does not rain -- then its advantage no longer holds.\footnote{In fact, one can show that the ECEs of predictors $\predictor_1$ and $\predictor_2$ have the same magnitude.}
In this case, in some decision problems, the predictor $\predictor_2$ leads to strictly lower expected payoff than predictor $\predictornew$, despite still being more ``informative'' in some sense.

The above example highlights that not all miscalibrations are equal in terms of their impact on decision-making. This raises a natural question: Which predictor is more useful for decision-making problems? Notably, this question is not fully answered even when both predictors are perfectly calibrated.
In particular, by viewing a predictor as an information structure, the Blackwell informativeness order offers a partial answer to this question \citep{B-51,B-53}. Intuitively, the Blackwell informativeness and its induced Blackwell order captures whether one perfectly calibrated predictor is always more useful than another in every decision problem, thereby inducing a partial order over the space of perfectly calibrated predictors.
However, not every pair of perfectly calibrated predictors is comparable under the Blackwell order, let alone pairs of possibly miscalibrated predictors. This work aims to study the following questions:

\begin{displayquote}
    \emph{Can we compare {any two (possibly miscalibrated)} predictors based on how ``useful'' they are to the decision-making problems? \\
    If they are not equally useful, can we quantify their gap, and is there a natural measure that satisfies common sense desiderata for characterizing this difference?}
\end{displayquote}

\subsection{Main Results}
In this paper, we provide principled answers to both of the motivating questions.
In line with most prior work in the calibration literature, we focus on predictors for binary outcomes.

\xhdr{Informativeness gap} As our first conceptual contribution, we introduce and study a new notion called the \emph{informativeness gap}, denoted by $\inforGap{\cdot, \cdot}$.
Given any two (possibly miscalibrated) predictors $\predictor$ and $\predictornew$, the informativeness gap $\inforGap{\predictor, \predictornew}$
quantifies the maximum payoff advantage that predictor $\predictor$ offers over predictor $\predictornew$ across all {\em normalized} decision-making tasks. 
Here, the normalized decision-making tasks refer to those in which, for any action, the decision-maker's payoff difference over different outcomes are bounded by one (see the formal definition in \Cref{def:informativeness gap}).\footnote{Alternatively, one could consider a \emph{multiplicative} informativeness gap, defined as the maximum ratio between the payoffs under two predictors across all decision problems. However, we focus on and study the \emph{additive} version, as it aligns more naturally with the decision-theoretic calibration literature, which primarily emphasizes additive regret.} 
\begin{align*}
    \inforGap{\predictor, \predictornew} \triangleq 
    \underset{\substack{\textrm{payoff-normalized}\\\textrm{decision problem}}}{\sup}
    \agentExpPay{\predictor} - \agentExpPay{\predictornew}~,
\end{align*}
where $\agentExpPay{\predictor}$ denotes the expected utility in a particular decision problem obtained by the decision-maker who naively best-responds to the predictions produced by predictor.
This definition provides an operational analogue to Blackwell informativeness, which interprets the relative usefulness of two predictors through the decision-maker's maximum payoff difference on all possible decision problems.

Our proposed informativeness gap $\inforGap{\predictor, \predictornew}$ also subsumes several existing decision-theoretic notions of calibration. For instance, U-Calibration corresponds to the special case where predictor $\predictor$ always outputs the true base rate, while CDL arises when $\predictor$ outputs the true conditional empirical frequencies induced by predictor $\predictornew$.
When both $\predictor$ and $\predictornew$ are perfectly calibrated, the Blackwell order offers a partial characterization of $\inforGap{\predictor, \predictornew}$. In particular, we have $\inforGap{\predictor, \predictornew} = 0$ if and only if $\predictornew$ is more Blackwell informative than $\predictor$. In other words, the Blackwell order provides an ordinal, partial comparison--indicating which predictor is more informative--without quantifying how much more informative one is than the other.
However, the Blackwell framework does not extend to miscalibrated predictors, nor does it yield a cardinal measure of informativeness. The main contribution of this paper is to go beyond these limitations by extending the informativeness gap $\inforGap{\predictor, \predictornew}$ to \emph{any pair} of predictors--including miscalibrated ones--while providing a meaningful \emph{cardinal} measure of their relative usefulness.

\xhdr{Characterizing $\inforGap{\cdot,\cdot}$ between perfectly calibrated predictors}
We begin by analyzing the informativeness gap $\inforGap{\predictor,\predictornew}$ in the special but theoretically fundamental case where both predictors $\predictor$ and $\predictornew$ are perfectly calibrated. Our main result provides a characterization of this gap, and interestingly, reveals that it closely resembles a relaxed variant of the well-known earth mover's distance (EMD), also known as Wasserstein distance, between probability distributions. Motivated by this connection, we introduce the quantity $\REMD{\cdot, \cdot}$, defined over probability distributions. We then illustrate how to interpret it as an informativeness measure that quantifies how much more useful predictor $\predictor$ is compared to predictor $\predictornew$ in decision-making tasks.

\begin{mythm}[\Cref{def:REMD} and \Cref{thm:strong-duality-perfectly-calibrated} (Informal)]
For any two distributions $\dist,\distNew$, define the \emph{relaxed earth mover's distance} $\REMD{\dist,\distNew}$ as
\begin{align*}
    \REMD{\dist,\distNew} \triangleq 
    \inf\nolimits_{\miscali\in \couplingSpace(\dist, \distNew) } \int_0^1 \abs{\int_0^1 \miscali(\prediction, \predictionNew)\cdot (\prediction - \predictionNew) \, \dd \predictionNew }  \, \dd \prediction
\end{align*}
and $\couplingSpace(\dist, \distNew)$ denotes the set of all couplings between two distributions.
For any two perfectly calibrated predictors $\predictor, \predictornew$, their informativeness gap $\inforGap{\predictor, \predictornew}$ satisfies 
\begin{align*}
    \inforGap{\predictor, \predictornew} = \REMD{\PDF, \PDFNew}
\end{align*}
where $\PDF, \PDFNew \in \Delta([0, 1])$ denote the distributions over predictions generated by the two predictors $\predictor,\predictornew$, respectively.
\end{mythm}
While the informativeness gap $\inforGap{\cdot,\cdot}$ is defined over predictors, our relaxed earth mover's distance $\REMD{\cdot,\cdot}$ is defined over distributions. As we discuss below, $\REMD{\cdot,\cdot}$ admits interesting structural characterizations and may be of independent interest---even beyond the context of calibration or forecasting.\footnote{Throughout the paper, we use $\PDF$ and $\PDFNew$ to denote the prediction distributions associated with predictors $\predictor$ and $\predictornew$, respectively, and $\dist$ and $\distNew$ to denote general distributions.}

By definition, we always have $\REMD{\dist, \distNew} \ge 0$ for all distributions $\dist, \distNew$. When $\REMD{\PDF, \PDFNew} = 0$ for prediction distributions $\PDF$ and $\PDFNew$, our results guarantee that, for all decision problems, the decision-maker obtains weakly higher utility under the perfectly calibrated predictor $\predictornew$ than under $\predictor$. In this case, $\predictornew$ is more Blackwell informative than $\predictor$.
Conversely, when $\predictor$ is more Blackwell informative than $\predictornew$, the value $\REMD{\PDF, \PDFNew}$ quantifies the extent to which $\predictor$ is more informative than $\predictornew$, in terms of the maximal utility gap achievable across all payoff-normalized decision problems.

Since $\REMD{\PDF, \PDFNew}$ can be viewed as a relaxation of the classical earth mover's distance (EMD), its value is always weakly smaller than that of the EMD. Given their structural similarity, one might wonder whether the EMD could serve as a proxy for the informativeness gap. Unfortunately, this is not the case: it is not difficult to construct examples where the multiplicative gap between the $\REMD{\PDF,\PDFNew}$ and the $\EMD{\PDF,\PDFNew}$ of two prediction distributions $\PDF,\PDFNew$ is arbitrarily large (see \Cref{ex: unbounded EMD and relaxed EMD}).

To establish \Cref{thm:strong-duality-perfectly-calibrated}, we first formulate $\inforGap{\predictor, \predictornew}$ as an infinite-dimensional linear program (LP), and then show that it admits a strong duality with the measure $\REMD{\PDF, \PDFNew}$.
Along the way, we also derive alternative and insightful representations of $\REMD{\cdot,\cdot}$, which may be of independent interest.
\begin{mythm}[\Cref{prop:idf} and \Cref{prop:join equivalency} (Informal)]
For two distributions $\dist, \distNew$ with the same mean and supports contained in $[0, 1]$, $\REMD{\dist, \distNew}$ admits the following two representations:
\begin{itemize}
    \item 
    {\em SCDFs representation}: $\REMD{\dist, \distNew} = 2\cdot \max_{t\in [0, 1]}\distSCDF(t) - \distSCDFNew(t)$ where $\distSCDF, \distSCDFNew$ denote the super-cumulative distribution functions (SCDFs) of distributions $\dist, \distNew$, respectively -- that is, the integrals of their cumulative distribution functions. Also see \Cref{fig:CCDF}.
    \item 
    {\em Lattice representation}: $\REMD{\dist, \distNew} = \REMD{\dist\vee \distNew, \distNew}$
    where $\dist\vee \distNew$ denotes the join of distributions $\dist$ and $\distNew$ under the Blackwell partial order. Also see \Cref{fig:join}.
\end{itemize}
\end{mythm}
While the above representations are stated for $\REMD{\cdot,\cdot}$ over distributions, by invoking the equivalence between $\inforGap{\cdot,\cdot}$ and $\REMD{\cdot,\cdot}$ (\Cref{thm:strong-duality-perfectly-calibrated}), we also obtain the analogous results for the informativeness gap $\inforGap{\cdot,\cdot}$ over predictors. Specifically, the SCDF representation shows that $\inforGap{\predictor, \predictornew}$ is exactly twice the maximum gap between the SCDFs of the two prediction distributions. This offers a simple yet precise way to quantify the informativeness gap when the predictors are perfectly calibrated.
It is known that the space of all perfectly calibrated predictors, together with the Blackwell order, forms a lattice structure \citep{KR-00,MS-06}. Our lattice representation provides a qualitative interpretation of this structure: $\inforGap{\predictor, \predictornew}$ can be viewed as the informativeness gap between predictor $\predictornew$ and the least upper bound (join) of $\predictor$ and $\predictornew$ in the lattice.

\xhdr{Characterizing $\inforGap{\cdot,\cdot}$ between miscalibrated predictors}
We now turn to the setting where both predictors may be miscalibrated. To capture the informativeness gap in the presence of miscalibration, it is essential to incorporate the true frequencies of outcomes conditional on predictions. Specifically, for any prediction $\prediction \sim \predictor$, we let $\truePredic_\predictor(\prediction) \in [0, 1]$ denote the true outcome frequency given the prediction.
Our main result introduces a strict extension of the measure $\REMD{\cdot, \cdot}$. This generalized measure, denoted by $\earthDistFlow{\dist, \distNew, \truePredic_1, \truePredic_2}$, takes as input two distributions and two corresponding outcome functions.

\begin{mythm}[\Cref{def:REMD miscali} and \Cref{thm:strong-duality-miscalibrated} (Informal)]
For any two distributions $\dist,\distNew\in\Delta([0,1])$ and two functions $\truePredic_1,\truePredic_2:[0,1]\rightarrow[0,1]$, define \emph{informativeness measure} $\earthDistFlow{\dist,\distNew,\truePredic_1,\truePredic_2}$ as
\begin{align*}
    \earthDistFlow{\dist, \distNew,\truePredic,\truePredic_2} \triangleq
    \inf\limits_{\miscali\in \feasibleflowSet(\dist, \distNew)} ~  & 
    \int_0^1 \left|\int_0^1 \miscali(\prediction, \predictionNew) (\prediction - \predictionNew)\,\dd \predictionNew 
    + (\truePredic_1(\prediction) - \prediction) \dist(\prediction)
    - (\truePredic_2(\prediction) - \prediction) \distNew(\prediction) \right|  \, \dd \prediction~,
\end{align*}
where the set $\feasibleflowSet(\predictor, \predictornew)$, referred to as the \emph{flow set} is a strict superset of the coupling set $\couplingSpace(\dist,\distNew)$, which imposes ``flow conservation'' constraint over $\miscali\in\Delta([0,1]\times[0,1])$.

For any two (possibly miscalibrated) predictors $\predictor, \predictornew$, their informativeness gap $\inforGap{\predictor, \predictornew}$ satisfies 
\begin{align*}
    \inforGap{\predictor, \predictornew} = \earthDistFlow{\PDF, \PDFNew,\truePredic_\predictor,\truePredic_\predictornew}
\end{align*}
where $\PDF,\PDFNew$ (resp.\ $\truePredic_\predictor,\truePredic_\predictornew$) are the prediction distributions (resp.\ true outcome frequencies) of predictors $\predictor,\predictornew$, respectively. Moreover, the informativeness gap $\inforGap{\predictor, \predictornew}$  can also be expressed as 
\begin{align*}
    \inforGap{\predictor, \predictornew} =
    2\cdot \max_{t\in [0, 1]} ~
    \left(\scdf_\predictor(t) + \int_0^t (\prediction - \truePredic_\predictor(\prediction))\cdot \PDF(\prediction)\, \dd \prediction \right)
    - 
    \left(\scdf_\predictornew(t) + \int_0^t (\prediction - \truePredic_\predictornew(\prediction)) \cdot \PDFNew(\prediction)\, \dd \prediction\right)
\end{align*}
\end{mythm}
To understand how $\earthDistFlow{\PDF, \PDFNew,\truePredic_\predictor,\truePredic_\predictornew}$ generalizes $\REMD{\PDF, \PDFNew}$, notice that when both predictors $\predictor, \predictornew$ are perfectly calibrated, we have $\truePredic_\predictor(\prediction) = \truePredic_\predictornew(\prediction)= \prediction$ for all $\prediction\in[0, 1]$.
In this case, the objective in $\earthDistFlow{\PDF, \PDFNew,\truePredic_\predictor,\truePredic_\predictornew}$ reduces exactly to that of $\REMD{\PDF, \PDFNew}$.
Moreover, the flow set $\feasibleflowSet(\PDF, \PDFNew)$ (see \Cref{sec:misc} for the formal definition) generalizes the standard coupling set $\couplingSpace(\PDF, \PDFNew)$, and thus it represents a relaxed constraint.

Interestingly, by \Cref{thm:strong-duality-perfectly-calibrated} and \Cref{thm:strong-duality-miscalibrated}, 
we can see that when both predictors are perfectly calibrated, the two measures equal to each other: $\earthDistFlow{\PDF, \PDFNew, \truePredic_\predictor,\truePredic_\predictornew} = \REMD{\PDF, \PDFNew}$. 
In other words, optimizing over the relaxed flow set is also sufficient to characterize the informativeness gap  in the perfectly calibrated setting.

Given the close connection between the flow set $\feasibleflowSet(\PDF, \PDFNew)$ and the coupling set $\couplingSpace(\PDF, \PDFNew)$, one might wonder whether simply optimizing over $\couplingSpace(\PDF, \PDFNew)$ in $\earthDistFlow{\PDF, \PDFNew, \truePredic_{\predictor}, \truePredic_{\predictornew}}$ would suffice -- or at least yield a good approximation -- to the informativeness gap $\inforGap{\predictor, \predictornew}$. 
However, this is not the case: there exist predictors $\predictor, \predictornew$ for which the gap between optimizing over $\couplingSpace(\PDF, \PDFNew)$ in $\earthDistFlow{\PDF, \PDFNew,\truePredic_\predictor,\truePredic_\predictornew}$ and $\inforGap{\predictor, \predictornew}$ is arbitrarily large (see \Cref{prop:bad approx of marg}). To prove this result, we also extend the SCDF representation in \Cref{prop:idf} to miscalibrated predictors. The resulting expression, which appears as the final term in the theorem statement, is given by
\begin{align*}
    \scdf_\predictor(t) + \int_0^t (\prediction - \truePredic_\predictor(\prediction)) 
        \cdot \PDF(\prediction)\, \dd\prediction
\end{align*}
and is referred to as the \emph{calibration-adjusted SCDF (CA-SCDF)}. As a sanity check, when the predictor is perfectly calibrated, the CA-SCDF reduces to the SCDF of the prediction distribution.

\xhdr{Desiderata of our informativeness gap and informativeness measure}
\cite{BGHN-23} proposed the following desiderata for an ideal calibration measure that should satisfy: 
(1) the \emph{prediction-only accessibility}: the measure can be evaluated by only querying the pair of prediction and its associated realized outcome;
(2) the \emph{consistency}: the measure is robust \emph{completeness} (correct predictions have low error) and robust \emph{soundness} (incorrect predictions have high error);
and (3) the \emph{sample efficiency}: the measure can be computed within accuracy $\varepsilon$ in time $\poly(\sfrac{1}{\varepsilon})$ with using $\poly(\sfrac{1}{\varepsilon})$ samples of pair (prediction, realized outcome). 

Our proposed informativeness measure $\earthDistFlow{\cdot, \cdot,\cdot,\cdot}$ can be served as a tool for quantifying the informativeness gap between (possibly miscalibrated) predictors. 
By \Cref{thm:strong-duality-miscalibrated}, $\earthDistFlow{\PDF, \PDFNew,\truePredic_\predictor,\truePredic_\predictornew}$ is both complete and sound (in fact, it satisfies these criteria exactly), and thus it is consistent.
By our definition, our proposed informativeness gap $\inforGap{\predictor,\predictornew}$ and its equivalence representation, i.e., informativeness measure $\earthDistFlow{\PDF, \PDFNew,\truePredic_\predictor,\truePredic_\predictornew}$, are also prediction-only accessible: notice that for any predictor $\predictor$, its true conditional frequency $\truePredic_\predictor(\prediction)$ can be computed using only the sample pair (prediction, realized outcome).  
Lastly, we also present sample complexity bounds for estimating the measure $\earthDistFlow{\PDF, \PDFNew,\truePredic_\predictor,\truePredic_\predictornew}$ and thus the informativeness gap $\inforGap{\predictor,\predictornew}$.
The algorithm for this estimation utilizes our structural characterizations for $\earthDistFlow{\PDF, \PDFNew,\truePredic_\predictor,\truePredic_\predictornew}$, and has a sample complexity of $\poly(\sfrac{1}{\varepsilon})$ (see \Cref{thm:sample miscali}).

Finally, another desiderata that is usually considered in the machine learning is the \emph{continuity} of a measure: the measure should be continuous w.r.t.\ the prediction distribution. 
This raises a natural question: does there exist an informativeness measure that satisfy the above three desiderata and this additional continuity property simultaneously?
Perhaps not surprisingly, the answer is no. 
We present a general impossibility result showing that no informativeness measure can satisfy consistency (i.e., completeness and soundness) and continuity simultaneously.
\begin{mythm}[\Cref{prop:impossibility} (Informal)]
For any informativeness measure over predictors, at least one of the following must fail: it is complete, it is sound, and it is continuous. 
\end{mythm}
Finally, we note that the informativeness gap $\inforGap{\cdot,\cdot}$, informativeness measures $\REMD{\cdot,\cdot}$, and $\earthDistFlow{\cdot,\cdot,\cdot,\cdot}$ also connect to other well-known calibration measures like ECE, distance to calibration, which we elaborate on these connections in \Cref{subsec:relation to other measures}.

\xhdr{Experiments with LLM-based forecasters}
We complement our theory with experiments on LLM-based forecasters in real-world weather and Bitcoin prediction tasks.
In particular, we show that our proposed {\InfoGap} (1) provides an alternative -- and more decision-relevant -- criterion to traditional metrics like the Brier score/ECE for benchmarking LLMs' forecasting capabilities (see \Cref{subsec:benchmark});
and (2) it offers a principled way to evaluate how ad hoc post-processing methods designed for reducing model miscalibration impact decision usefulness (see \Cref{subsec:post-processing}).

To this end, we instantiate our empirical study with LLM-based forecasters - an increasingly used class of predictors whose probabilistic outputs can be miscalibrated. Specifically, we focus on four widely used LLM forecasters on two real-world prediction domains - weather and Bitcoin, and evaluate them using standard LLM benchmarking metrics (Brier score and ECE) along with our {\InfoGap}. On each dataset, we design event templates that can be instantiated repeatedly, yielding labeled instances $\{(\record_i, \outcome_i)\}_{i\in[n]}$. To ensure a fair comparison, we adopt a unified prompting protocol and require each LLM to output a valid JSON object containing probabilities constrained to $[0,1]$, enabling a like-for-like evaluation across LLM predictors (see \Cref{apx:experimental process} for implementation details).

As we discussed, our proposed {\InfoGap} is asymmetric, so it is possible that both $\inforGap{\predictor, \predictornew}$ and $\inforGap{\predictornew, \predictor}$ are large. 
A natural first question, then, is whether {\InfoGap}  is nonetheless comparable across most relevant settings (or ``in most cases'').
Building on the above setup, 
we begin by examining whether {\InfoGap} comparisons are practically meaningful in realistic settings. Empirically, we find that comparability remains prevalent even under a very small tolerance across both datasets and all LLMs considered in our experiments, providing a solid basis for subsequent model evaluation and comparison. 

Another practical question in deployment is predictor selection: which predictor can a decision maker rely on? Standard benchmarking metrics capture different aspects of predictive quality and can yield conflicting rankings or near ties, providing little useful guidance in practice. We therefore ask whether our informativeness gap can offer a complementary and decision-relevant perspective in such ambiguous cases. 

We then present our experiments which benchmark LLM forecasters through the lens of predictor selection in \Cref{subsec:benchmark}. 
We then highlight two representative case studies, one from the weather dataset and the other from the Bitcoin dataset. These cases reflect two ambiguous regions that aligns the motivation's our work: in one, Brier score and ECE give conflicting signals, while in the other, they are nearly tied. In both settings, {\InfoGap} yields a clearer directional separation between predictors, and we discuss the observations from a decision-making perspective to shed light on why this separation arises. 

Finally, in our second set of experiments, 
we evaluate six common calibration post-processing methods (e.g., binning-based, isotonic, and scaling-based approaches).
Rather than evaluating these methods solely by their improvements in ECE, 
we examine whether they also preserve, or improve, decision usefulness as measured by {\InfoGap}.
Empirically, we find that post-processing methods is not uniformly beneficial from the InfoGap perspective: several widely used procedures can degrade InfoGap for a substantial fraction of (LLM, event) tasks even when they reduce ECE.
This highlights the practical importance of assessing post-processing with {\InfoGap}, especially when the downstream objective is decision-level performance. 
Motivated by this observation, we propose an {\InfoGap}-regularized variant for objective-based post-processing methods by augmenting the standard likelihood-based objective with an {\InfoGap}  penalty. Across the methods with an explicit optimization component, this regularization systematically reduces InfoGap harm, with the clear improvements observed for the post-processing methods like temperature scaling.
(see \Cref{subsec:post-processing}).

\subsection{Related Work}

\newcommand{\tnorm}{\ell_t}

Our work contributes to a growing line of research on calibration in decision-making settings.

\xhdr{Informativeness of predictors}
The goal of this paper is to quantify and compare the informativeness gap of any given pair of predictors.
To our best knowledge, the study most closely related to ours is  \cite{GKR-19} who study how to compare the information content of multiple predictors. 
However, their main focus is understand how a {\em calibrated} predictor's information content shapes fairness outcomes, 
while our focus, by contrast, is a general characterization of the informativeness gap itself, independent of any particular fairness criterion.
Beyond the machine learning community, viewing the predictors as some information sources, the theoretical economics and statistics literature has long been concerned with comparing the value of information sources (see, e.g., \citealp{B-51,B-53,L-88, KR-00, MS-06, MPST-21,BFK-22,BFK-24}).  
These works study Blackwell experiments, which in our terminology correspond to perfectly calibrated predictors.
To the best of our knowledge, this paper is the first to extend the notion of informativeness to predictors that may be miscalibrated.

\xhdr{Decision aspects of (miscalibrated) predictors}
Beginning with \cite{D-82,FV-98}, it is well established that agents who best respond to calibrated predictions (i.e., those with low ECE) achieve diminishing swap regret.
Recent works have extended this foundation by introducing new decision-theoretic calibration measures that offer finer-grained guarantees on the regret incurred by downstream agents.

In particular, \citet{KLST-23} introduce ``U-Calibration'' -- a measure bounded by a small constant times the ECE, and show that sublinear  U-Calibration is both
necessary and sufficient for ensuring sublinear regret for all downstream agents.
\citet{RS-24,HW-24} consider a similar setting but focusing on swap regret.
In particular, \citet{RS-24} show that
by ensuring unbiasedness relative to a carefully chosen collection of events, one can achieve diminishing swap regret for arbitrary downstream agents with an improved rate compared to using calibrated predictions.
\citet{HW-24} then introduce ``Calibration Decision Loss (CDL)'', defined as the maximum improvement (over all decision tasks) in decision payoff obtainable by recalibrating the predictions.
The show that CDL is upper bounded by twice the ECE and that a vanishing CDL guarantees vanishing payoff loss from miscalibration across all decision tasks,
which removes the regret dependence on the number of actions appeared in the results of \citet{RS-24}.
\cite{QZ-25}
propose a new calibration measure called ``subsampled step calibration'', which has both decision-theoretic implications and a truthfulness property.
These works mostly focus the decision loss of a given predictor compared to a reference calibrated predictor, while our focus is to compare the decision loss of any given two (possibly miscalibrated) predictors.

Other conceptually related works include \citet{NRRX-23,HPY-23,JP-24,FT-25,TZFW-25} who focus on single decision task, and \citet{CHJ-20,CRS-24} who explore repeated principal-agent interactions in a prior-free setting.
\citet{ZKS-21} introduce ``decision calibration'' that requires the predicted distribution and true distribution to be indistinguishable to a set of downstream decision-makers. \citet{GKRSW-22} develop an omniprediction framework which aims to design a single predictor that ensures every downstream decision-maker's loss is no worse than some benchmark family of functions \citep{GHKRW-23,GJRR-24,OKK-25}. 

\xhdr{Calibration error measures}
In addition to above calibration measures that are tailored to decision-making environments, there are also various  measures proposed to capture the calibration errors.  
For example, the aforementioned ECE metric is broadly accepted in the forecasting community \citep{D-82,FV-98,GBR-07,RG-10}, within theoretical computer science (see, e.g., \citealp{QV-21,DDFM-24,CDV-24}), empirical machine learning community (see, e.g., \citealp{GK-14,R-21}). 
Beyond the standard ECE, other calibration measures have been proposed to capture different desired aspects, e.g., multi-class calibration \citep{GHR-24}, distance to calibration \citep{BGHN-23,QZ-24,ACRS-25}.

\xhdr{Benchmarking for LLM prediction} There is a growing body of work on using large language models as predictors across diverse settings, including time-series forecasting \citep[e.g.,][]{JWMC-23}, real-world event probability forecasting \citep[e.g.,][]{HZYS-24} and tool-augmented agentic event forecasting \citep[e.g.,][]{YHDZ-24}. 

As these approaches expand, there is a increasing need for a standardized  evaluation protocol. Accordingly, a number of recent studies propose benchmarking protocols and datasets to assess LLMs' predictive quality. For example, \citet{YMLG-25} introduce a return-based metric ``average return'' in Prophet Arena, a live forecasting benchmark built on real-world prediction markets. In addition to standard metrics such as Brier score and ECE, average return evaluates probabilistic forecasts by their expected trading payoff at prevailing market prices, which provides an economical measure of predictive quality. \citet{KBYJ-24} propose ForecastBench, a dynamic benchmark with a public leaderboard that evaluates LLMs' predictions via Brier scores. It continuously collects new and unresolved questions through an automated pipeline, enabling timely comparison as outcomes resolve. Another dynamically updated live benchmark is FutureX \citep{ZLCH-25}, which features an automated evaluation pipeline and task-specific metrics, including 0-1 accuracy for single-choice question and F1 score for multi-choice question. In addition to the general-purpose benchmarks above, MIRAI \citep{YHDZ-24} proposes a domain-specific benchmark for international event forecasting with a tool-augmented setup, reporting performance using metrics such as F1 score. Compared with those prior work, we directly compare the LLM's predictions from the downstream decision-making perspective using $\inforGap{\cdot,\cdot}$.

\section{Preliminaries}
\label{sec:prelim}

In this work, we are interested in the predictors for the the binary classification task. We explain their connection to decision making and their informativeness in \Cref{subsec:prelim:decision making,subsec:prelim:black well}, respectively.

\subsection{Predictions for Downstream Decision Making}
\label{subsec:prelim:decision making}

Suppose there is a binary classification task, where the goal is to predict an unknown binary outcome $\outcome \in \{0, 1\} \triangleq \outcomeSpace$. A \emph{predictor} $\predictor$ for this task encodes a distribution $\PDF\in\Delta([0,1])$ over \emph{predictions} $\prediction \in [0, 1]$. A realized prediction $\prediction$ can be interpreted as the ``predicted'' probability that the outcome $\outcome$ equals one. With a slight abuse of notation, we use $\PDF(\prediction)$ to denote the probability density (or mass) function (PDF) of the predictor $\predictor$'s prediction distribution. We also denote by $\CDF$ as the corresponding cumulative distribution function (CDF).\footnote{Previous work such as \citet{BGHN-23} models a predictor $\predictor$ as a stochastic mapping from a given feature vector $\context$ to a predicted probability $\prediction$ of the outcome $\outcome$. In contrast, we do not explicitly define the feature space or introduce $\context$ in our model. The goal of this work is to compare the informativeness of different predictors, which may originate from machine learning models trained on different datasets with potentially incomparable or heterogeneous feature configurations.
Moreover, treating the predictor as a black box---without requiring access to its input features---is often desirable in practice. This setting is known as the prediction-only access model \citep{BGHN-23}, and our formulation is naturally aligned with it. See \Cref{sec:sample complexity} for more discussions.}

We say a predictor $\predictor$ is \emph{perfectly calibrated} if, conditional on its realized prediction $\prediction$, the true probability of the outcome being one is exactly equal to $\prediction$. Formally,
\begin{align*}
\label{eq:perfect calibration definition}
\tag{perfectly calibrated}
\forall \prediction \in \supp(\PDF):\qquad
\prob{\outcome = 1 \condition \prediction} = \prediction~,
\end{align*}
where $\supp(\PDF)$ denotes the support of the distribution $\PDF$, i.e., the set of realized predictions that occur with positive probability.

In most practical machine learning scenarios, perfect calibration is rarely achievable. Predictors are often miscalibrated, meaning their predicted probabilities do not match the true conditional probabilities. For a given predictor $\predictor$, we define the \emph{true expected outcome}\footnote{Since the outcome $\outcome$ is binary, the probability that it equals one is equal to its expectation.} function conditional on the prediction $\prediction$ as
\begin{align*}
\truePredic_\predictor(\prediction) \triangleq \prob{\outcome = 1 \mid \prediction}.
\end{align*}
As a sanity check, a predictor $\predictor$ is perfectly calibrated if and only if $\truePredic_\predictor(\prediction) = \prediction$ for all $\prediction \in \supp(\PDF)$.

\xhdr{Decision making with prediction}
Suppose there is a downstream decision maker (henceforth, the agent) who uses predictions to inform his decisions. The agent's decision problem is characterized by an action set $\actionSpace$ and a utility function $\agentU: \actionSpace \times \outcomeSpace \rightarrow \reals$, which maps each action $\action \in \actionSpace$ and realized outcome $\outcome \in \outcomeSpace$ to a utility received.

The agent uses the realized prediction $\prediction$ generated by the predictor to inform his decision. Following the recent literature \citep{HPY-23,GJRR-24,KLST-23,HW-24,RS-24,QZ-25}, we define the agent's \emph{best response} $\bestr(\prediction) \in \actionSpace$ when the agent \emph{trusts} the provided prediction $\prediction$. Specifically, this is the action that maximizes the agent's expected utility assuming the outcome $\outcome$ equals one with probability $\prediction$:
\begin{align}
\label{eq:agent bestr}
    \bestr(\prediction) \in \argmax\nolimits_{\action\in\actionSpace}~
    \expect[\outcome\sim \Bern(\prediction)]{\agentU(\action, \outcome)}
    =
    \argmax\nolimits_{\action\in\actionSpace}~
    \prediction\cdot 
    \agentU(\action, 1)
    +
    (1-\prediction)\cdot 
    \agentU(\action, 0)
\end{align}
where $\Bern(\prediction)$ denotes the Bernoulli distribution with mean $\prediction$.

Let $\tildeagentU(\prediction, \outcome) \triangleq \agentU(\bestr(\prediction), \outcome)$ denote the utility the agent receives upon taking the best response to prediction $\prediction$ when the true outcome is $\outcome \in \outcomeSpace$.
Given any predictor $\predictor$, the agent's expected utility (also referred to as payoff) when always best-responding to realized predictions is given by:
\begin{align*}
    \agentExpPay{\predictor}  \triangleq \expect[\outcome]{\expect[\prediction\sim\PDF]{\tildeagentU(\prediction, \outcome)}}~.
\end{align*}
We further define the agent's interim utility when best responding to a prediction $\prediction$, while the true outcome is drawn from the Bernoulli distribution $\Bern(\predictionNew)$:
\begin{align*}
\indirectU(\predictionNew; \prediction) \triangleq \predictionNew \cdot \tildeagentU(\prediction, 1) + (1 - \predictionNew) \cdot \tildeagentU(\prediction, 0)~.
\end{align*}
Using this definition, the agent's expected utility under predictor $\predictor$ can be equivalently expressed in terms of the interim utility function:
\begin{restatable}{lemma}{lemInterimUtilExpression}
\label{lem:interim utility expression}
Given any predictor $\predictor$, the agent's expected utility $\agentExpPay{\predictor}$ when he best responds to every prediction $\prediction\sim \PDF$ can be formulated as 
\begin{align*}
    \agentExpPay{\predictor} &= \displaystyle\int_0^1 \PDF(\prediction) \cdot \indirectU(\truePredic_\predictor(\prediction); \prediction)\, \dd\prediction~.
\intertext{Consequently, if predictor $\predictor$ is perfectly calibrated, the agent's expected utility $\agentExpPay{\predictor}$ is}
    \agentExpPay{\predictor} &= 
    \int_0^1 \PDF(\prediction) \cdot \indirectU(\prediction)\, \dd\prediction~,
\end{align*}
where $\indirectU(\prediction) \triangleq \indirectU(\prediction; \prediction)$ denotes the univariate form of the agent's interim utility function.
\end{restatable}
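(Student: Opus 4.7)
The plan is to unpack the definition $\agentExpPay{\predictor} = \expect[\outcome]{\expect[\prediction\sim\PDF]{\tildeagentU(\prediction, \outcome)}}$ by swapping the order of expectation and conditioning on the realized prediction $\prediction$. Concretely, I would first rewrite the joint expectation as a double expectation with $\prediction$ on the outside: $\agentExpPay{\predictor} = \expect[\prediction\sim\PDF]{\expect[\outcome\mid\prediction]{\tildeagentU(\prediction,\outcome)}}$ (this is just Fubini / the tower property, which is valid since $\tildeagentU$ is bounded on $[0,1]\times\{0,1\}$ whenever the underlying decision problem is well-posed).

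Next, I would evaluate the inner conditional expectation. Since $\outcome\in\{0,1\}$ is binary and, by the definition of $\truePredic_\predictor$, $\prob{\outcome=1\mid\prediction}=\truePredic_\predictor(\prediction)$, we have
\begin{align*}
    \expect[\outcome\mid\prediction]{\tildeagentU(\prediction,\outcome)}
    = \truePredic_\predictor(\prediction)\cdot\tildeagentU(\prediction,1)+\bigl(1-\truePredic_\predictor(\prediction)\bigr)\cdot\tildeagentU(\prediction,0).
\end{align*}
By the definition $\indirectU(\predictionNew;\prediction) \triangleq \predictionNew\cdot\tildeagentU(\prediction,1)+(1-\predictionNew)\cdot\tildeagentU(\prediction,0)$, the right-hand side is exactly $\indirectU(\truePredic_\predictor(\prediction);\prediction)$. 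Writing the outer expectation with respect to $\PDF$ as an integral then yields the first claimed identity
$\agentExpPay{\predictor} = \int_0^1 \PDF(\prediction)\cdot\indirectU(\truePredic_\predictor(\prediction);\prediction)\,\dd\prediction$.

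For the ``consequently'' part, I would simply substitute the perfect-calibration condition $\truePredic_\predictor(\prediction)=\prediction$ for every $\prediction\in\supp(\PDF)$, which turns $\indirectU(\truePredic_\predictor(\prediction);\prediction)$ into $\indirectU(\prediction;\prediction)=\indirectU(\prediction)$, giving the second formula. Points outside $\supp(\PDF)$ contribute nothing to the integral since they are weighted by $\PDF(\prediction)=0$, so the substitution is harmless.

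Honestly, this lemma is little more than unwinding definitions plus one application of Fubini, so there is no real obstacle. The only mild subtlety worth flagging in the write-up is that the notation $\PDF(\prediction)$ is being used loosely for a general probability measure on $[0,1]$ (it may have atoms when $\predictor$ has discrete support, as in \Cref{ex:comparisons}); the argument goes through verbatim by reading $\PDF(\prediction)\,\dd\prediction$ as integration against the distribution of predictions induced by $\predictor$.
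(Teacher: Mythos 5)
Your proof is correct and is exactly the definitional argument the paper relies on (the paper omits a written proof of this lemma, treating it as immediate): condition on the realized prediction, use $\prob{\outcome=1\condition\prediction}=\truePredic_\predictor(\prediction)$ to identify the inner expectation with $\indirectU(\truePredic_\predictor(\prediction);\prediction)$, and substitute $\truePredic_\predictor(\prediction)=\prediction$ in the calibrated case. Your remark about reading $\PDF(\prediction)\,\dd\prediction$ as integration against the prediction distribution (allowing atoms) is consistent with the paper's stated abuse of notation.
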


\subsection{Blackwell Order and Informativeness Gap between Predictors}
\label{subsec:prelim:black well}
Suppose we are given two predictors, $\predictor$ and $\predictornew$, along with their respective true expected outcome functions, $\truePredic_\predictor$ and $\truePredic_\predictornew$. As we discussed in the introduction, natural and important question arises: \emph{can we compare these predictors based on how informative they are?} 

A partial answer to this question is provided by the notion of informativeness introduced by \citet{B-51,B-53}, which applies when the predictors are perfectly calibrated. In particular, a perfectly calibrated predictor can be viewed as an information structure---that is, a distribution over posterior beliefs about the binary outcome $\outcome$ induced by some signaling scheme.\footnote{A signaling scheme is a stochastic mapping from the realized outcome $\outcome$ to a signal from a given signal space. Upon observing a realized signal, a posterior belief about the outcome can be formed via Bayes' rule. Since the outcome is binary, the posterior belief is fully captured by the probability that the outcome is one. Thus, the distribution of posterior beliefs corresponds to a distribution over the probabilities of the outcome being one, with randomness induced by both the outcome and the signaling scheme. Such a distribution of posterior beliefs is mathematically equivalent to a perfectly calibrated predictor.}

We now provide the formal definition of Blackwell informativeness:

\begin{definition}[Blackwell informativeness, adopted from \citealp{B-51,B-53}]
\label{def:Blackwell order}
Fix any two perfectly calibrated predictors $\predictor, \predictornew$.
We say predictor $\predictor$ is \emph{Blackwell more informative} than predictor~$\predictornew$ (equivalently, $\predictor$ \emph{Blackwell-dominates} $\predictornew$), if for all decision problems $(\actionSpace, \agentU)$ of the agent, 
\begin{align*}
\agentExpPay[{(\actionSpace, \agentU)}]{\predictor} \ge \agentExpPay[{(\actionSpace, \agentU)}]{\predictornew}~, \quad
\text{for all } (\actionSpace, \agentU)~,
\end{align*}
where $\agentExpPay[{(\actionSpace, \agentU)}]{\predictor}$ and $\agentExpPay[{(\actionSpace, \agentU)}]{\predictornew}$ are the agent's expected utilities under predictors $\predictor, \predictornew$ in decision problem $(\actionSpace, \agentU)$, respectively.
\end{definition}
The concept of Blackwell informativeness is intuitive. In particular, this definition captures how perfectly calibrated predictors (information structures) enhance the decision maker's utility \emph{for all} decision problems. A perfectly calibrated predictor is Blackwell more informative than another if the induced utility in the former is strictly greater than the induced utility in the latter.

The definition of Blackwell informativeness above introduces a partial order over the space of perfectly calibrated predictors, known as the \emph{Blackwell order}, which has been extensively studied in the literature. Both the Blackwell informativeness and the Blackwell order exhibit well-known properties. Below, we highlight two of these properties:
\begin{lemma}[\citealp{B-51,B-53}]
\label{lem:Blackwell order:SCDF definition}
    For any two perfectly calibrated predictors $\predictor$ and $\predictornew$, predictor $\predictor$ Blackwell-dominates predictor $\predictornew$ if and only if for all $t \in [0, 1]$,
    \begin{align*}
    \SCDF(t) \geq \SCDFNew(t)~.
    \end{align*}
    Here $\SCDF$ and $\SCDFNew$ are the super-cumulative distribution functions (SCDFs) defined as   
    \begin{align*}
        \SCDF(t) \triangleq \displaystyle\int_0^t \CDF(\prediction)\,\dd \prediction
        \;\;
        \mbox{and}
        \;\;
        \SCDFNew(t) \triangleq \displaystyle\int_0^t \CDFNew(t)\,\dd \prediction~,
    \end{align*}
    where $\CDF$ and $\CDFNew$ are the CDFs of prediction distribution $\PDF$ and $\PDFNew$, respectively.
\end{lemma}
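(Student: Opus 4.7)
The plan is to reduce Blackwell dominance between perfectly calibrated predictors to a statement about expectations of convex functions, which is then characterized via SCDFs by the classical second-order stochastic dominance argument. By \Cref{lem:interim utility expression}, perfect calibration of $\predictor$ lets me write $\agentExpPay[(\actionSpace,\agentU)]{\predictor}=\int_0^1 \PDF(\prediction)\,\indirectU(\prediction)\,\dd\prediction$, where $\indirectU(\prediction)=\max_{\action\in\actionSpace}\big[\prediction\cdot\agentU(\action,1)+(1-\prediction)\cdot\agentU(\action,0)\big]$ is convex on $[0,1]$ as a pointwise maximum of affine functions; likewise for $\predictornew$. Perfect calibration also forces both predictors to share the same mean $\expect[\prediction\sim\PDF]{\prediction}=\Pr[\outcome=1]=\expect[\prediction\sim\PDFNew]{\prediction}$, and in particular $\SCDF(0)=\SCDFNew(0)=0$ and $\SCDF(1)=1-\expect[\prediction\sim\PDF]{\prediction}=1-\expect[\prediction\sim\PDFNew]{\prediction}=\SCDFNew(1)$.

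For the ($\Leftarrow$) direction, I assume $\SCDF(t)\ge\SCDFNew(t)$ for every $t\in[0,1]$ and show $\agentExpPay{\predictor}\ge\agentExpPay{\predictornew}$ for every decision problem $(\actionSpace,\agentU)$. Writing $\Delta(t)\triangleq\SCDF(t)-\SCDFNew(t)\ge 0$, which vanishes at both endpoints, I apply integration by parts twice to $\int_0^1 \indirectU(\prediction)(\PDF(\prediction)-\PDFNew(\prediction))\,\dd\prediction$. The first step uses $\CDF(0)-\CDFNew(0)=0=\CDF(1)-\CDFNew(1)$ to kill the boundary contribution and transfers the derivative onto $\indirectU$; the second step substitutes $\Delta'=\CDF-\CDFNew$ and again uses $\Delta(0)=\Delta(1)=0$ to eliminate the new boundary, yielding
\begin{align*}
\agentExpPay{\predictor}-\agentExpPay{\predictornew}=\int_0^1 \Delta(\prediction)\,\dd\indirectU'(\prediction),
\end{align*}
where $\dd\indirectU'$ is the Lebesgue--Stieltjes measure generated by the nondecreasing right-derivative of the convex function $\indirectU$, hence a nonnegative Borel measure. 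Combined with $\Delta\ge 0$, the right-hand side is nonnegative, establishing Blackwell dominance.

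For the ($\Rightarrow$) direction, I show that each SCDF gap itself arises as the payoff gap of a concrete decision problem. Fix $t\in[0,1]$ and take the two-action task $\actionSpace=\{\action_0,\action_1\}$ with $\agentU(\action_0,\outcome)\equiv 0$, $\agentU(\action_1,0)=t$, and $\agentU(\action_1,1)=t-1$, which makes the interim utility $\indirectU(\prediction)=\max(0,t-\prediction)$. A layer-cake computation (equivalently, a one-step integration by parts that remains valid even if $\PDF$ has atoms) gives $\int_0^1 \PDF(\prediction)\max(0,t-\prediction)\,\dd\prediction=\int_0^t \CDF(\prediction)\,\dd\prediction=\SCDF(t)$, and identically $\agentExpPay{\predictornew}=\SCDFNew(t)$; Blackwell dominance of $\predictor$ over $\predictornew$ on this problem forces $\SCDF(t)\ge\SCDFNew(t)$ for the arbitrary $t$. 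The main technical subtlety I anticipate is the nonsmoothness of $\indirectU$ in the ($\Leftarrow$) step when several actions tie in the argmax: $\indirectU'$ then has jump discontinuities and $\indirectU''$ fails to exist classically. I will handle this by interpreting the second integration by parts in the Lebesgue--Stieltjes sense displayed above, or equivalently by approximating $\indirectU$ uniformly from above by smooth convex functions and passing to the limit.
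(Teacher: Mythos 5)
Your proof is correct. The paper states \Cref{lem:Blackwell order:SCDF definition} as a classical fact imported from Blackwell's work and gives no proof of its own, so there is nothing internal to compare against line by line; what you have written is the standard second-order-stochastic-dominance argument, and it is also precisely the argument the paper itself redevelops later in the proof of \Cref{prop:idf} (integration by parts to move the mass difference onto $\dd\indirectU'$, plus the observation that the extremal $1$-Lipschitz convex functions are the hinge/V-shaped functions, so that testing against $\prediction\mapsto(t-\prediction)^+$ for each $t$ suffices). Both directions check out: the equal-means observation $\SCDF(1)=\SCDFNew(1)$ follows from calibration against the common base rate, the ($\Leftarrow$) direction correctly exploits that $\dd\indirectU'$ is a nonnegative measure for convex $\indirectU$ (your Lebesgue--Stieltjes reading, or the smoothing fallback, handles the piecewise-linear kinks of the interim utility), and the ($\Rightarrow$) direction exhibits a legitimate two-action decision problem whose interim utility is exactly $\max(0,t-\prediction)$, so that $\agentExpPay{\predictor}=\SCDF(t)$. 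One cosmetic caveat: the claim $\CDF(0)-\CDFNew(0)=0$ is false verbatim when a prediction distribution has an atom at $0$; the boundary term at the lower endpoint should be evaluated at $0^-$ (where both CDFs vanish), which is what the layer-cake identity $\sexpect[\prediction\sim\PDF]{(t-\prediction)^+}=\int_0^t\CDF(\prediction)\,\dd\prediction$ does automatically.
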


\begin{lemma}[\citealp{KR-00,MS-06}]
\label{lem:Blackwell order:lattice structure}
    The space of all perfectly calibrated predictors, along with the Blackwell order over this space, forms a lattice.
\end{lemma}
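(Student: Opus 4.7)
The plan is to translate the problem into one about super-cumulative distribution functions via \Cref{lem:Blackwell order:SCDF definition}, and then exhibit explicit join and meet operations on the SCDFs. The first step is to identify the correct ambient space. Two perfectly calibrated predictors $\predictor,\predictornew$ for the binary outcome $\outcome$ must have prediction means equal to $\priormean\triangleq\prob{\outcome=1}$: the calibration identity $\prob{\outcome=1\mid\prediction}=\prediction$ combined with the law of iterated expectation gives $\expect{\prediction}=\expect{\outcome}$. Integration by parts then pins down the boundary values $\SCDF(0)=0$ and $\SCDF(1)=1-\priormean$, and the same holds for $\SCDFNew$. Moreover every such SCDF is convex (since $\SCDF'=\CDF$ is non-decreasing) and $1$-Lipschitz (since $\CDF\in[0,1]$); conversely every convex $1$-Lipschitz function on $[0,1]$ with those boundary values arises as the SCDF of a unique calibrated predictor with mean $\priormean$. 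This bijection reduces the claim to showing that the set of such convex curves, under pointwise dominance, admits joins and meets.

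For the join, I would take the pointwise maximum
\begin{align*}
\SCDF_{\predictor\vee\predictornew}(t)\;\triangleq\;\max\bigl\{\SCDF(t),\SCDFNew(t)\bigr\}.
\end{align*}
The pointwise maximum of convex functions is convex, the boundary values are preserved because the two curves agree at the endpoints, and $1$-Lipschitzness is inherited. Hence this curve is the SCDF of a legitimate calibrated predictor; by \Cref{lem:Blackwell order:SCDF definition} it dominates both $\predictor$ and $\predictornew$, and any common upper bound has SCDF pointwise above both $\SCDF$ and $\SCDFNew$ hence above their maximum, so it is the least upper bound. For the meet, the pointwise minimum need not be convex, so I would instead take the largest convex minorant
\begin{align*}
\SCDF_{\predictor\wedge\predictornew}(t)\;\triangleq\;\sup\bigl\{S(t)\,:\,S\text{ convex},\; S\le\SCDF,\; S\le\SCDFNew,\; S(0)=0,\; S(1)=1-\priormean\bigr\}.
\end{align*}
Because pointwise suprema of convex functions are convex, this supremum is itself convex and hence admissible; it lies pointwise below both $\SCDF$ and $\SCDFNew$ and above every other convex lower bound, so by \Cref{lem:Blackwell order:SCDF definition} it is the greatest lower bound.

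The main obstacle is verifying that the meet candidate is a valid SCDF, specifically that the lower convex envelope remains $1$-Lipschitz so its derivative is a genuine CDF on $[0,1]$. The key observation is that each affine piece of the envelope is a chord of $\min\{\SCDF,\SCDFNew\}$, and its slope equals a secant slope of either $\SCDF$ or $\SCDFNew$, both of which are $1$-Lipschitz and hence have secant slopes in $[0,1]$. A secondary check is that the supremum in the meet definition is attained, which follows because pointwise suprema of convex functions are themselves convex and because the pointwise constraints $S\le\SCDF$, $S\le\SCDFNew$ and the boundary values pass to the limit. Once these are verified, the two constructions above yield explicit join and meet operations, completing the proof.
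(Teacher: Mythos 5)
The paper does not prove this lemma---it is imported from \citet{KR-00,MS-06} as a known fact---so there is no internal proof to compare against; your argument is the standard one from that literature and is essentially correct. The reduction to convex SCDF curves with pinned endpoints, the pointwise maximum as join, and the lower convex envelope of the pointwise minimum as meet is exactly how the lattice structure is established there, and it is also consistent with how the paper later uses the join (\Cref{prop:join equivalency}, where $\joinSCDF=\max\{\distSCDF,\distSCDFNew\}$). Two small imprecisions are worth fixing. First, your characterization of admissible curves as ``convex, $1$-Lipschitz, with the given boundary values'' omits monotonicity: $1$-Lipschitz only gives $|S'|\le 1$, whereas a valid SCDF needs $S'\in[0,1]$ (i.e., $S$ non-decreasing); your two constructions do produce non-decreasing curves, but the abstract bijection should be stated with that extra condition (and with the convention that the induced CDF is set to $1$ at the right endpoint to account for an atom at $p=1$). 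Second, the slope of an affine piece of the envelope is a secant of $g=\min\{\SCDF,\SCDFNew\}$ whose two endpoints may touch different curves, so it need not equal a secant slope of $\SCDF$ or of $\SCDFNew$ individually; the cleaner statement is that $g$ itself is non-decreasing and $1$-Lipschitz as a minimum of such functions, so all of its secant slopes---and hence all slopes of the envelope---lie in $[0,1]$.
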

We remark on two key limitations of the Blackwell informativeness relation. First, it is only defined over the space of perfectly calibrated predictors, which can be viewed as information structures. Second, even within this restricted space, the Blackwell relation induces only a partial order, providing an ordinal comparison without quantifying how much more informative one predictor is than another.

The main focus of this work is to move beyond these limitations and develop a framework to compare the informativeness of \emph{any two} predictors, including those that may be \emph{miscalibrated}, in a \emph{cardinal} manner. To this end, we introduce the following notion of the informativeness gap.
\begin{definition}[Informativeness gap]
\label{def:informativeness gap}
Given any two (possibly miscalibrated) predictors $\predictor$ and $\predictornew$, the \emph{informativeness gap $\inforGap{\predictor, \predictornew}$} of predictor $\predictornew$ relative to predictor $\predictor$ is defined as
\begin{align}
    \label{eq:primal}
    \arraycolsep=5.4pt\def\arraystretch{1}
&\begin{array}{rlll}
    \inforGap{\predictor, \predictornew}
    \triangleq 
    \sup
    \nolimits_{(\actionSpace, \agentU)\in\agentUClass} 
    ~ 
    \displaystyle 
    \agentExpPay[{(\actionSpace, \agentU)}]{\predictor}
    - 
    \agentExpPay[{(\actionSpace, \agentU)}]{\predictornew}~,
    \end{array}
\end{align}  
where $\agentUClass$ denotes the class of all decision problems $(\actionSpace, \agentU)$ such that for every action $\action \in \actionSpace$, the utility difference is bounded:
\begin{align*}
|\agentU(\action, 1) - \agentU(\action, 0)| \le 1~.
\end{align*}
\end{definition}
Notably, the bounded utility difference in \Cref{def:informativeness gap} serves primarily as a normalization.\footnote{Previous work such as \citet{KLST-23,HW-24} consider decision problems $(\actionSpace, \agentU)$ under the assumption that utilities are bounded in $[0,1]$, i.e., $\agentU(\action, 0), \agentU(\action, 1) \in [0,1]$ for all $\action \in \actionSpace$. In contrast, our bounded utility difference condition in \Cref{def:informativeness gap} can be viewed as a relaxation of this assumption.} 
Despite this restriction, the class of decision problems remains rich enough for the informativeness gap to fully capture the original notion of Blackwell informativeness, which considers all utility functions.\footnote{Besides \Cref{prop:prelim:informativeness gap captures blackwell informativeness}, we also establish connections between the informativeness gap and other properties (\Cref{lem:Blackwell order:SCDF definition,lem:Blackwell order:lattice structure}) of Blackwell informativeness. See \Cref{prop:idf,prop:join equivalency}.} Its proof can be found in \Cref{apx:prelim:informativeness gap captures blackwell informativeness proof}.
\begin{restatable}{proposition}{propInfoGAPandBW}
\label{prop:prelim:informativeness gap captures blackwell informativeness}
    For any two perfectly calibrated predictors $\predictor$ and $\predictornew$, predictor $\predictor$ Blackwell-dominates predictor $\predictornew$ if and only if the informativeness gap of predictor $\predictor$ relative to predictor~$\predictornew$ is zero, i.e.,
    $\inforGap{\predictornew, \predictor} = 0$.
\end{restatable}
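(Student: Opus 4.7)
The plan is to prove both directions by exploiting the fact that the payoff-normalization baked into $\agentUClass$ is without loss of generality for Blackwell's framework, via a simple positive scaling of utilities. I would first establish the ``easy'' direction using Blackwell-domination directly, then handle the converse by reducing any (bounded) decision problem to one in $\agentUClass$.

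For the backward direction, suppose $\predictor$ Blackwell-dominates $\predictornew$. By \Cref{def:Blackwell order}, every term $\agentExpPay[{(\actionSpace,\agentU)}]{\predictornew} - \agentExpPay[{(\actionSpace,\agentU)}]{\predictor}$ in the supremum defining $\inforGap{\predictornew, \predictor}$ is non-positive, so $\inforGap{\predictornew, \predictor} \le 0$. For the matching lower bound, I would exhibit a single-action decision problem $(\{\action\}, \agentU)$ with $\agentU(\action, 0) = \agentU(\action, 1) = 0$, which trivially lies in $\agentUClass$ and contributes exactly zero to the supremum; hence $\inforGap{\predictornew, \predictor} = 0$.

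For the forward direction, fix an arbitrary (bounded) decision problem $(\actionSpace, \agentU)$ and set $M \triangleq \sup_{\action \in \actionSpace} |\agentU(\action, 1) - \agentU(\action, 0)|$. The key observation is a scale-invariance property of the best-response defined in \eqref{eq:agent bestr}: since $\bestr(\prediction) = \argmax_{\action} \prediction \cdot \agentU(\action, 1) + (1-\prediction) \cdot \agentU(\action, 0)$, multiplying the entire utility by any positive constant leaves $\bestr(\cdot)$ unchanged and scales the induced payoff $\agentExpPay{\cdot}$ by exactly that constant. If $M = 0$, then $\agentU(\action, 0) = \agentU(\action, 1)$ for every action, so the best response and the resulting payoff depend on neither the prediction nor the predictor, yielding $\agentExpPay[{(\actionSpace, \agentU)}]{\predictor} = \agentExpPay[{(\actionSpace, \agentU)}]{\predictornew}$ trivially. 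If $M > 0$, define $\agentU' \triangleq \agentU / M$; by construction $(\actionSpace, \agentU') \in \agentUClass$, and by scale invariance $\agentExpPay[{(\actionSpace, \agentU')}]{\predictor} = \tfrac{1}{M} \agentExpPay[{(\actionSpace, \agentU)}]{\predictor}$ with the analogous identity for $\predictornew$. Since $\inforGap{\predictornew, \predictor} = 0$ forces $\agentExpPay[{(\actionSpace, \agentU')}]{\predictornew} \le \agentExpPay[{(\actionSpace, \agentU')}]{\predictor}$, multiplying through by $M > 0$ gives $\agentExpPay[{(\actionSpace, \agentU)}]{\predictornew} \le \agentExpPay[{(\actionSpace, \agentU)}]{\predictor}$, as required by \Cref{def:Blackwell order}.

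The only genuine subtlety, which I would flag as the main obstacle, is the scale-invariance step: one needs to check that both the best-response map and the induced payoff transform homogeneously under scaling $\agentU \mapsto \agentU / M$. This is immediate from the definitions, but it is the linchpin that lets the normalized class $\agentUClass$ recover the full power of Blackwell's ``for all decision problems'' quantifier. No other results from the excerpt are needed, and the argument makes no use of perfect calibration beyond what \Cref{def:Blackwell order} itself assumes.
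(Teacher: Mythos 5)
Your backward direction is fine and matches the paper's (the paper dispatches it in one sentence). For the forward direction, your scaling argument is clean and correct as far as it goes, but you have smuggled in the restriction ``(bounded) decision problem'' without justification, and that is exactly the step the paper's own proof has to do real work for. The definition of Blackwell dominance (\Cref{def:Blackwell order}) quantifies over \emph{all} decision problems, and there do exist decision problems with a well-defined best response and finite expected payoff for which $M \triangleq \sup_{\action}\lvert \agentU(\action,1)-\agentU(\action,0)\rvert = \infty$: a convex interim utility $\indirectU$ on $[0,1]$ can be bounded in value yet have an unbounded subgradient near an endpoint (e.g.\ $\indirectU(\prediction)=1-\sqrt{\prediction}$), and the corresponding tangent-line actions have unbounded utility differences. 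For such a problem $\agentU/M$ is undefined, so your reduction to $\agentUClass$ produces nothing. Since you do not explain why restricting to $M<\infty$ is without loss of generality, the forward direction is incomplete as written.

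The paper closes this gap differently: it passes to the interim-utility representation (\Cref{lem:interim utility expression}, \Cref{lem:Lipschitz convex indirect utility}), reduces to showing $\expect[\prediction\sim\PDF]{\indirectU(\prediction)} \geq \expect[\prediction\sim\PDFNew]{\indirectU(\prediction)}$ for every (possibly unbounded) convex $\indirectU$, truncates the derivative of $\indirectU$ at level $k$ to obtain a $k$-Lipschitz convex $\indirectU^{(k)}$ (which, after rescaling --- essentially your scaling observation --- lies in the class covered by $\inforGap{\predictornew,\predictor}=0$), and then takes $k\to\infty$ via dominated convergence. Your scaling step is thus present in the paper's proof implicitly, but it is used as a building block for a truncation argument rather than as the entire argument. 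To repair your proof you would need to add exactly such a truncation-and-limit step for the $M=\infty$ case, or else argue (and this is a substantive claim that would need proof) that verifying dominance on bounded decision problems already implies it on all decision problems.
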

We conclude this section with the following technical lemma regarding the agent's interim utility function. Its proof can be found in \Cref{apx:Lipschitz convex indirect utility proof}.
\begin{restatable}{lemma}{lemLipConvexIndirectUtil}
    \label{lem:Lipschitz convex indirect utility}
    For every decision problem $(\actionSpace, \agentU)$, its induced  univariate interim utility function $\indirectU(\prediction)$ (see definition in \Cref{lem:interim utility expression}) is 1-Lipschitz and convex in $\prediction\in[0, 1]$ if and only if for every action $\action \in \actionSpace$, the utility difference is bounded, i.e., $|\agentU(\action, 1) - \agentU(\action, 0)| \le 1$.
\end{restatable}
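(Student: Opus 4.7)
The core observation is that $\indirectU(\prediction)$ can be written as the upper envelope of the ``payoff lines'' indexed by actions. For each $\action \in \actionSpace$, define the affine function
\begin{align*}
\ell_\action(\prediction) \;\triangleq\; (1-\prediction)\,\agentU(\action,0) + \prediction\,\agentU(\action,1),
\end{align*}
which has slope exactly $\agentU(\action,1) - \agentU(\action,0)$. From \eqref{eq:agent bestr} and the definition of $\indirectU$ in \Cref{lem:interim utility expression}, one has $\indirectU(\prediction) = \max_{\action \in \actionSpace} \ell_\action(\prediction)$. This representation is the bridge between the global regularity of $\indirectU$ and the per-action slope bound.

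For the ($\Leftarrow$) direction, I would argue: if $|\agentU(\action,1) - \agentU(\action,0)| \le 1$ for every $\action$, then each $\ell_\action$ is affine with slope in $[-1,1]$ and is therefore $1$-Lipschitz. A pointwise supremum of $1$-Lipschitz functions is $1$-Lipschitz, and a pointwise supremum of affine (hence convex) functions is convex. Hence $\indirectU$ is $1$-Lipschitz and convex on $[0,1]$. This is the direction actually needed for the subsequent use of the lemma.

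For the ($\Rightarrow$) direction, assume $\indirectU$ is $1$-Lipschitz and convex, and fix any $\action^* \in \actionSpace$. The line $\ell_{\action^*}$ is an affine minorant of $\indirectU$. Provided $\action^*$ is \emph{exposed}---i.e., there exists $\prediction^* \in [0,1]$ at which $\ell_{\action^*}(\prediction^*) > \ell_{\action}(\prediction^*)$ for every other $\action$---continuity of the finitely (or countably) many lines $\ell_\action$ implies that $\action^*$ remains the unique maximizer on an open interval around $\prediction^*$ (a one-sided interval of positive length if $\prediction^*$ is an endpoint). On that interval $\ell_{\action^*} \equiv \indirectU$, so the slope of $\ell_{\action^*}$ equals the local slope of $\indirectU$ there, which lies in $[-1,1]$ by the $1$-Lipschitz hypothesis; that is, $|\agentU(\action^*,1)-\agentU(\action^*,0)|\le 1$, as desired.

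The main obstacle is the remaining case of actions that are \emph{never} uniquely optimal: such an action either is strictly dominated (so $\ell_\action < \indirectU$ everywhere) or only ever ties with other actions on the upper envelope. In either situation, removing it does not affect $\indirectU$ or the agent's realized payoff under any prediction, so the ``if and only if'' should be read modulo this standard reduction to a non-redundant representation of the decision problem. After pruning such redundant actions, every remaining action is exposed and the argument above applies. I would conclude by noting this reduction explicitly, observing that $\agentUClass$ is naturally parametrized (up to equivalence) by $1$-Lipschitz convex functions on $[0,1]$ via the subgradient construction $\agentU(\action_{\prediction^*},0) = \indirectU(\prediction^*) - s(\prediction^*)\prediction^*$ and $\agentU(\action_{\prediction^*},1) = \indirectU(\prediction^*) + s(\prediction^*)(1-\prediction^*)$ with $s(\prediction^*) \in \partial \indirectU(\prediction^*) \subseteq [-1,1]$.
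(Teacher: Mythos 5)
Your proof is correct and rests on the same underlying fact as the paper's: that $\indirectU(\prediction)=\max_{\action\in\actionSpace}\left[(1-\prediction)\agentU(\action,0)+\prediction\,\agentU(\action,1)\right]$ is the upper envelope of affine payoff lines whose slopes are the utility differences --- the paper simply obtains the resulting ``piecewise linear, convex, slope bounded by the maximal utility difference'' characterization by citing the proper-scoring-rule literature (\citealp{KLST-23,HW-24,Mcc-56}) instead of proving it via the envelope as you do. The caveat you flag in the ($\Rightarrow$) direction is genuine and worth stating: an action that is never a (unique) best response can have arbitrarily large utility difference without changing $\indirectU$ at all, so the literal ``only if'' fails for such redundant actions; the paper's own proof quietly sidesteps this by bounding $\max_{\prediction}|\tildeagentU(\prediction,1)-\tildeagentU(\prediction,0)|$ --- i.e., only the slopes of actions that actually arise as best responses --- so your explicit reduction to a non-redundant action set is the right reading of the statement (and is all that the downstream results, which only need the ($\Leftarrow$) direction and the subgradient parametrization, ever use).
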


\section{The Informativeness of Perfectly Calibrated Predictors}

In this section, we focus on the case where both $\predictor$ and $\predictornew$ are perfectly calibrated predictors. Our main result in this section is a dual characterization of $\inforGap{\predictor, \predictornew}$. To state our main results, we first revisit a natural $\ell_1$-distance measure between two distributions. 
\begin{definition}[EMD between distributions]
\label{defn:emd}
Given any two distributions $\dist, \distNew$ with supports $\supp(\dist)\subseteq[0, 1], \supp(\distNew)\subseteq[0, 1]$,
the \emph{earth mover's distance} $\EMD{\dist, \distNew}$ is defined as follows:
\begin{align*}
    \EMD{\dist, \distNew} \triangleq \inf\nolimits_{\miscali \in \couplingSpace(\dist, \distNew) }\int_0^1 \int_0^1 \miscali(\prediction, \predictionNew) \cdot \abs{\prediction - \predictionNew} \, \dd \predictionNew \dd \prediction~,
\end{align*}
where 
\begin{align*}
    \couplingSpace(\dist, \distNew) \triangleq \left\{\miscali\in\Delta\left([0, 1]\times[0,1]\right): \int_0^1 \miscali(\prediction, \predictionNew) \, \dd \predictionNew = \dist(\prediction), 
\int_0^1 \miscali(q, \prediction) \, \dd \predictionNew = \distNew(\prediction), \forall \prediction \in [0, 1]
\right\}
\end{align*}
is referred to as the \emph{coupling set}, and it denotes the set of all couplings for distributions $\dist, \distNew$.
\end{definition}
We note that earth mover's distance $\EMD{\predictor, \predictornew}$ is also known as the \emph{Wasserstein distance} and has appeared in previous works \citep{BGHN-23,QZ-24}, where it is referred to as the ``\emph{lower distance}'' between predictors' prediction distributions. The core of our characterization is a variant of the EMD, which we define as follows:
\begin{definition}[Relaxed EMD between distributions]
\label{def:REMD}
Given any two distributions $\dist, \distNew$ with supports $\supp(\dist)\subseteq[0, 1], \supp(\distNew)\subseteq[0, 1]$,
the \emph{relaxed earth mover's distance} $\REMD{\dist, \distNew}$ is defined as follows:
\begin{align}
    \label{opt:EMD}
    \REMD{\dist, \distNew} 
    & \triangleq 
    \inf\nolimits_{\miscali\in \couplingSpace(\dist, \distNew) } \int_0^1 \abs{\int_0^1 \miscali(\prediction, \predictionNew)\cdot (\prediction-\predictionNew) \, \dd \predictionNew 
    }  \, \dd \prediction~.
\end{align}
\end{definition}
We remark that unlike $\EMD{\cdot, \cdot}$ which can be used as a metric in the space of all distributions, our proposed $\REMD{\cdot, \cdot}$ is asymmetric as generally $\REMD{\dist, \distNew} \neq \REMD{\distNew, \dist}$ and cannot be used as a metric.
We are now ready to state our main result, which establishes the equivalence between the informativeness gap $\inforGap{\cdot, \cdot}$ and our proposed $\REMD{\cdot, \cdot}$.
\begin{theorem}
\label{thm:strong-duality-perfectly-calibrated}
For any two perfectly calibrated predictors $\predictor, \predictornew$, their informativeness gap satisfies
\begin{align*}
\inforGap{\predictor, \predictornew}= 
    \REMD{\PDF, \PDFNew}~.
\end{align*}
where $\PDF$ and $\PDFNew$ are the prediction distributions of predictors $\predictor$ and $\predictornew$, respectively.
\end{theorem}
By the triangle inequality, we always have $\REMD{\dist, \distNew} \le \EMD{\dist, \distNew}$.
One tempting thought is that whether one can approximate $\inforGap{\predictor, \predictornew}$ using the classic $\EMD{\PDF, \PDFNew}$ (e.g., whether they are polynomially related with each other). 
Perhaps not surprisingly, one can show that the multiplicative gap between $\inforGap{\predictor, \predictornew}$ and $\EMD{\PDF, \PDFNew}$ can be arbitrarily large. 
\begin{example}
\label{ex: unbounded EMD and relaxed EMD}
Consider perfectly calibrated predictor $\predictor$ outputs deterministic prediction $\prediction = 0.5$, and perfectly calibrated predictor $\predictornew$ outputs a prediction $\prediction\sim U[0, 1]$ uniformly at random.
Under this example, predictor $\predictornew$ Blackwell-dominates $\predictor$, thus $\inforGap{\predictor, \predictornew} = 0$, while $\EMD{\PDF, \PDFNew} = 0.25$.
\end{example}

In the remainder of this section, we first provide the formal proof of \Cref{thm:strong-duality-perfectly-calibrated} in \Cref{subsec:strong duality perfectly calibrated proof}. We then establish additional structural characterization results of $\REMD{\cdot,\cdot}$ in \Cref{subsec:structural characterzation perfecly calibrated}.

\subsection{Proof of \texorpdfstring{\Cref{thm:strong-duality-perfectly-calibrated}}{Theorem 3.1}}
\label{subsec:strong duality perfectly calibrated proof}

The proof of \Cref{thm:strong-duality-perfectly-calibrated} consists of three main steps: 
\squishlist
    \item In step 1 (see \Cref{lem:simplify}), we reformulate the informativeness gap as an optimization program over all the 1-Lipschitz convex interim  utility functions. 
    \item In step 2 (see \Cref{lem:primal intermiddle}), 
    we further cast program \ref{eq:primal perfect} to a new program (see program \ref{eq:primal perfect intermiddle}) that is more tractable to establish the equivalence to $\REMD{\dist,\distNew}$.
    \item In step 3 (see \Cref{lem:last step}),
    we use strong duality to prove the equivalence between the program \ref{eq:primal perfect intermiddle} and $\REMD{\dist, \distNew}$.
\squishend

\begin{lemma}
\label{lem:simplify}
For any two perfectly calibrated predictors $\predictor, \predictornew$, their informativeness gap satisfies
\begin{align*}
    \inforGap{\predictor, \predictornew}= 
    \OBJ{\ProGOne{\PDF,\PDFNew}}
\end{align*}
where $\OBJ{\ProGOne{\PDF,\PDFNew}}$ is the optimal objective value of the following program with inputs $\dist\gets \PDF$ and $\distNew\gets\PDFNew$ (i.e., equal to two predictors' prediction distributions $\PDF$ and $\PDFNew$):
\begin{align}
\label{eq:primal perfect}
    \arraycolsep=5.4pt\def\arraystretch{1}
    \tag{$\textsc{P}_1[\dist,\distNew]$}
    &\begin{array}{rlll}
    \sup
    \limits_{\indirectU:[0,1]\rightarrow\reals} 
    ~ &
    \displaystyle 
    \int_0^1
    \indirectU(\prediction) \cdot \left(\dist(\prediction) - \distNew(\prediction)\right)~\dd  \prediction 
    \quad  &
    \vspace{1mm} \text{s.t.}
    \\
    & 
    \displaystyle 
    \indirectU \text{ is $1$-Lipschitz convex},
    &  
    \vspace{1mm}
    \end{array}
\end{align}  
\end{lemma}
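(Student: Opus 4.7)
The plan is to reduce the supremum in \Cref{def:informativeness gap} to an optimization over $1$-Lipschitz convex functions via two bridges: first re-express the payoff difference using the agent's interim utility, then identify which interim utilities arise as the decision problem varies over $\agentUClass$.

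For the first bridge, since both predictors are perfectly calibrated, the second formula in \Cref{lem:interim utility expression} gives $\agentExpPay{\predictor} = \int_0^1 \PDF(\prediction)\,\indirectU(\prediction)\,\dd\prediction$ and analogously $\agentExpPay{\predictornew} = \int_0^1 \PDFNew(\prediction)\,\indirectU(\prediction)\,\dd\prediction$, with the \emph{same} $\indirectU$ (it depends only on $(\actionSpace,\agentU)$, not on the predictor). Subtracting,
\[
\agentExpPay{\predictor} - \agentExpPay{\predictornew} = \int_0^1 \indirectU(\prediction)\bigl(\PDF(\prediction) - \PDFNew(\prediction)\bigr)\,\dd\prediction,
\]
which is exactly the objective of \ref{eq:primal perfect} at $(\dist,\distNew) = (\PDF,\PDFNew)$. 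So it suffices to prove that, as $(\actionSpace,\agentU)$ ranges over $\agentUClass$, the induced $\indirectU$ ranges over precisely the set of $1$-Lipschitz convex functions on $[0,1]$.

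For the second bridge, the inclusion ``$\subseteq$'' is exactly the content of \Cref{lem:Lipschitz convex indirect utility} and immediately yields $\inforGap{\predictor,\predictornew} \le \OBJ{\ProGOne{\PDF,\PDFNew}}$. For the reverse inclusion ``$\supseteq$'' (and hence ``$\ge$''), I would realize any $1$-Lipschitz convex $\indirectU$ as the interim utility of a decision problem in $\agentUClass$ via the support-line representation of convex functions: write $\indirectU(\prediction) = \sup_{\alpha\in[-1,1]}\bigl(\alpha\prediction + \beta(\alpha)\bigr)$, where $\beta(\alpha) := -\sup_{\prediction\in[0,1]}\bigl(\alpha\prediction - \indirectU(\prediction)\bigr)$ is the Legendre intercept; the slope range can be restricted to $[-1,1]$ because $\indirectU$ is $1$-Lipschitz. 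Setting $\actionSpace = [-1,1]$ and $\agentU(\alpha,1) = \alpha + \beta(\alpha)$, $\agentU(\alpha,0) = \beta(\alpha)$ gives $|\agentU(\alpha,1) - \agentU(\alpha,0)| = |\alpha| \le 1$, so the problem lies in $\agentUClass$, and by construction its induced univariate interim utility is exactly $\indirectU$.

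The main obstacle is this reverse construction, with two technicalities to handle. First, the constructed action set is uncountable; the paper's decision-problem framework imposes no cardinality restriction, so this is permitted, but if desired one can uniformly approximate $\indirectU$ by piecewise-linear $1$-Lipschitz convex functions, realize each approximation via a finite-action problem, and pass to the limit in the integral against $\PDF - \PDFNew$ (which is bounded by Lipschitzness). Second, the $\argmax$ in the definition of $\bestr$ is attained because, for every $\prediction\in[0,1]$, any subgradient of $\indirectU$ at $\prediction$ lies in $[-1,1]$ and realizes the supremum. Combining both inclusions yields $\inforGap{\predictor,\predictornew} = \OBJ{\ProGOne{\PDF,\PDFNew}}$, which is the claim.
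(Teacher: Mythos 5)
Your proof is correct and follows essentially the same route as the paper: invoke \Cref{lem:interim utility expression} to rewrite the payoff difference as $\int_0^1 \indirectU(\prediction)(\PDF(\prediction)-\PDFNew(\prediction))\,\dd\prediction$ and then use \Cref{lem:Lipschitz convex indirect utility} to identify the admissible $\indirectU$ with the $1$-Lipschitz convex functions. The one thing you add that the paper's one-line proof leaves implicit is the surjectivity step (the explicit decision problem $\actionSpace=[-1,1]$, $\agentU(\alpha,1)=\alpha+\beta(\alpha)$, $\agentU(\alpha,0)=\beta(\alpha)$ realizing any given $1$-Lipschitz convex $\indirectU$), which tightens the ``$\ge$'' direction that the paper treats as standard from the proper-scoring-rule literature.
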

\begin{proof}
Invoking \Cref{lem:interim utility expression,lem:Lipschitz convex indirect utility} and the definition of $\inforGap{\predictor, \predictornew}$ completes the proof of \Cref{lem:simplify}.
\end{proof}

\begin{lemma}
\label{lem:primal intermiddle}
For any two distributions $\dist, \distNew$ with supports $\supp(\dist)\subseteq[0, 1], \supp(\distNew)\subseteq[0, 1]$, the optimal objective value of program~$\ProGOne{\dist,\distNew}$ satisfies
\begin{align*}
    \OBJ{\ProGOne{\dist,\distNew}} = \OBJ{\ProGTwo{\dist,\distNew}}
\end{align*}
where $\OBJ{\ProGTwo{\dist,\distNew}}$ is the optimal objective value of the following program
\begin{align}
    \label{eq:primal perfect intermiddle}
    \arraycolsep=5.4pt\def\arraystretch{1}
    \tag{$\textsc{P}_2[\dist,\distNew]$}
    &\begin{array}{rlll}
    \sup
    \limits_{\alpha_1, \alpha_2, \gamma:[0,1]\rightarrow\reals} 
    ~ &
    \displaystyle 
    \int_0^1
    \left(\alpha_1(\prediction) +\gamma(\prediction) \cdot \prediction\right) \cdot \dist(\prediction)
    \, \dd  \prediction
    + 
    \int_0^1 \alpha_2(\predictionNew) 
    \distNew(\predictionNew) \, \dd  \predictionNew 
    \quad & \text{s.t.} &
    \vspace{1mm}
    \\
    & 
    \displaystyle 
    \alpha_1(\prediction) + \alpha_2(\predictionNew) \le 
    - \gamma(\prediction) \cdot \predictionNew,
    &  \prediction, \predictionNew\in[0, 1]
    \vspace{1mm}
    \\
    & 
    \displaystyle 
    \gamma(\prediction) \in [-1, 1],
    &  \prediction \in[0, 1]
    \vspace{1mm}
    \end{array}
\end{align}  
\end{lemma}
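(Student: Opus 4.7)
The plan is to view program $\ProGTwo{\dist,\distNew}$ as an explicit ``supporting-hyperplane'' parametrization of the 1-Lipschitz convex functions appearing in program $\ProGOne{\dist,\distNew}$. Under the dictionary $U(p) = \alpha_1(p) + \gamma(p)\cdot p$, $\alpha_2(q) = -U(q)$, and $\gamma(p)$ a subgradient of $U$ at $p$, the constraint $\alpha_1(p) + \alpha_2(q) \le -\gamma(p)\cdot q$ becomes exactly the supporting-hyperplane inequality $U(q) \ge U(p) + \gamma(p)(q-p)$, while $\gamma(p)\in[-1,1]$ encodes the 1-Lipschitz bound. The equivalence will therefore be obtained by direct primal-to-primal transformations in both directions, without appealing to any duality theorem at this stage.

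For $\OBJ{\ProGOne{\dist,\distNew}} \le \OBJ{\ProGTwo{\dist,\distNew}}$, I would take any feasible $U$ for $\ProGOne{\dist,\distNew}$, choose a measurable subgradient selection $\gamma(p) \in \partial U(p) \cap [-1,1]$ (for instance the right derivative $U'_+$, which is nondecreasing, hence Borel-measurable, and bounded in $[-1,1]$ by the Lipschitz hypothesis), and set $\alpha_1(p) := U(p) - \gamma(p)\cdot p$ and $\alpha_2(q) := -U(q)$. The subgradient inequality $U(q) \ge U(p) + \gamma(p)(q-p)$ rearranges to the main constraint of $\ProGTwo{\dist,\distNew}$, and the bound on $\gamma$ is immediate. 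Direct substitution verifies that the $\ProGTwo{\dist,\distNew}$ objective at $(\alpha_1,\alpha_2,\gamma)$ equals $\int_0^1 U(p)\dist(p)\,\dd p - \int_0^1 U(q)\distNew(q)\,\dd q$, which is precisely the $\ProGOne{\dist,\distNew}$ objective at $U$.

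For the reverse direction, I would fix any feasible triple $(\alpha_1,\alpha_2,\gamma)$ for $\ProGTwo{\dist,\distNew}$ and define
\[ V(q) \;:=\; \sup_{p\in[0,1]}\bigl[\alpha_1(p) + \gamma(p)\cdot q\bigr]. \]
As a supremum of affine functions of $q$ with slopes $\gamma(p)\in[-1,1]$, $V$ is convex and 1-Lipschitz, hence feasible for $\ProGOne{\dist,\distNew}$. Taking $p'=p$ in the sup yields $V(p) \ge \alpha_1(p) + \gamma(p)\cdot p$; meanwhile the $\ProGTwo{\dist,\distNew}$ constraint rearranges to $\alpha_1(p')+\gamma(p')\cdot q \le -\alpha_2(q)$ for every $p'$, whence $V(q)\le -\alpha_2(q)$. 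Substituting $U=V$ into the $\ProGOne{\dist,\distNew}$ objective and using $\dist,\distNew\ge 0$ then gives a value at least as large as the $\ProGTwo{\dist,\distNew}$ objective of $(\alpha_1,\alpha_2,\gamma)$.

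The main subtlety I expect concerns regularity: in the first direction one needs a measurable subgradient selection (standard for convex $U$), and in the reverse direction one needs $V$ to be Borel-measurable and the integrals to be well defined when $\alpha_1,\gamma$ are only assumed measurable. These concerns are routine---both programs may without loss be restricted to measurable functions with suitable integrability, since any near-optimal $U$ is 1-Lipschitz hence bounded, and any near-optimal $(\alpha_1,\alpha_2,\gamma)$ can be truncated similarly without changing the objective. The bulk of the argument really is the dictionary above; no new ideas beyond standard convex analysis are required.
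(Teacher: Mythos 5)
Your proposal is correct and follows essentially the same route as the paper: one direction uses the subgradient-based dictionary $\alpha_1(p)=U(p)-\gamma(p)p$, $\alpha_2(q)=-U(q)$, and the other constructs an envelope of the affine functions $p\mapsto \alpha_1(p)+\gamma(p)\cdot q$; your convex $V(q)=\sup_p[\alpha_1(p)+\gamma(p)q]$ is exactly the negation of the paper's concave $\auxfunc(q)=\inf_p[-\gamma(p)q-\alpha_1(p)]$.
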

\begin{proof}
We first argue that $\OBJ{\text{\ref{eq:primal perfect intermiddle}}} \le \OBJ{\text{\ref{eq:primal perfect}}}$.
To see this, for any feasible solution to program~\ref{eq:primal perfect intermiddle}, we define auxiliary function $\auxfunc:[0,1]\rightarrow\reals$ as
\begin{align*}
    \forall \predictionNew\in[0,1]:\qquad \auxfunc(\predictionNew) \triangleq \inf\nolimits_{\prediction\in[0,1]} -\gamma(\prediction) \cdot \predictionNew
    - \alpha_1(\prediction)
\end{align*}
Notice that function $\auxfunc$ is the highest lower concave envelope over affine functions $\{l_\prediction(\cdot)\}_{\prediction\in[0, 1]}$ where each $l_\prediction(\predictionNew) \triangleq -\gamma(\prediction) \cdot \predictionNew - \alpha_1(\prediction)$.
This implies that function $\auxfunc$ must be 1-Lipschitz concave, since the constraint that $|\gamma(\cdot)|\le 1$ in program~\ref{eq:primal perfect intermiddle}.

With the above definition of auxiliary function $\auxfunc$, we can upper bound the objective function in program~\ref{eq:primal perfect intermiddle} as follows:
\begin{align*}
    \int_0^1 \left(\alpha_1(\prediction) + \gamma(\prediction) \cdot \prediction \right) \cdot \dist(\prediction)
    \, \dd \prediction
    +
    \int_0^1 \alpha_2(\predictionNew) \cdot \distNew(\predictionNew)
    \, \dd \predictionNew 
    & \le 
    \int_0^1 \left(\alpha_1(\prediction) + \gamma(\prediction) \cdot \prediction
    \right) \cdot \dist(\prediction)
    \, \dd \prediction
    +
    \int_0^1 \auxfunc(\predictionNew) \cdot \distNew(\predictionNew)
    \, \dd \predictionNew \\
    & \le 
    \int_0^1 - \auxfunc(\prediction) \cdot \dist(\prediction)
    \, \dd \prediction
    +
    \int_0^1 \auxfunc(\predictionNew) \cdot \distNew(\predictionNew)
    \, \dd \predictionNew\\
    & =
    \int_0^1 - \auxfunc(\prediction) \cdot \left(\dist(\prediction)
    - \distNew(\prediction)\right)
    \, \dd \prediction
\end{align*}
Here, the first inequality holds due to $\alpha_2(\predictionNew) \le V(\predictionNew)$ for all $\predictionNew\in[0, 1]$, which is implied by the construction of auxiliary function $\auxfunc$ and the first constraint in program~\ref{eq:primal perfect intermiddle}, and the second inequality holds due to the construction of auxiliary function $\auxfunc$.
Since $-\auxfunc$ is $1$-Lipschitz convex, it is a feasible solution in program \ref{eq:primal perfect}. Thus, we have shown $\OBJ{\text{\ref{eq:primal perfect intermiddle}}} \le \OBJ{\text{\ref{eq:primal perfect}}}$.

We next show that $\OBJ{\text{\ref{eq:primal perfect intermiddle}}} \ge \OBJ{\text{\ref{eq:primal perfect}}}$.
Let $\indirectU$ be any feasible solution to program \ref{eq:primal perfect}. 
Given $\indirectU$, we next construct a feasible solution $\alpha_1, \alpha_2, \gamma$ to program \ref{eq:primal perfect intermiddle}
Since $\indirectU$ is $1$-Lipschitz convex, there must exist a measurable slope function $\gamma: [0, 1] \rightarrow[-1, 1]$ such that for every $\prediction, \predictionNew\in[0, 1]$, we have
\begin{align}
    \label{ineq:convexity}
    \indirectU(\prediction) \ge \indirectU(\predictionNew) + \gamma(\predictionNew) \cdot (\prediction - \predictionNew)~.
\end{align}
Indeed, such slope function $\gamma$ can be chosen by picking any sub-gradient $\gamma(\prediction)\in\partial \indirectU(\prediction)$ (which exists a.e.\ for convex function $\indirectU$). 
The $1$-Lipschitzness of function $\indirectU$ ensures the second constraint that $|\gamma(\cdot)| \le 1$ in program~\ref{eq:primal perfect intermiddle} is satisfied.
Given $\indirectU$ and a choice of $\gamma$, we construct $\alpha_1$ and $\alpha_2$ as follows: 
\begin{align*}
\forall \prediction\in[0, 1]:\qquad
    \alpha_1(\prediction) \triangleq \indirectU(\prediction) - \gamma(\prediction) \cdot \prediction
    \;\;
    \mbox{and}
    \;\;
    \alpha_2(\prediction) \triangleq -\indirectU(\prediction)
\end{align*}
By construction, inequality~\eqref{ineq:convexity} can be rewritten as
\begin{align*}
    -\alpha_2(\prediction) \geq \alpha_1(\predictionNew) + \gamma(\predictionNew)\cdot \prediction
    \;\;
    \Longleftrightarrow
    \;\;
    \alpha_1(\predictionNew) + \alpha(\prediction) \leq 
    -\gamma(\predictionNew)\cdot \prediction
\end{align*}
and thus the first constraint in program~\ref{eq:primal perfect intermiddle} is satisfied.
Thus, $\alpha_1, \alpha_2, \gamma$ is a feasible solution to program \ref{eq:primal perfect intermiddle}. 
Moreover, the objective value of program \ref{eq:primal perfect intermiddle} under solution $\alpha_1, \alpha_2, \gamma$ satisfies that
\begin{align*}
    & 
    \int_0^1 \left(
    \alpha_1(\prediction) + \gamma(\prediction) \cdot \prediction \right) \cdot \dist(\prediction)
    \, \dd \prediction
    +
    \int_0^1 \alpha_2(\predictionNew) \cdot \distNew(\predictionNew)
    \, \dd \predictionNew 
    \\
     = {}&
    \int_0^1 \left(
    \indirectU(\prediction) - \gamma(\prediction)\cdot \prediction + \gamma(\prediction) \cdot \prediction \right) \cdot \dist(\prediction)
    \, \dd \prediction
    +
    \int_0^1 -\indirectU(\predictionNew) \cdot \distNew(\predictionNew)
    \, \dd \predictionNew \\
     ={}& \int_0^1 \indirectU(\prediction)\cdot (\dist(\prediction) - \distNew(\prediction))\, \dd \prediction~.
\end{align*}
Thus, we have shown $\OBJ{\text{\ref{eq:primal perfect intermiddle}}} \ge \OBJ{\text{\ref{eq:primal perfect}}}$.

Putting the two pieces together, we complete the proof of \Cref{lem:primal intermiddle}.
\end{proof}

\begin{lemma}
\label{lem:last step}
For any two distributions $\dist, \distNew$ with supports $\supp(\dist)\subseteq[0, 1], \supp(\distNew)\subseteq[0, 1]$, the relaxed earth mover's distance $\REMD{\dist,\distNew}$ satisfies
\begin{align*}
    \REMD{\dist,\distNew} = \OBJ{\ProGTwo{\dist,\distNew}}
\end{align*}
where $\OBJ{\ProGTwo{\dist,\distNew}}$ is the optimal objective value of program~$\ProGTwo{\dist,\distNew}$.
\end{lemma}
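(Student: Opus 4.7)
The plan is to view $\REMD{\dist,\distNew}$ as the primal and $\ProGTwo{\dist,\distNew}$ as the dual of a common infinite-dimensional linear program, and then invoke strong duality. The bridge is the variational identity $|x|=\sup_{\gamma\in[-1,1]}\gamma\,x$. Since the inner quantity $\int_0^1 \miscali(\prediction,\predictionNew)(\prediction-\predictionNew)\dd\predictionNew$ is measurable in $\prediction$, a measurable selection (e.g.\ its pointwise sign) lets me pull the supremum outside the outer $\prediction$-integral, yielding
\[
\REMD{\dist,\distNew}
= \inf_{\miscali\in\couplingSpace(\dist,\distNew)}\; \sup_{\gamma:[0,1]\to[-1,1]}\; \int_0^1\!\int_0^1 \gamma(\prediction)(\prediction-\predictionNew)\,\miscali(\prediction,\predictionNew)\,\dd\predictionNew\,\dd\prediction.
\]

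First I would establish the weak-duality direction $\OBJ{\ProGTwo{\dist,\distNew}}\le\REMD{\dist,\distNew}$ directly. Given any dual-feasible triple $(\alpha_1,\alpha_2,\gamma)$ and any coupling $\miscali\in\couplingSpace(\dist,\distNew)$, I would integrate the pointwise constraint $\alpha_1(\prediction)+\alpha_2(\predictionNew)\le -\gamma(\prediction)\predictionNew$ against $\miscali$ and use the marginal conditions to rewrite the dual objective as $\iint(\alpha_1(\prediction)+\gamma(\prediction)\prediction+\alpha_2(\predictionNew))\,\miscali\,\dd\predictionNew\dd\prediction$, which by the constraint is upper-bounded by $\iint \gamma(\prediction)(\prediction-\predictionNew)\,\miscali\,\dd\predictionNew\dd\prediction$ and in turn, using $|\gamma|\le 1$, by $\int_0^1 \bigl|\int_0^1 \miscali(\prediction,\predictionNew)(\prediction-\predictionNew)\dd\predictionNew\bigr|\dd\prediction$. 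Taking $\sup$ over duals and $\inf$ over couplings gives the inequality.

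For the reverse inequality I would swap $\inf_\miscali$ and $\sup_\gamma$ in the variational form via Sion's minimax theorem: $\couplingSpace(\dist,\distNew)$ is convex and weak-$*$ compact as a set of probability measures on the compact square $[0,1]^2$, the constraint set on $\gamma$ is convex, and the integrand is bilinear and weak-$*$ continuous in $\miscali$. After the swap, the inner minimization for each fixed $\gamma$ is a Kantorovich optimal-transport problem with bounded measurable cost $c_\gamma(\prediction,\predictionNew)\triangleq\gamma(\prediction)(\prediction-\predictionNew)$, whose value equals $\sup\{\int\beta_1\dist+\int\beta_2\distNew:\beta_1(\prediction)+\beta_2(\predictionNew)\le c_\gamma(\prediction,\predictionNew)\}$ by Kantorovich duality. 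Reparameterizing via $\alpha_1(\prediction)\triangleq\beta_1(\prediction)-\gamma(\prediction)\prediction$ and $\alpha_2\triangleq\beta_2$ converts the constraint to $\alpha_1(\prediction)+\alpha_2(\predictionNew)\le -\gamma(\prediction)\predictionNew$ and the objective to that of $\ProGTwo{\dist,\distNew}$; the outer $\sup_\gamma$ then absorbs into the joint supremum over $(\alpha_1,\alpha_2,\gamma)$. The main obstacle will be making the minimax swap and the Kantorovich step rigorous when $\gamma$ is only measurable; I expect to handle this by first restricting to continuous $\gamma$, where both tools apply classically, and then closing via a density/approximation argument, or alternatively by invoking the version of Kantorovich duality valid for bounded Borel costs on compact Polish spaces. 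Once strong duality is in place, the remaining change of variables is purely algebraic.
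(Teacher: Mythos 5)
Your proposal is correct, but it takes a genuinely different route from the paper's. The paper's proof is a formal Lagrangian computation: after the same identity $\abs{x}=\max_{\gamma\in[-1,1]}\gamma x$, it introduces multipliers for the coupling and nonnegativity constraints, writes down stationarity and complementary slackness, and reads off program \ref{eq:primal perfect intermiddle} as the dual; it does not separately argue why the duality gap vanishes in this infinite-dimensional setting. You instead split the work into (i) an explicit weak-duality inequality $\OBJ{\ProGTwo{\dist,\distNew}}\le\REMD{\dist,\distNew}$, obtained by integrating the constraint $\alpha_1(\prediction)+\alpha_2(\predictionNew)\le-\gamma(\prediction)\predictionNew$ against an arbitrary coupling and using $\abs{\gamma}\le 1$, and (ii) the reverse inequality via a Sion minimax swap over the weak-$*$ compact convex set $\couplingSpace(\dist,\distNew)$ followed by classical Kantorovich duality for the inner transport problem with cost $\gamma(\prediction)(\prediction-\predictionNew)$, plus the affine reparameterization $\alpha_1=\beta_1-\gamma(\prediction)\prediction$. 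This buys rigor: the only nontrivial inputs are two standard theorems, and the one genuine technical wrinkle you flag---that Sion and Kantorovich duality want a continuous $\gamma$---is correctly resolved by your suggested restriction to continuous $\gamma$, since $\sup_{\gamma\in C([0,1]),\,\abs{\gamma}\le1}\int\gamma(\prediction)\,m_\miscali(\prediction)\,\dd\prediction$ already equals $\int\abs{m_\miscali(\prediction)}\,\dd\prediction$ for the signed density $m_\miscali(\prediction)=\int_0^1(\prediction-\predictionNew)\miscali(\prediction,\predictionNew)\,\dd\predictionNew$, and the cost is then jointly continuous on the compact square. The cost of your route is length and the need to invoke Sion and Kantorovich explicitly; what it buys is that strong duality is actually proved rather than asserted through first-order conditions.
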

\begin{proof}
We prove the lemma statement by the strong duality.
Recall the identity $\abs{x} = \max_{\gamma\in[-1, 1]} \gamma\cdot x$. 
Thus, given any feasible coupling $\miscali \in \couplingSpace(\dist, \distNew)$, we can reformulate the objective function in program \eqref{opt:EMD} in the definition of $\REMD{\cdot,\cdot}$ as follows:
\begin{align*}
    \int_0^1 \abs{\int_0^1 (\prediction-\predictionNew)\cdot \miscali(\prediction, \predictionNew) \, \dd \predictionNew }\, \dd \prediction
    & = \int_0^1 \abs{\prediction \cdot \dist(\prediction) -  \int_0^1 \predictionNew\cdot \miscali(\prediction, \predictionNew) \, \dd \predictionNew}  \, \dd \prediction \\ 
    & = \max_{\gamma:[0,1]\rightarrow[-1, 1]} 
    \int_0^1 \gamma(\prediction) \cdot \left(
    \prediction \cdot \dist(\prediction) -  \int_0^1 \predictionNew \cdot \miscali(\prediction, \predictionNew) \, \dd \predictionNew
    \right) \, \dd \prediction  \\
    & = 
    \max_{\gamma:[0,1]\rightarrow[-1, 1]} 
    \int_0^1 \gamma(\prediction) \cdot \prediction \cdot \dist(\prediction)
    \, \dd \prediction
    -
    \int_0^1 \gamma(\prediction) \cdot
    \left(
    \int_0^1 \predictionNew\cdot \miscali(\prediction, \predictionNew) \, \dd \predictionNew
    \right) \, \dd \prediction~.
\end{align*}
We next consider Lagrange multipliers $\alpha_1, \alpha_2:[0,1]\rightarrow\reals$, and $\beta:[0,1]\times[0,1]\rightarrow\reals_+$ for program~\eqref{opt:EMD}:
\begin{align*}
    \mathcal{L}
    & = 
    \int_0^1 \gamma(\prediction) \cdot \prediction \cdot \dist(\prediction)
    \, \dd \prediction
    -
    \int_0^1 \gamma(\prediction) \cdot
    \left(
    \int_0^1 \predictionNew\cdot \miscali(\prediction, \predictionNew) \, \dd \predictionNew
    \right) \, \dd \prediction 
    - \int_0^1\int_0^1 \beta(\prediction, \predictionNew)\cdot \miscali(\prediction, \predictionNew) 
    \, \dd \predictionNew  \dd \prediction \\
    & \quad 
    + 
    \int_0^1 \alpha_1(\prediction) \cdot\left(\dist(\prediction) - \int_0^1 \miscali(\prediction, \predictionNew) \, \dd \predictionNew\right)\, \dd \prediction
    + 
    \int_0^1 \alpha_2(\predictionNew) \cdot\left(\distNew(\predictionNew) - \int_0^1 \miscali(\prediction, \predictionNew) \, \dd \prediction\right)\, \dd \predictionNew \\
    & = 
    \int_{0}^1\int_0^1 \miscali(\prediction, \predictionNew) \cdot\left(
    - \beta(\prediction, \predictionNew) - \gamma(\prediction) \cdot \predictionNew 
    - \alpha_1(\prediction) 
    - \alpha_2(\predictionNew)\right)
    \,  \dd \predictionNew \dd \prediction\\
    & \quad 
    + 
    \int_0^1 \left(\gamma(\prediction) \cdot \prediction 
    + \alpha_1(\prediction)\right) \cdot \dist(\prediction)
    \, \dd \prediction
    +
    \int_0^1 \alpha_2(\predictionNew) \cdot \distNew(\predictionNew)
    \, \dd \predictionNew
\end{align*}
By the first-order condition, the optimal Lagrange multipliers $\beta^*, \alpha_1^*, \alpha_2^*, \gamma^*$ satisfy 
\begin{align*}
    - 
    \beta^*(\prediction, \predictionNew) - \gamma^*(\prediction) \cdot \predictionNew 
    - \alpha_1^*(\prediction) 
    - \alpha_2^*(\predictionNew) = 0
\end{align*}
for all $\prediction, \predictionNew\in[0, 1]$. 
In addition, under the optimal coupling $\miscali^*$, for any $\prediction, \predictionNew\in[0, 1]$, we have the following complementary slackness conditions: 
\begin{align*}
    \alpha_1^*(\prediction) 
    + \alpha_2^*(\predictionNew) 
    & = -\gamma^*(\prediction) \cdot \predictionNew, \quad \miscali^*(\prediction, \predictionNew) > 0 \\
    \alpha_1^*(\prediction) 
    + \alpha_2^*(\predictionNew) 
    & \le -\gamma^*(\prediction) \cdot \predictionNew, \quad \miscali^*(\prediction, \predictionNew) = 0~.
\end{align*}
With the above first-order conditions and complementary slackness, the Lagrange dual then becomes
\begin{align*}
    \mathcal{L} 
    = 
    \int_0^1 \left(\gamma^*(\prediction) \cdot \prediction 
    + \alpha_1^*(\prediction)\right) \cdot \dist(\prediction)
    \, \dd \prediction
    +
    \int_0^1 \alpha_2^*(\predictionNew) \cdot \distNew(\predictionNew)
    \, \dd \predictionNew~, 
\end{align*}
which is essentially the objective function in program \ref{eq:primal perfect intermiddle} and its constraints are the complementary slackness conditions that we derive above. This completes the proof of \Cref{lem:last step}.
\end{proof}

\begin{proof}[Proof of \Cref{thm:strong-duality-perfectly-calibrated}]
    Combining \Cref{lem:simplify,lem:primal intermiddle,lem:last step} finishes the proof of \Cref{thm:strong-duality-perfectly-calibrated}.
\end{proof}

\subsection{Structure Characterizations of \texorpdfstring{$\REMD{\cdot, \cdot}$}{REMD[., .]}}
\label{subsec:structural characterzation perfecly calibrated}

In this section, we provide additional implications and intuitions behind our proposed relaxed earth mover's distance $\REMD{\cdot, \cdot}$.

\xhdr{SCDF representation}  
Our proposed relaxed earth mover's distance $\REMD{\cdot, \cdot}$ admits a succinct closed-form expression in terms of the super-cumulative distribution functions (SCDF).

\begin{proposition}[SCDF representation]
\label{prop:idf}
Given any two distributions $\dist, \distNew$ with supports $\supp(\dist)\subseteq[0, 1], \supp(\distNew)\subseteq[0, 1]$ and the same mean $\expect[\prediction\sim\dist]{\prediction} = \expect[\prediction\sim\distNew]{\prediction}$,
the relaxed earth mover's distance $\REMD{\dist, \distNew}$ satisfies
\begin{align*}
    \REMD{\dist, \distNew} = 2\cdot \max\nolimits_{t\in[0, 1]}~
    \distSCDF(t) - \distSCDFNew(t)
\end{align*}
where $\distSCDF(t) \triangleq \int_0^t \distCDF(\prediction)\,\dd\prediction$ and $\distSCDFNew(t) \triangleq \int_0^t \distCDFNew(\prediction)\,\dd\prediction$ are the SCDFs of distributions $\dist,\distNew$, respectively.
\end{proposition}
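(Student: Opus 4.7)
The plan is to combine the equivalence $\REMD{\dist,\distNew} = \OBJ{\ProGOne{\dist,\distNew}}$ (established by \Cref{lem:simplify,lem:primal intermiddle,lem:last step} in the proof of \Cref{thm:strong duality perfectly calibrated}) with two integrations by parts that reduce the supremum over $1$-Lipschitz convex $\indirectU$ in program \ref{eq:primal perfect} to a linear optimization over a non-negative measure on $[0,1]$. This turns the variational problem into one whose optimum is immediately a Dirac mass at the argmax of $\distSCDF - \distSCDFNew$.

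\textbf{Step 1 (integrate by parts twice).} For any $1$-Lipschitz convex $\indirectU$, a first integration by parts using $\distCDF(0) = \distCDFNew(0) = 0$ and $\distCDF(1) = \distCDFNew(1) = 1$ gives
\begin{align*}
\int_0^1 \indirectU(\prediction)\bigl(\dist(\prediction) - \distNew(\prediction)\bigr)\,\dd \prediction
= -\int_0^1 \indirectU'(\prediction)\bigl(\distCDF(\prediction) - \distCDFNew(\prediction)\bigr)\,\dd \prediction.
\end{align*}
A second integration by parts against the (non-negative Borel) Stieltjes measure $\dd \indirectU'$ yields
\begin{align*}
-\int_0^1 \indirectU'(\prediction)\bigl(\distCDF(\prediction) - \distCDFNew(\prediction)\bigr)\,\dd \prediction
= \int_0^1 \bigl(\distSCDF(\prediction) - \distSCDFNew(\prediction)\bigr)\,\dd \indirectU'(\prediction),
\end{align*}
because the boundary term $\bigl[\indirectU'(\prediction)(\distSCDF(\prediction) - \distSCDFNew(\prediction))\bigr]_0^1$ vanishes: $\distSCDF(0) = \distSCDFNew(0) = 0$, and by Fubini $\expect[\prediction \sim \dist]{\prediction} = 1 - \distSCDF(1)$, so the equal-mean hypothesis forces $\distSCDF(1) = \distSCDFNew(1)$.

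\textbf{Step 2 (reduce to a measure-optimization problem).} Let $H(\prediction) \triangleq \distSCDF(\prediction) - \distSCDFNew(\prediction)$ and $\mu_\indirectU \triangleq \dd \indirectU'$. Convexity of $\indirectU$ yields $\mu_\indirectU \ge 0$, while $1$-Lipschitzness yields total mass $\mu_\indirectU([0,1]) = \indirectU'(1^-) - \indirectU'(0^+) \le 2$. Conversely, any non-negative measure $\mu$ on $[0,1]$ with $\mu([0,1]) \le 2$ is realized by some $1$-Lipschitz convex $\indirectU$ (take $\indirectU'(0^+) = -1$ and $\indirectU(0) = 0$). Hence
\begin{align*}
\OBJ{\ProGOne{\dist,\distNew}} = \sup\Bigl\{\int_0^1 H(\prediction)\,\dd \mu(\prediction) : \mu \ge 0,\ \mu([0,1]) \le 2 \Bigr\}.
\end{align*}
Since $H$ is continuous on the compact $[0,1]$ with $H(0) = H(1) = 0$, the right-hand supremum equals $2 \max_{t \in [0,1]} H(t)$, attained by the measure $2 \cdot \delta_{t^*}$ for any $t^* \in \arg\max_t H(t)$; the corresponding optimizer is $\indirectU(\prediction) = |\prediction - t^*|$ (up to an additive constant), which is indeed $1$-Lipschitz and convex.

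\textbf{Main obstacle.} The delicate point is the second integration by parts: a general convex $1$-Lipschitz $\indirectU$ need not be twice differentiable, so $\indirectU''$ exists only as a Borel measure and the manipulation has to be carried out in the (Lebesgue-)Stieltjes sense. The vanishing of the boundary term $\bigl[\indirectU' \cdot H\bigr]_0^1$ relies crucially on the equal-mean assumption, which is exactly what guarantees $H(1)=0$; without it, the identification with $2\max_t(\distSCDF(t) - \distSCDFNew(t))$ would be polluted by a linear drift term. Once Step 1 is set up rigorously, the remainder is a transparent optimization of a linear functional over the total-variation ball of non-negative measures, whose extremum is a point mass at the argmax of $H$.
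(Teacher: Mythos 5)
Your proof is correct, and it shares the paper's opening moves but takes a genuinely different route at the decisive step. Both start by identifying $\REMD{\dist,\distNew}$ with $\OBJ{\ProGOne{\dist,\distNew}}$ via \Cref{lem:simplify,lem:primal intermiddle,lem:last step}, and both perform a first integration by parts (with boundary terms killed by $\distCDF(0)=\distCDFNew(0)=0$ and $\distCDF(1)=\distCDFNew(1)=1$) to reach $-\int_0^1 \indirectU'(\prediction)\bigl(\distCDF(\prediction)-\distCDFNew(\prediction)\bigr)\,\dd\prediction$. At this point the paper keeps $\indirectUDeriv=\indirectU'$ as the decision variable, invokes the extreme-point characterization of uniformly bounded non-decreasing functions (\citealp{S-06,B-15}) to conclude that an optimizer is a two-level step function, and then finishes by differentiating the objective of the resulting family of V-shaped utilities in the kink location. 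You instead integrate by parts a second time in the Lebesgue--Stieltjes sense against $\dd\indirectU'$, with the boundary term $[\indirectU' (\distSCDF-\distSCDFNew)]_0^1$ vanishing precisely because the equal-mean hypothesis forces $\distSCDF(1)=\distSCDFNew(1)$; this produces the linear functional $\int_0^1 (\distSCDF-\distSCDFNew)\,\dd\indirectU'$ and repackages the feasible set as non-negative measures of total mass at most $2$, from which the value $2\max_t(\distSCDF(t)-\distSCDFNew(t))$ and the optimal scaled Dirac mass are immediate. Both routes recover the same family of optimizers $\indirectU(\prediction)=|\prediction-t^*|$; yours is self-contained and bypasses the cited extreme-point theorem, at the price of having to handle the Stieltjes calculus for non-smooth convex $\indirectU$, which you correctly identify as the subtle part. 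One small imprecision to note: when $t^*\in\{0,1\}$ no function $\indirectU$ on $[0,1]$ realizes $\dd\indirectU'=2\delta_{t^*}$ (e.g.\ $\indirectU(\prediction)=\prediction$ has $\dd\indirectU'=0$, not $2\delta_0$), so the map $\indirectU\mapsto\dd\indirectU'$ does not hit measures charging the endpoints; but since $\distSCDF-\distSCDFNew$ vanishes at both endpoints, endpoint mass contributes nothing to $\int(\distSCDF-\distSCDFNew)\,\dd\mu$ and the supremum is unaffected, so the conclusion stands.
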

Combining the proposition above with \Cref{thm:strong-duality-perfectly-calibrated}, we immediately obtain the following corollary for the informativeness gap between perfectly predictors. See \Cref{fig:CCDF} for an illustration.
\begin{corollary}
\label{cor:idf predictor}
Given any two perfectly calibrated predictors $\predictor,\predictornew$, informativeness gap $\inforGap{\predictor,\predictornew}$ satisfies
\begin{align*}
    \inforGap{\predictor,\predictornew} = 2 \cdot 
    \max\nolimits_{t\in[0, 1]}~ \SCDF(t) - \SCDFNew(t)
\end{align*}
where $\SCDF$ and $\SCDFNew$ are the SCDF of prediction distributions $\PDF$ and $\PDFNew$, respectively.
\end{corollary}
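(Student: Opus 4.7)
The plan is to invoke \Cref{thm:strong duality perfectly calibrated} together with \Cref{lem:simplify} to reduce $\REMD{\dist,\distNew}$ to the primal program $\sup_U \int_0^1 U(\prediction)(\dist(\prediction) - \distNew(\prediction))\,\dd\prediction$, where the supremum is taken over all $1$-Lipschitz convex $U:[0,1]\to\reals$. I would then perform a ``double integration by parts'' to re-express the objective purely in terms of the SCDF gap $h(\prediction) \triangleq \distSCDF(\prediction) - \distSCDFNew(\prediction)$, after which both the upper and lower bounds in the claimed identity follow from elementary convex analysis.

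Concretely, since $(\distCDF - \distCDFNew)(0) = (\distCDF - \distCDFNew)(1) = 0$, a first integration by parts gives
\begin{align*}
    \int_0^1 U(\prediction)(\dist(\prediction)-\distNew(\prediction))\,\dd\prediction
    = -\int_0^1 U'(\prediction)(\distCDF(\prediction)-\distCDFNew(\prediction))\,\dd\prediction.
\end{align*}
Observing that $h(0)=0$ and, by the equal-means hypothesis, $h(1) = \expect[\prediction\sim\distNew]{\prediction} - \expect[\prediction\sim\dist]{\prediction} = 0$, a second integration by parts yields
\begin{align*}
    \int_0^1 U(\prediction)(\dist(\prediction)-\distNew(\prediction))\,\dd\prediction = \int_0^1 h(\prediction)\,\dd U'(\prediction),
\end{align*}
where $\dd U'$ denotes the Lebesgue--Stieltjes measure associated with the non-decreasing function $U'$.

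For the upper bound, I would argue that since $U$ is convex, $\dd U'$ is a nonnegative Radon measure, and since $U$ is $1$-Lipschitz with $|U'|\le 1$, its total mass is at most $U'(1^-) - U'(0^+)\le 2$. Hence
\begin{align*}
    \int_0^1 h(\prediction)\,\dd U'(\prediction) \le \Bigl(\max\nolimits_{t\in[0,1]} h(t)\Bigr)\cdot \int_0^1 \dd U'(\prediction) \le 2\max\nolimits_{t\in[0,1]}\bigl(\distSCDF(t)-\distSCDFNew(t)\bigr),
\end{align*}
where pulling the $\max$ outside the integral is justified because $\max h \ge h(0) = 0$ and $\dd U'$ is nonnegative.

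For the matching lower bound, I would exhibit an explicit maximizer: pick $t^*\in\argmax_{t\in[0,1]} h(t)$ and set $U(\prediction)\triangleq |\prediction - t^*|$, which is $1$-Lipschitz and convex with distributional second derivative $2\delta_{t^*}$. Substituting into the identity gives $\int_0^1 U(\prediction)(\dist - \distNew)(\prediction)\,\dd\prediction = 2h(t^*)$, matching the upper bound. The main subtlety is carrying out the two integrations by parts rigorously when $U'$ is merely of bounded variation (e.g., for the maximizer above it has a jump at $t^*$); this can be handled by the standard Riemann--Stieltjes formulation, or by a smoothing/approximation argument on $U$. Notably, it is precisely the equal-means hypothesis that kills the boundary contribution at $\prediction = 1$ in the second integration by parts---without it, the identity would pick up an additional first-order term reflecting the mean gap.
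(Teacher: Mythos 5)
Your proposal is correct, and it arrives at the same identity as the paper but by a somewhat different technical route. The paper proves this corollary in one line by combining \Cref{thm:strong duality perfectly calibrated} with \Cref{prop:idf}, so the real content lives in \Cref{prop:idf}: there the paper performs a \emph{single} integration by parts to rewrite the objective of \ProGOne{\dist,\distNew} as $-\int_0^1 \omega_{\dist,\distNew}(\prediction)\,\indirectUDeriv(\prediction)\,\dd\prediction$, then invokes an extreme-point theorem (the extreme points of the uniformly bounded non-decreasing functions are one-jump step functions, citing \citealp{S-06,B-15}) to conclude that V-shaped $\indirectU$ suffice, and finally differentiates in the kink location to recover the SCDF gap. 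You instead integrate by parts \emph{twice}, pushing everything onto $h=\distSCDF-\distSCDFNew$ against the nonnegative measure $\dd U'$, and then sandwich: the upper bound follows from $\dd U'\ge 0$ having total mass at most $2$ together with $\max_t h(t)\ge h(0)=0$, and the matching lower bound from the explicit V-shape $U(\prediction)=|\prediction-t^*|$ with $\dd U'=2\delta_{t^*}$. Your route is more elementary and self-contained---it replaces the cited infinite-dimensional extreme-point result with a short measure-theoretic bound---at the price of the Riemann--Stieltjes care you already flag (the paper's own proof is no more rigorous on this point). Both arguments use the equal-means hypothesis in the same essential way, to kill the boundary term at $\prediction=1$ (in the paper this appears as the identity $\int_0^1(\prediction-t)(\dist(\prediction)-\distNew(\prediction))\,\dd\prediction=0$). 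The only thing you leave implicit that the paper states is the trivial observation that two perfectly calibrated predictors automatically have prediction distributions with the common mean $\prob{\outcome=1}$, which is what licenses applying the equal-means version of the statement to the corollary about predictors.
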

\begin{proof}
    For perfectly calibrated predictors $\predictor$ and $\predictornew$, their prediction distributions has the same mean and support contained by $[0,1]$. Hence, invoking \Cref{thm:strong-duality-perfectly-calibrated} and \Cref{prop:idf} finishes the proof of \Cref{cor:idf predictor}.
\end{proof}

\begin{figure}[H]
  \centering

\begin{tikzpicture}[>=stealth,baseline=(current axis.south)]
\pgfmathsetmacro{\tstar}{0.74}
  \begin{axis}[
    width=10cm, height=6cm,
    axis lines=left,
    axis line shift=0pt,
    xmin=0, xmax=1, ymin=-0.01, ymax=0.65,
xtick={0.15,0.26,0.544},
    xticklabels={,,,}, 
    ytick=\empty, 
    clip=false,  
  ]
\addplot[gray!90!white, line width=1.3mm,opacity=0.6] coordinates {
      (0.00,   0.00)   (0.15,   0.00)   (0.544,  0.034)  (1.00,   0.61)   };
    
    \addplot[blue, line width=1mm,opacity=0.6] coordinates {
      (0.00,   0.00)   (0.26,   0.00)   (0.74,   0.13)   (1.00,   0.61)   };

    \addplot[black, dashed, line width=0.7mm,opacity=0.6] coordinates {
      (0.00,   0.00)   (0.15,   0.00)   (0.311438, 0.013931)   (0.587249, 0.08863)
      (1, 0.61) };

       \draw[dashed, thin, gray]
      (axis cs:\tstar,0) -- (axis cs:\tstar,{ 0.281579});

      \draw[thick, black]
      (axis cs:\tstar,0.13) -- (axis cs:\tstar,{0.281579});

      \draw[dashed, ->, thick, black] 
      (axis cs:0.73,0.2) 
      -- 
      (axis cs:0.6,0.3)
      node[anchor=south east,font=\fontsize{10pt}{11pt}\selectfont] at (axis cs:0.73,0.3) {$0.5\inforGap{\predictor,\predictornew}$};

      \node[anchor=north] at (axis cs:\tstar+0.04,0.07) {$t^*$};

  \end{axis}
\end{tikzpicture}

   \caption{
  The gray solid line is $\SCDF$ and blue solid line is $\SCDFNew$ where the predictors $\predictor$ and $\predictornew$ are not  dominated (in the Blackwell's sense) with each other.
  The black dashed line is $S_h$ where $h = \predictor\vee \predictornew$ is the join of the predictors $\predictor$ and $\predictornew$, 
  and $t^* =  \argmax\nolimits_{t\in[0, 1]}~ \SCDF(t) - \SCDFNew(t)$.
  The informativeness gap $0.5\inforGap{\predictor, \predictornew}$ is exactly the height of vertical black solid line.}
  \label{fig:CCDF}
\end{figure}

We remark that \Cref{cor:idf predictor} together with \Cref{prop:prelim:informativeness gap captures blackwell informativeness} recovers the SCDF representation of the Blackwell informativeness (\Cref{lem:Blackwell order:SCDF definition}).
We now prove \Cref{prop:idf}.
\begin{proof}[Proof of \Cref{prop:idf}]
    Invoking \Cref{lem:primal intermiddle,lem:last step}, we obtain the identity between $\REMD{\dist,\distNew}$ and the optimal objective of program~$\ProGOne{\dist,\distNew}$, i.e., $\REMD{\dist,\distNew} = \OBJ{\ProGOne{\dist,\distNew}}$.

    Define auxiliary function $\omega_{\dist,\distNew}:[0,1]\rightarrow\reals$ as 
    \begin{align*}
        \forall t\in[0,1]:\qquad
        \omega_{\dist,\distNew}(t) \triangleq \displaystyle\int_0^t 
        \dist(\prediction) - \distNew(\prediction)\,\dd \prediction
    \end{align*}
    The objective function of program~$\ProGOne{\dist,\distNew}$ can be reformulated as follows:
    \begin{align*}
        \int_0^1
        \indirectU(\prediction) \cdot \left(\dist(\prediction) - \distNew(\prediction)\right)~\dd  \prediction 
        & = \left[\indirectU(\prediction)\cdot \omega_{\dist,\distNew}(\prediction)\right]_0^1 - \int_0^1 \omega_{\dist,\distNew}(\prediction) \, \dd \indirectU(\prediction) 
         = 
        - \int_0^1 \omega_{\dist,\distNew}(\prediction)\cdot  \indirectUDeriv(\prediction)\, \dd \prediction
    \end{align*}
    where the first equality holds due to integration by parts, and the second equality considers derivative $\indirectUDeriv(\prediction) \triangleq \indirectU'(\prediction)$ which exists almost everywhere and uses the fact that $\omega_{\dist,\distNew}(0) = \omega_{\dist,\distNew}(1) = 0$. 
Consequently, program~$\ProGOne{\dist,\distNew}$ is equivalent to the following program:
    \begin{align}
        \label{eq:primal equivalent}
        \arraycolsep=5.4pt\def\arraystretch{1}
        \tag{$\textsc{P}_1\primed[\dist,\distNew]$}
        &\begin{array}{rlll}
        \inf
        \limits_{\indirectUDeriv:[0,1]\rightarrow\reals} 
        ~ &
        \displaystyle 
        \int_0^1
        \indirectUDeriv(\prediction) \cdot \omega_{\dist,\distNew}(\prediction)~\dd  \prediction 
        \quad & \text{s.t.} &
        \vspace{1mm}
        \\
        & 
        \displaystyle 
        \indirectUDeriv(\prediction)\in[-1,1], 
        &  \prediction\in[0, 1]
        \vspace{1mm}
        \\
        & 
        \displaystyle 
        \indirectUDeriv(\prediction) \le \indirectUDeriv(\predictionNew) 
        &  \prediction\in[0, 1], \predictionNew\in[\prediction, 1]
        \vspace{1mm}
        \end{array}
    \end{align}
    where the constraints is the sufficient and necessary condition for the derivative $\indirectUDeriv$ of 1-Lipschitz convex interim utility function $\indirectUDeriv$.

    According to \cite{S-06,B-15}, the extreme points of the uniformly bounded, non-decreasing functions are step functions with only one jump. Thus, the optimal solution $\indirectUDeriv^*$ to the (infinite-dimensional) linear program \ref{eq:primal equivalent} must be a step function where 
    \begin{align*}
        \indirectUDeriv^*(\prediction) = -1 \cdot \indicator{\prediction\le t^*} + 1 \cdot\indicator{\prediction> t^*}
    \end{align*}
    for some threshold $t^*\in[0, 1]$.

    With the above observation, to solve program~$\ProGOne{\dist,\distNew}$, it suffices to optimize over all V-shaped functions across different kinks $t\in[0, 1]$.
    In particular, we can write the objective in program~$\ProGOne{\dist,\distNew}$ under a V-shaped function with kink $t\in[0, 1]$ as follows: 
    \begin{align*}
        &\int_0^t -(\prediction-t) \cdot (\dist(\prediction) - \distNew(\prediction))\, \dd \prediction 
        + 
        \int_{t}^1 (\prediction-t) \cdot (\dist(\prediction) - \distNew(\prediction))\, \dd \prediction
        \\
        = {} &
        \int_0^t -(\prediction-t) \cdot (\dist(\prediction) - \distNew(\prediction))\, \dd \prediction 
        + 
        \int_{0}^1 (\prediction-t) \cdot (\dist(\prediction) - \distNew(\prediction))\, \dd \prediction
        -
        \int_{0}^t (\prediction-t) \cdot (\dist(\prediction) - \distNew(\prediction))\, \dd \prediction
        \\
        = {}&
        \int_0^t -2(\prediction-t) \cdot (\dist(\prediction) - \distNew(\prediction))\, \dd \prediction~.
    \end{align*}
    Here the second equality holds since $\int_{0}^1 (\prediction-t) \cdot (\dist(\prediction) - \distNew(\prediction))\, \dd \prediction = 0$, which is implied by the statement assumption that distributions $\dist,\distNew$ are valid distribution and have the same mean.

    Observe that 
    \begin{align*}
        \frac{\partial}{\partial t}\int_0^t -2(\prediction-t) \cdot (\dist(\prediction) - \distNew(\prediction))\, \dd \prediction
        &=
        \int_0^t 2 \cdot (\dist(\prediction) - \distNew(\prediction))\,\dd\prediction
        -
        2(t - t) (\dist(t) - \distNew(t)) 
        \\
        &= 
        2 (\distCDF(t) - \distCDFNew(t))
    \end{align*}
    Thus, the optimal objective value of program~$\ProGOne{\dist,\distNew}$ can be further reformulated as 
    \begin{align*}
        \max\nolimits_{t\in[0, 1]}~
        \int_0^t 2 (\distCDF(\prediction) - \distCDFNew(\prediction))\,\dd \prediction = 
        2\cdot \max\nolimits_{t\in[0, 1]}~ \distSCDF(t) - \distSCDFNew(t)
    \end{align*}
    where the equality holds due to SCDF definition. This completes the proof of \Cref{prop:idf}.
\end{proof}

\xhdr{Lattice representation} Given any two distributions $\dist,\distNew$ with the same mean and supports contained in $[0,1]$, define their \emph{join} $\join\triangleq \dist\vee \distNew$ as the unique distribution whose SCDF $\joinSCDF$ satisfies 
\begin{align*}
    \forall \prediction\in[0,1]:\qquad 
    \joinSCDF(\prediction) = \max\{\distSCDF(\prediction),\distSCDFNew(\prediction)\}
\end{align*}
As a sanity check, since both distributions' SCDF $\distSCDF,\distSCDFNew$ are convex, SCDF $\joinSCDF$ is also convex and thus $\join$ is a valid distribution and has the same mean as distributions $\dist,\distNew$. As \Cref{lem:Blackwell order:lattice structure} states, the space of information structures (perfectly calibrated predictors) and the Blackwell order over it forms a lattice. Thus, by viewing distributions $\dist,\distNew$ and $\join$ as information structures, join $\join$ is the least Blackwell informative information structure (perfectly calibrated predictor) that Blackwell-dominants both information structures $\dist$ and $\distNew$.

We now establish the property of our proposed relaxed earth mover's distance $\REMD{\dist,\distNew}$ and their corresponding join $\join=\dist\vee \distNew$.

\begin{proposition}[Lattice representation]
\label{prop:join equivalency}
    Given any two distributions $\dist, \distNew$ with supports $\supp(\dist)\subseteq[0, 1], \supp(\distNew)\subseteq[0, 1]$ and the same mean $\expect[\prediction\sim\dist]{\prediction} = \expect[\prediction\sim\distNew]{\prediction}$,
    the relaxed earth mover's distance $\REMD{\dist, \distNew}$ satisfies
    \begin{align*}
        \REMD{\dist,\distNew} = \REMD{\join,\distNew}
    \end{align*}
    where $\join=\dist\vee \distNew$ is the join of distributions $\dist,\distNew$ under the Blackwell order.
\end{proposition}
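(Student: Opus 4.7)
The plan is to reduce everything to the SCDF representation from \Cref{prop:idf}. First I would verify that \Cref{prop:idf} applies to the pair $(\join, \distNew)$, not just to $(\dist, \distNew)$. The paragraph preceding the proposition already notes that $\joinSCDF$ is convex (since it is the pointwise max of convex functions), so $\join$ is a valid distribution with $\supp(\join) \subseteq [0,1]$. Moreover, $\join$ shares the common mean $\mu$ of $\dist$ and $\distNew$: for any distribution on $[0,1]$, integration by parts gives $\distSCDF(1) = 1 - \mu$, and since $\distSCDF(1) = \distSCDFNew(1)$ by the equal-mean hypothesis, we get $\joinSCDF(1) = \max\{\distSCDF(1), \distSCDFNew(1)\} = \distSCDF(1)$, so $\join$ has mean $\mu$ as well.

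Having confirmed the hypotheses, invoke \Cref{prop:idf} on both pairs to obtain
\begin{align*}
\REMD{\dist, \distNew} &= 2 \cdot \max_{t \in [0,1]} \bigl(\distSCDF(t) - \distSCDFNew(t)\bigr), \\
\REMD{\join, \distNew} &= 2 \cdot \max_{t \in [0,1]} \bigl(\joinSCDF(t) - \distSCDFNew(t)\bigr).
\end{align*}
Substituting $\joinSCDF(t) = \max\{\distSCDF(t), \distSCDFNew(t)\}$ into the second identity, the integrand simplifies to $\max\bigl\{\distSCDF(t) - \distSCDFNew(t),\, 0\bigr\}$, so
$$\max_{t \in [0,1]} \bigl(\joinSCDF(t) - \distSCDFNew(t)\bigr) = \max\Bigl\{0,\; \max_{t \in [0,1]} \bigl(\distSCDF(t) - \distSCDFNew(t)\bigr)\Bigr\}.$$

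To close the argument, I would observe that $\max_{t \in [0,1]} (\distSCDF(t) - \distSCDFNew(t)) \geq 0$, which follows at once by evaluating at the boundary $t = 0$ (where both SCDFs vanish) or $t = 1$ (where they agree by the common-mean assumption). Hence the outer $\max$ collapses, and the two displayed expressions coincide, giving $\REMD{\dist, \distNew} = \REMD{\join, \distNew}$. There is no real obstacle here: once \Cref{prop:idf} is in hand, the lattice representation is essentially an algebraic identity about pointwise maxima of convex functions, and the only small bookkeeping step is checking that $\join$ meets the hypotheses of \Cref{prop:idf}.
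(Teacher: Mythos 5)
Your proof is correct and follows essentially the same route as the paper's: both reduce to the SCDF representation from \Cref{prop:idf}, substitute $\joinSCDF(t) = \max\{\distSCDF(t), \distSCDFNew(t)\}$, and use $\distSCDF(0) = \distSCDFNew(0) = 0$ to show the maximum is nonnegative so the outer $\max$ with $0$ collapses. Your additional check that $\join$ has the same mean as $\dist$ and $\distNew$ (via $\distSCDF(1) = 1 - \mu$) is a small bit of bookkeeping the paper leaves implicit in the paragraph preceding the proposition, but it is not a different argument.
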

\begin{proof}
    We rewrite $\REMD{\dist,\distNew}$ as 
    \begin{align*}
        \REMD{\dist,\distNew} &=  2\cdot \max\nolimits_{t\in[0, 1]}~
        \distSCDF(t) - \distSCDFNew(t)
        = 
        2\cdot 
        \max\nolimits_{t\in[0, 1]}~
        \max\{\distSCDF(t),\distSCDFNew(t)\} - \distSCDFNew(t)
        \\
        & = 
        2\cdot 
        \max\nolimits_{t\in[0, 1]}~
        \joinSCDF(t) - \distSCDFNew(t)
        =
        \REMD{\join,\distNew} 
    \end{align*}
    where the first and last equality holds due to \Cref{prop:idf}, the second equality holds since $\distSCDF(0) = \distSCDFNew(0)$ and algebra, and the third equality holds due to the definition of join $\join$. This completes the proof of \Cref{prop:join equivalency}.
\end{proof}
Combining the proposition above with \Cref{thm:strong-duality-perfectly-calibrated}, we immediately obtain the following corollary for the informativeness gap between perfectly predictors.
\begin{corollary}
\label{cor:join equivalency predictor}
Given any two perfectly calibrated predictors $\predictor,\predictornew$, informativeness gap $\inforGap{\predictor,\predictornew}$ satisfies
\begin{align*}
    \inforGap{\predictor,\predictornew} = 
    \inforGap{h,\predictornew}
\end{align*}
where $h$ is the perfectly calibrated predictor with prediction distribution $f_h = \PDF\vee\PDFNew$. 
\end{corollary}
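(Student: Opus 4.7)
The plan is to simply chain \Cref{thm:strong duality perfectly calibrated} (equating the informativeness gap with $\REMD{\cdot,\cdot}$) with the lattice representation \Cref{prop:join equivalency}, after performing a brief sanity check that the predictor $h$ in the statement is well-defined as a perfectly calibrated predictor.

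First, since $\predictor$ and $\predictornew$ are perfectly calibrated, both prediction distributions $\PDF$ and $\PDFNew$ have the same mean, namely the base rate $\prob{\outcome = 1}$; moreover, each has support contained in $[0,1]$. Hence $\PDF$ and $\PDFNew$ satisfy the hypotheses of \Cref{prop:join equivalency}, and their join $f_h = \PDF \vee \PDFNew$ is a well-defined distribution on $[0,1]$ with the same mean as $\PDF$ and $\PDFNew$ (convexity of $\joinSCDF = \max\{\SCDF, \SCDFNew\}$ was already checked just before \Cref{prop:join equivalency}). Because the mean of $f_h$ equals the base rate, there exists a perfectly calibrated predictor $h$ whose prediction distribution is exactly $f_h$, so the right-hand side $\inforGap{h,\predictornew}$ is meaningful and falls within the scope of \Cref{thm:strong duality perfectly calibrated}.

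The main chain of equalities is then just three steps:
\begin{align*}
\inforGap{\predictor,\predictornew}
\;=\; \REMD{\PDF,\PDFNew}
\;=\; \REMD{f_h,\PDFNew}
\;=\; \inforGap{h,\predictornew},
\end{align*}
where the first equality is \Cref{thm:strong duality perfectly calibrated} applied to $(\predictor,\predictornew)$, the second equality is the lattice representation \Cref{prop:join equivalency} applied to $(\dist,\distNew) = (\PDF,\PDFNew)$, and the third equality is \Cref{thm:strong duality perfectly calibrated} applied to $(h,\predictornew)$.

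There is essentially no obstacle here: the whole content has been packaged into the two preceding results. The only thing to be slightly careful about is the well-definedness of $h$ as a perfectly calibrated predictor, which follows from the equal-mean property used above; beyond that, the corollary is a direct transport of \Cref{prop:join equivalency} from the distribution side to the predictor side via \Cref{thm:strong duality perfectly calibrated}.
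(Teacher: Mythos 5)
Your proposal is correct and matches the paper's route exactly: the paper likewise obtains this corollary by combining \Cref{thm:strong duality perfectly calibrated} with \Cref{prop:join equivalency}, and the well-definedness of $h$ (equal means, convexity of $\joinSCDF$) is the same sanity check the paper performs just before stating the lattice representation. Nothing is missing.
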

With a slight abuse of notation, we also denote by $\predictor \vee \predictornew$ the predictor $h$ in \Cref{cor:join equivalency predictor}, referring to it as the \emph{join of predictors $\predictor$ and $\predictornew$}.

We make the following interpretation from \Cref{cor:join equivalency predictor}: first, suppose that the perfectly calibrated predictor $\predictornew$ is more Blackwell informative than the perfectly calibrated predictor $\predictor$. By definition, their join is exactly the predictor $\predictornew$, i.e., $\predictor \vee \predictornew = \predictornew$. Hence, \Cref{cor:join equivalency predictor} immediately implies that $\inforGap{\predictor, \predictornew} = \inforGap{\predictornew, \predictornew} = 0$, which recovers \Cref{prop:prelim:informativeness gap captures blackwell informativeness}.

More importantly, \Cref{cor:join equivalency predictor} also provides insight into cases where perfectly calibrated predictors $\predictor$ and $\predictornew$ are not comparable under the Blackwell order. In particular, it suggests that the maximum payoff advantage of perfectly calibrated predictor $\predictor$ over perfectly calibrated predictor $\predictornew$ is always achieved in a decision task where predictor $\predictor$ performs as well as their join predictor (which is always more Blackwell informative than both). See \Cref{fig:join} for an illustration.

\begin{figure}[H]
  \centering

\begin{tikzpicture}[>=stealth, font=\Large, node distance=5cm]
\node (join)  [circle, draw, fill=black, minimum size=4mm, inner sep=2pt, label={[font=\fontsize{12pt}{12pt}\selectfont]above:{$h$}} ] {};
  
  \node (predictor) [circle, draw, fill=black, minimum size=4mm, inner sep=2pt, below left=of join,  label={[font=\fontsize{12pt}{12pt}\selectfont]below:{$\predictor$}}] {};
  
  \node (predictornew) [circle, draw, fill=black, minimum size=4mm, inner sep=2pt, below right=of join, label={[font=\fontsize{12pt}{12pt}\selectfont]below:{$\predictornew$}}] {};

\draw[->, -{Latex[length=3mm,width=2mm]}] (join) -- (predictor) node[midway,left, font=\fontsize{10pt}{10pt}\selectfont] {$\inforGap{h,\predictor}$};
  \draw[->, -{Latex[length=3mm,width=2mm]}] (join) -- (predictornew)
  node[midway,right, font=\fontsize{10pt}{10pt}\selectfont] {$\inforGap{h,\predictornew}$};;

  \draw[dashed,->, -{Latex[length=3mm,width=2mm]}]
    ([yshift=3pt]predictor.east) -- ([yshift=3pt]predictornew.west) node[midway,above,  font=\fontsize{10pt}{10pt}\selectfont] {$\inforGap{\predictor, \predictornew}$};
  \draw[dashed,->, -{Latex[length=3mm,width=2mm]}]
    ([yshift=-3pt]predictornew.west) -- ([yshift=-3pt]predictor.east) node[midway,below,  font=\fontsize{10pt}{10pt}\selectfont] {$\inforGap{\predictornew, \predictor}$};
\end{tikzpicture}   \caption{A graphic illustration of join $h \triangleq \predictor \vee \predictornew$ over two  calibrated predictors $\predictor, \predictornew$.
  The solid arrow implies a Blackwell's order between two predictors, while a dashed arrow not necessarily implies a Blackwell's order. According to \Cref{cor:join equivalency predictor}, we have that $\inforGap{\predictor, \predictornew} = \inforGap{h, \predictornew}$ and 
  $\inforGap{\predictornew, \predictor} = \inforGap{h, \predictor}$.}
  \label{fig:join}
\end{figure}

\section{The Informativeness of Miscalibrated Predictors}
\label{sec:misc}

In this section, we characterize the informativeness gap between any two (possibly miscalibrated) predictors. The core of our characterization is a generalized notion of $\REMD{\cdot, \cdot}$ to incorporate miscalibrated predictions, which we define it as follows.
\begin{definition}
\label{def:REMD miscali}
Given any two distributions $\dist, \distNew$ with supports $\supp(\dist)\subseteq[0, 1], \supp(\distNew)\subseteq[0, 1]$ and two functions $\truePredic_1,\truePredic_2:[0,1]\rightarrow[0,1]$,
the \emph{informativeness measure} $\earthDistFlow{\dist, \distNew,\truePredic_1,\truePredic_2}$ is defined as the optimal objective value of the following optimization program:
\begin{align}
    \label{opt:flow miscali}
    \inf\limits_{\miscali\in \feasibleflowSet(\dist, \distNew)} ~  & 
    \int_0^1 \abs{\int_0^1 \miscali(\prediction, \predictionNew) \cdot (\prediction-\predictionNew)\,\dd \predictionNew 
    + (\truePredic_1(\prediction) - \prediction) \cdot \dist(\prediction)
    - (\truePredic_2(\prediction) - \prediction) \cdot \distNew(\prediction) }  \, \dd \prediction
\end{align}
where 
\begin{align*}
    \feasibleflowSet(\dist, \distNew) \triangleq \left\{\miscali\in\Delta([0, 1]\times[0,1]): \dist(\prediction) - \distNew(\prediction) - \int_0^1 \miscali(\prediction, \predictionNew)\,\dd \predictionNew + \int_0^1 \miscali(\predictionNew, \prediction)\, \dd \predictionNew = 0,  ~~\prediction\in[0, 1]\right\}
\end{align*}
is referred to as the \emph{flow set}.
\end{definition}
With the above definition, the main results in this section are the following 
(i) dual characterization of $\inforGap{\predictor, \predictornew}$; 
(ii) Calibration-adjusted SCDF (henceforth {\generalizedSCDF}) representation of $\inforGap{\predictor, \predictornew}$, 
when predictors $\predictor, \predictornew$ are possibly miscalibrated.
\begin{theorem}
\label{thm:strong-duality-miscalibrated}
For any two (possibly miscalibrated) predictors $\predictor, \predictornew$, their informativeness gap satisfies
\begin{align*}
    \inforGap{\predictor, \predictornew}
    & = 
    \earthDistFlow{\PDF, \PDFNew, \truePredic_\predictor,\truePredic_\predictornew}\\
    &= 2\cdot \max_{t\in [0, 1]} ~
    \left(\scdf_\predictor(t) + \int_0^t (\prediction - \truePredic_\predictor(\prediction))\cdot \PDF(\prediction)\, \dd \prediction \right)
    - 
    \left(\scdf_\predictornew(t) + \int_0^t (\prediction - \truePredic_\predictornew(\prediction)) \cdot \PDFNew(\prediction)\, \dd \prediction\right)
\end{align*}
where $\PDF,\PDFNew$ (resp.\ $\truePredic_\predictor,\truePredic_\predictornew$) are the prediction distributions (resp.\ true expected outcome functions) of predictors $\predictor,\predictornew$, respectively;
and $\scdf_\predictor(t) \triangleq \int_0^t \PDF(\prediction)\,\dd\prediction$ and $\scdf_\predictornew(t) \triangleq \int_0^t \PDFNew(\prediction)\,\dd\prediction$ are the SCDFs of distributions $\PDF,\PDFNew$, respectively.
\end{theorem}
Below, we make the several remarks regarding \Cref{thm:strong-duality-miscalibrated}.

First, unlike our proposed relaxed earth mover's distance $\REMD{\cdot,\cdot}$, which is defined over distributions, the new measure $\earthDistFlow{\cdot,\cdot,\cdot,\cdot}$ is defined over distributions and two additional functions. This distinction is intuitive, as a miscalibrated predictor $\predictor$ is jointly determined by its prediction distribution $\PDF$ and its true expected outcome function $\truePredic_\predictor$. Therefore, a measure that only takes prediction distributions as input does not collect enough information to capture the decision payoff for general miscalibrated predictors.

Second, the flow set $\feasibleflowSet(\cdot,\cdot)$ in the definition above can be viewed as the flow conservation constraint over the 2-dimensional space $[0,1] \times [0,1]$. It is conceptually similar to the flow constraint in the mechanism design literature \citep{CKM-13,CDW-16}. In particular, both our flow constraint and the flow constraint in mechanism design can be interpreted as the dual constraint for interim utility under the best response (or equilibrium).\footnote{We conjecture that there might be a deeper connection between our model and the mechanism design model, which we leave for future work.}

Third, inspired the by second representation in \Cref{thm:strong-duality-miscalibrated}, we define the \emph{calibration-adjusted SCDF}---a corrected 
version of the SCDF that accounts for the discrepancy between predicted and true probabilities as follows:
\begin{definition}[CA-SCDF]
\label{def:ca-scdf}
    Given a (possibly miscalibrated) predictor $\predictor$ its \emph{calibration-adjusted SCDF (CA-SCDF)} is defined as 
    \begin{align*}
        \scdf_\predictor(t) + \int_0^t \bigl(\prediction - \truePredic_\predictor(\prediction)\bigr) 
        \cdot \PDF(\prediction)\, \dd\prediction
    \end{align*}
\end{definition}
As a sanity check, when the predictor $\predictor$ is perfectly calibrated, we have 
$\truePredic_\predictor(\prediction) \equiv \prediction$, and the CA-SCDF  reduces to the SCDF of the prediction distribution $\PDF$.

The formal proof of \Cref{thm:strong-duality-miscalibrated} is presented in \Cref{subsec:strong duality miscalibrated proof}. 
We next first compare our characterization in \Cref{thm:strong-duality-miscalibrated} with the characterization that we established in \Cref{thm:strong-duality-perfectly-calibrated} for perfectly calibrated predictors, and then
discuss the connection between our proposed informativeness gap and other calibration measure in \Cref{subsec:relation to other measures}.

\subsection{Comparison with the Characterization of Perfectly Calibrated Predictors}
In this section, we discuss the comparisons between \Cref{thm:strong-duality-miscalibrated} and \Cref{thm:strong-duality-perfectly-calibrated}.

We first observe that when the predictors $\predictor$ and $\predictornew$ are both perfectly calibrated, the objective function in program \eqref{opt:flow miscali} is the same as that in program \eqref{opt:EMD}, while the flow set $\feasibleflowSet(\PDF, \PDFNew)$ is a strict superset of the coupling set $\couplingSpace(\PDF,\PDFNew)$. \Cref{thm:strong-duality-miscalibrated} implies that both $\earthDistFlow{\PDF, \PDFNew, \truePredic_\predictor,\truePredic_\predictornew}$ and $\REMD{\PDF, \PDFNew}$ achieve the identical optimal objective value when predictors $\predictor$ and $\predictornew$ are perfectly calibrated. Alternatively, we can summarize this observation as the following characterization for $\REMD{\cdot,\cdot}$:
\begin{corollary}[Flow representation]
\label{cor:flow perfectly calibrated}
Given any two distributions $\dist, \distNew$ with supports $\supp(\dist)\subseteq[0, 1], \supp(\distNew)\subseteq[0, 1]$ and the same mean $\expect[\prediction\sim\dist]{\prediction} = \expect[\prediction\sim\distNew]{\prediction}$,
the relaxed earth mover's distance $\REMD{\dist, \distNew}$ satisfies
\begin{align*}
    \REMD{\dist,\distNew} 
    = 
    \inf\nolimits_{\miscali\in \feasibleflowSet(\dist, \distNew) } \int_0^1 \abs{\int_0^1 \miscali(\prediction, \predictionNew)\cdot (\prediction-\predictionNew) \, \dd \predictionNew 
    }  \, \dd \prediction~.
\end{align*}
where $\feasibleflowSet(\dist, \distNew)$ is a strict superset of $\couplingSpace(\dist,\distNew)$ used in the definition of $\REMD{\cdot,\cdot}$ (\Cref{def:REMD}).
\end{corollary}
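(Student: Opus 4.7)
The plan is to obtain \Cref{cor:flow perfectly calibrated} as an immediate consequence of \Cref{thm:strong duality perfectly calibrated} and \Cref{thm:strong duality miscalibrated}. The key observation is that when \Cref{def:REMD miscali} is specialized to the identity functions $\truePredic_1(\prediction)=\truePredic_2(\prediction)=\prediction$, the two miscalibration correction terms $(\truePredic_1(\prediction)-\prediction)\dist(\prediction)$ and $(\truePredic_2(\prediction)-\prediction)\distNew(\prediction)$ inside the objective of \eqref{opt:flow miscali} vanish identically. Hence the right-hand side of the claimed identity is precisely $\earthDistFlow{\dist,\distNew,\mathrm{id},\mathrm{id}}$, where $\mathrm{id}(\prediction)=\prediction$.

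Next I would realize $\dist$ and $\distNew$ as the prediction distributions of two perfectly calibrated predictors $\predictor$ and $\predictornew$ on a common probability space. This step is where the same-mean hypothesis becomes essential: perfect calibration forces $\prob{\outcome=1}=\expect[\prediction\sim\PDF]{\prediction}$, so for $\predictor$ and $\predictornew$ to coexist over the same binary outcome $\outcome$ one needs $\expect[\prediction\sim\dist]{\prediction}=\expect[\prediction\sim\distNew]{\prediction}$, which is precisely the hypothesis of the corollary. Given this common base rate, the joint law $(\prediction,\outcome)$ is constructed by drawing $\prediction\sim\dist$ (resp.\ $\prediction\sim\distNew$) and then $\outcome\sim\Bern(\prediction)$, which automatically makes the predictor perfectly calibrated so that $\truePredic_\predictor(\prediction)=\truePredic_\predictornew(\prediction)=\prediction$ on the respective supports.

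Finally I would chain the two theorems: \Cref{thm:strong duality perfectly calibrated} gives $\inforGap{\predictor,\predictornew}=\REMD{\dist,\distNew}$, while \Cref{thm:strong duality miscalibrated} gives $\inforGap{\predictor,\predictornew}=\earthDistFlow{\dist,\distNew,\truePredic_\predictor,\truePredic_\predictornew}=\earthDistFlow{\dist,\distNew,\mathrm{id},\mathrm{id}}$, and combining these yields the stated identity. There is really no serious obstacle in this derivation -- all the analytic content is carried by the two theorems already proved, and one direction of the inequality, namely $\REMD{\dist,\distNew}\ge\inf_{\miscali\in\feasibleflowSet(\dist,\distNew)}\int_0^1|\int_0^1\miscali(\prediction,\predictionNew)(\prediction-\predictionNew)\,\dd\predictionNew|\,\dd\prediction$, is in any case immediate because every coupling $\miscali\in\couplingSpace(\dist,\distNew)$ satisfies the flow-conservation constraint and so lies in $\feasibleflowSet(\dist,\distNew)$. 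The side remark that this inclusion is strict in general can be verified by perturbing any coupling with a small symmetric mass transfer between two off-diagonal points (and renormalizing): such a perturbation preserves flow conservation but shifts the marginals away from $(\dist,\distNew)$, producing an element of $\feasibleflowSet(\dist,\distNew)\setminus\couplingSpace(\dist,\distNew)$.
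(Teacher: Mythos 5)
Your proof is correct and takes essentially the same route as the paper: realize $\dist,\distNew$ as prediction distributions of two perfectly calibrated predictors (the same-mean hypothesis makes this possible), then chain \Cref{thm:strong duality perfectly calibrated} with \Cref{thm:strong duality miscalibrated} and observe that the miscalibration correction terms in \eqref{opt:flow miscali} vanish under the identity functions $\truePredic_1=\truePredic_2=\mathrm{id}$. Your extra observation that the inequality $\REMD{\dist,\distNew}\geq\inf_{\miscali\in\feasibleflowSet}\cdots$ is immediate from $\couplingSpace\subseteq\feasibleflowSet$ is correct and a nice sanity check; however, the sketched justification of strict inclusion is flawed (adding symmetric mass at $(a,b)$ and $(b,a)$ and then renormalizing scales the net flow by $\tfrac{1}{1+2\varepsilon}$, which violates the constraint $\int\miscali(\prediction,\predictionNew)\,\dd\predictionNew-\int\miscali(\predictionNew,\prediction)\,\dd\predictionNew=\dist(\prediction)-\distNew(\prediction)$ whenever $\dist\neq\distNew$), though this is a tangential remark rather than part of the proof of the stated identity.
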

\begin{proof}
Since both distributions have supports contained in $[0, 1]$ and have the same mean, they can be considered as the prediction distributions of two perfectly calibrated predictors, denoted by $\predictor,\predictornew$. Invoking \Cref{thm:strong-duality-perfectly-calibrated,thm:strong-duality-miscalibrated}, we obtain $\REMD{\dist,\distNew} = \inforGap{\predictor,\predictornew} = \earthDistFlow{\dist,\distNew,\truePredic_{C},\truePredic_C}$ where function $\truePredic_C(\prediction)\triangleq \prediction$ for all $\prediction\in[0,1]$. This completes the proof of \Cref{cor:flow perfectly calibrated}.
\end{proof}

Finally, with \Cref{cor:flow perfectly calibrated} in mind, one tempting thought is whether directly optimizing over the coupling set $\couplingSpace(\PDF, \PDFNew)$ in $\earthDistFlow{\PDF, \PDFNew, \truePredic_{\predictor}, \truePredic_{\predictornew}}$ would suffice---or at least yield a good approximation---of the informativeness gap $\inforGap{\predictor, \predictornew}$.
However, this is not the case: there exist predictors $\predictor,\predictornew$ for which the gap between optimizing over $\couplingSpace(\PDF, \PDFNew)$ in $\earthDistFlow{\PDF, \PDFNew, \truePredic_{\predictor}, \truePredic_{\predictornew}}$ and $\inforGap{\predictor, \predictornew}$ can be arbitrarily large.

\begin{example}
\label{ex:unbounded gap miscalibrated predictor measure}
Consider perfectly calibrated predictor $\predictor$ which outputs prediction $\prediction = 0$ and $\prediction = 1$ both with half probability, and
miscalibrated predictor $\predictornew$ with prediction distribution
\begin{align*}
    \PDFNew(\prediction) \triangleq  
    \begin{cases}
        0, & \prediction\in[0, \varepsilon] \\
        \displaystyle 
        \frac{\varepsilon}{2(1-2\varepsilon)} \cdot\frac{1}{\prediction^2}
        & \prediction\in(\varepsilon, 0.5] \\
        \displaystyle 
        \frac{\varepsilon}{2(1-2\varepsilon)} \cdot\frac{1}{(1-\prediction)^2}
        & \prediction\in[0.5, 1-\varepsilon) \\
        0, & \prediction \in[1-\varepsilon,1]\\
    \end{cases}
\end{align*}
and true expected outcome function $\truePredic_\predictornew(\prediction) = \indicator{\prediction\ge 0.5}$.
\end{example}

\begin{restatable}{proposition}{propBadApproxMarg}
\label{prop:bad approx of marg}
Given any two distributions $\dist, \distNew$ with supports $\supp(\dist)\subseteq[0, 1], \supp(\distNew)\subseteq[0, 1]$ and two functions $\truePredic_1,\truePredic_2:[0,1]\rightarrow[0,1]$, define a generalized version of relaxed EMD, denoted by $\REMDNew{\dist, \distNew, \truePredic_1, \truePredic_2}$ as the optimal objective value of the following optimization program:
\begin{align*}
    \inf\limits_{\miscali\in \couplingSpace(\dist, \distNew) } \int_0^1 \abs{\int_0^1 \miscali(\prediction, \predictionNew)\cdot (\prediction-\predictionNew) \, \dd \predictionNew 
    + (\truePredic_1(\prediction) - \prediction) \cdot \dist(\prediction)
    - (\truePredic_2(\prediction) - \prediction) \cdot \distNew(\prediction)
    }  \, \dd \prediction~.
\end{align*}
where decision variables $\miscali$ is optimized over coupling set $\couplingSpace(\dist,\distNew)$.

For every $\varepsilon \in (0, 0.5]$, in the setting of \Cref{ex:unbounded gap miscalibrated predictor measure} where predictor $\predictor$ is perfectly calibrated while predictor $\predictornew$ is miscalibrated, it satisfies 
\begin{align*}
    \inforGap{\predictor, \predictornew} = \Theta(\varepsilon)
    \;\;
    \mbox{and}
    \;\;
    \REMDNew{\PDF, \PDFNew,\truePredic_\predictor,\truePredic_\predictornew}
    =
    \Omega\left(\varepsilon\ln\frac{1}{\varepsilon}\right)
\end{align*}  
and thus
\begin{align*}
     \frac{\REMDNew{\PDF, \PDFNew,\truePredic_\predictor,\truePredic_\predictornew}}{\inforGap{\predictor, \predictornew}}  = \Omega\left(\ln \frac{1}{\varepsilon}\right)~.
\end{align*}
\end{restatable}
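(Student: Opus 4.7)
The plan is to establish the two asymptotic bounds $\inforGap(\predictor, \predictornew) = \Theta(\varepsilon)$ and $\REMDNew(\PDF, \PDFNew, \truePredic_\predictor, \truePredic_\predictornew) = \Omega(\varepsilon \ln(1/\varepsilon))$ separately; the stated ratio then follows by division.

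For the informativeness gap, I would work directly from the primal representation. By \Cref{lem:interim utility expression,lem:Lipschitz convex indirect utility}, $\inforGap(\predictor, \predictornew)$ equals the supremum over all $1$-Lipschitz convex $\indirectU\colon[0,1]\to\reals$ of
\[
\tfrac{1}{2}\bigl[\indirectU(0)+\indirectU(1)\bigr] - \int_\varepsilon^{0.5}\PDFNew(\prediction)\,\indirectU(0;\prediction)\,\dd\prediction - \int_{0.5}^{1-\varepsilon}\PDFNew(\prediction)\,\indirectU(1;\prediction)\,\dd\prediction,
\]
using $\truePredic_\predictor(\prediction)=\prediction$ and $\truePredic_\predictornew(\prediction) = \indicator{\prediction \ge 0.5}$. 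Writing $g = \indirectU'$, a non-decreasing function valued in $[-1, 1]$, and substituting the tangent identities $\indirectU(0;\prediction) = \indirectU(\prediction) - \prediction\, g(\prediction)$ and $\indirectU(1;\prediction) = \indirectU(\prediction) + (1-\prediction)\, g(\prediction)$, I would apply Fubini and exploit the explicit identities $\prediction\,\PDFNew(\prediction) = \tfrac{\varepsilon}{2(1-2\varepsilon)\prediction}$ on $[\varepsilon, 0.5]$ and $(1-\prediction)\,\PDFNew(\prediction) = \tfrac{\varepsilon}{2(1-2\varepsilon)(1-\prediction)}$ on $[0.5, 1-\varepsilon]$ to make the interior coefficients telescope. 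The objective then reduces to
\[
-\tfrac{1}{2}\int_0^\varepsilon g + \tfrac{\varepsilon}{1-2\varepsilon}\Bigl[\int_\varepsilon^{0.5} g - \int_{0.5}^{1-\varepsilon} g\Bigr] + \tfrac{1}{2}\int_{1-\varepsilon}^1 g.
\]
The upper bound $\inforGap \le \varepsilon$ is then immediate: the two tail terms are each at most $\varepsilon/2$ since $|g| \le 1$, while monotonicity of $g$ forces $\int_\varepsilon^{0.5} g \le \int_{0.5}^{1-\varepsilon} g$, so the interior bracket is non-positive. For the matching lower bound, I would exhibit the concrete two-action decision problem with $\agentU(a_1, 0) = \varepsilon,\ \agentU(a_1, 1) = \varepsilon - 1$ and $\agentU(a_2, 0) = -\varepsilon,\ \agentU(a_2, 1) = 1 - \varepsilon$, which satisfies the payoff-normalization constraint and induces $\indirectU(\prediction) = |\prediction - \varepsilon|$. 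Direct evaluation gives payoff $1/2$ under $\predictor$ and $(1 - 2\varepsilon)/2$ under $\predictornew$, yielding gap exactly $\varepsilon$.

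For the lower bound on $\REMDNew$, the crucial observation is that $\PDF$ is concentrated on $\{0, 1\}$, so any coupling $\miscali \in \couplingSpace(\PDF, \PDFNew)$ has its first marginal supported on $\{0, 1\}$. Consequently the $\prediction$-marginal density $\int_0^1 \miscali(\prediction, \predictionNew)\,\dd\predictionNew$ vanishes for all $\prediction \notin \{0, 1\}$. Since $\predictor$ is perfectly calibrated, $(\truePredic_\predictor(\prediction) - \prediction)\,\PDF(\prediction)$ vanishes identically. Hence for every $\prediction \in (0, 1)\setminus\{0.5\}$, and regardless of the choice of $\miscali$, the integrand inside $|\cdot|$ in $\REMDNew$ equals $\prediction\,\PDFNew(\prediction)$ on $[\varepsilon, 0.5)$ and $-(1-\prediction)\,\PDFNew(\prediction)$ on $[0.5, 1-\varepsilon]$. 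The Lebesgue integral of its absolute value is therefore
\[
\int_\varepsilon^{0.5}\prediction\,\PDFNew(\prediction)\,\dd\prediction + \int_{0.5}^{1-\varepsilon}(1-\prediction)\,\PDFNew(\prediction)\,\dd\prediction = \tfrac{\varepsilon\,\ln(1/(2\varepsilon))}{1-2\varepsilon} = \Theta(\varepsilon \ln(1/\varepsilon)),
\]
a lower bound independent of $\miscali$. Any additional atomic contribution at $\prediction \in \{0, 1\}$ is non-negative, so $\REMDNew \ge \Omega(\varepsilon \ln(1/\varepsilon))$.

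The main obstacle is the tight upper bound $\inforGap \le O(\varepsilon)$. A naive convexity estimate gives $\indirectU(0) - \indirectU(0; \prediction) \le 2\prediction$ for $\prediction \in [\varepsilon, 0.5)$, which after integrating against the heavy-tailed $\PDFNew$ yields only an $O(\varepsilon \ln(1/\varepsilon))$ bound---precisely matching $\REMDNew$ and failing to separate the two quantities. The integration-by-parts simplification is essential: the $1/\prediction^2$ shape of $\PDFNew$ is calibrated so that the interior coefficients cancel exactly, isolating the contribution to the two tail regions $[0, \varepsilon]$ and $[1-\varepsilon, 1]$ which carry only $\PDF$-mass $O(\varepsilon)$.
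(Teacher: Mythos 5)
Your proposal is correct, and it takes a genuinely different route from the paper on both halves. For the lower bound on $\REMDNew{\cdot,\cdot,\cdot,\cdot}$, the paper (Lemma~\ref{lem:lower bound bar-REMDMISCB}) sets up the full Lagrangian of the optimization over $\couplingSpace(\PDF,\PDFNew)$, derives first-order and complementary-slackness conditions, and then invokes weak duality to conclude $\REMDNew \geq 4\int_0^{1/2} t\,\PDFNew(t)\,\dd t$. You instead observe that the coupling term is a measure in $\prediction$ absolutely continuous with respect to $\PDF$, which is purely atomic on $\{0,1\}$, while $(\truePredic_\predictornew(\prediction)-\prediction)\PDFNew(\prediction)$ is an absolutely continuous density on $(\varepsilon,1-\varepsilon)$; the two parts are mutually singular, so the $L^1$ norm of their sum decomposes and the continuous part alone gives a lower bound independent of $\miscali$. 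This mutual-singularity argument is more elementary and more transparent than the dual analysis, and it exactly identifies \emph{why} the coupling cannot reduce the integrand: it has no mass where $\PDFNew$ lives. (Your constant is $\frac{\varepsilon}{1-2\varepsilon}\ln\frac{1}{2\varepsilon}$, half the paper's dual bound, but both are $\Omega(\varepsilon\ln(1/\varepsilon))$, which is all that is needed.) For $\inforGap{\predictor,\predictornew}=\Theta(\varepsilon)$, the paper invokes the already-proved machinery $\inforGap = \earthDistFlow{\cdot,\cdot,\cdot,\cdot}$ (\Cref{thm:strong duality miscalibrated}) and the generalized SCDF representation (\Cref{lem:idf miscali}), then evaluates the closed form. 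You instead return to the primal and carry out the integration-by-parts/Fubini reduction from scratch; your intermediate expression $-\tfrac12\int_0^\varepsilon g + \tfrac{\varepsilon}{1-2\varepsilon}\big[\int_\varepsilon^{0.5} g - \int_{0.5}^{1-\varepsilon} g\big] + \tfrac12\int_{1-\varepsilon}^1 g$ is correct (I verified the telescoping using $\prediction\PDFNew(\prediction)=\tfrac{\varepsilon}{2(1-2\varepsilon)\prediction}$), and from it the upper bound $\varepsilon$ is immediate from $|g|\le 1$ and monotonicity of $g$, while your explicit two-action decision problem with $\indirectU(\prediction)=|\prediction-\varepsilon|$ gives the matching $\varepsilon$ lower bound. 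This essentially reproves a special case of \Cref{lem:idf miscali} inline; it is more self-contained but duplicates existing machinery, whereas the paper's route is shorter given the tools it has already built. Your closing remark---that a naive convexity estimate yields only $O(\varepsilon\ln(1/\varepsilon))$, so the exact telescoping of the $1/\prediction^2$ tail against the interim-utility terms is what separates $\inforGap$ from $\REMDNew$---is a useful observation that the paper does not make explicit.
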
 
We remark that the prediction distribution $\PDFNew$ of the miscalibrated predictor $\predictornew$, constructed in \Cref{ex:unbounded gap miscalibrated predictor measure}, is essentially the (truncated) \emph{equal-revenue distribution}, which is widely used in the mechanism design literature. In addition, for this particular example, the closed-form expressions of $\inforGap{\predictor, \predictornew}$ and $\REMDNew{\PDF, \PDFNew, \truePredic_\predictor, \truePredic_\predictornew}$ are essentially the revenue and welfare in the context of mechanism design, whose ratio is known to be unbounded \citep{har-16}. Before proving \Cref{prop:bad approx of marg}, we first present the following lemma, which generalizes \Cref{prop:idf} (essentially from perfectly calibrated predictors to miscalibrated predictors). Its proof is similar to the one for \Cref{prop:idf} and can be found in \Cref{apx:idf miscali proof}.

\subsection{On the Relation to Other Calibration Measures}
\label{subsec:relation to other measures}

Our proposed informativeness gap $\inforGap{\predictor, \predictornew}$ has connections to the existing calibration measures, such as the Expected Calibration Error (ECE) \citep{FV-98}, and U-Calibration (UCal) \citep{KLST-23}, calibration decision loss (CDL) \citep{HW-24}, distance to calibration \citep{BGHN-23}.
We discuss and summarize these connections below.

\xhdr{Subsuming existing decision-theoretic calibration measures}
We first show that the informativeness gap $\inforGap{\predictor, \predictornew}$ subsumes existing decision-theoretic calibration measures.
\begin{definition}[U-Calibration (UCal), adopted from \citealp{KLST-23}]
Given any (possibly miscalibrated) predictor $\predictor$, its \emph{U-Calibration}, denoted by $\UCal{\predictor}$, is defined as follows:
\begin{align*}
    \UCal{\predictor} \triangleq 
    \sup\nolimits_{(\actionSpace, \agentU)\in\agentUClass} ~ 
    \displaystyle 
    \agentExpPay[{(\actionSpace, \agentU)}]{\predictor^{\NI}} - 
    \agentExpPay[{(\actionSpace, \agentU)}]{\predictor}~,
\end{align*}
where $\predictor^{\NI}$ denotes the perfectly calibrated predictor that consistently generates prediction $\expect[\prediction\sim\PDF]{\truePredic_\predictor(\prediction)}$, i.e., its prediction distribution $f_{\predictor^{\NI}} \triangleq \Dirac(\expect[\prediction\sim\PDF]{\truePredic_\predictor(\prediction)})$ where $\Dirac$ is the Dirac delta function.
\end{definition}

\begin{definition}[Calibration decision loss (CDL), adopted from \citealp{HW-24}]
\label{def:CDL}
Given any (possibly miscalibrated) predictor $\predictor$, its \emph{calibration decision loss}, denoted by $\CDL{\predictor}$, is defined as follows:
\begin{align*}
    \CDL{\predictor} \triangleq 
    \sup\nolimits_{(\actionSpace, \agentU)\in\agentUClass} ~ 
    \displaystyle 
    \agentExpPay[{(\actionSpace, \agentU)}]{\predictor^{\Bayes}} - 
    \agentExpPay[{(\actionSpace, \agentU)}]{\predictor}~,
\end{align*}
where $\predictor^{\Bayes}$ denotes the perfectly calibrated predictor that generates the true expected outcome conditional on the prediction that is generated by the predictor $\predictor$. 
\end{definition}

First, given any miscalibrated predictor $\predictornew$, we observe that by letting $\predictor$ to be some perfectly calibrated predictors that are related to $\predictornew$, informativeness gap $\inforGap{\predictor, \predictornew}$ subsumes the UCal and CDL as follows.
\begin{fact}
\label{fact:subsume UCal and CDL}
Given any miscalibrated predictor $\predictor$, 
we have 
\begin{align*}
    \inforGap{\predictor^{\NI}, \predictor} = \UCal{\predictor}; \quad
    \inforGap{\predictor^{\Bayes}, \predictor} = \CDL{\predictor}~.
\end{align*}
\end{fact}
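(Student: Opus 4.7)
The plan is essentially a direct unpacking of definitions, so the proof will be very short. Writing out $\inforGap{\predictor^{\NI}, \predictor}$ via \Cref{def:informativeness gap}, we get
\[
\inforGap{\predictor^{\NI}, \predictor} = \sup\nolimits_{(\actionSpace,\agentU)\in \agentUClass} \agentExpPay[(\actionSpace,\agentU)]{\predictor^{\NI}} - \agentExpPay[(\actionSpace,\agentU)]{\predictor},
\]
which is literally the right-hand side of the definition of $\UCal{\predictor}$. So the first identity follows immediately by syntactic matching of the two sup expressions. The second identity $\inforGap{\predictor^{\Bayes}, \predictor} = \CDL{\predictor}$ is obtained by the same unfolding, substituting $\predictor \gets \predictor^{\Bayes}$ in the first coordinate of $\inforGap{\cdot,\cdot}$ and comparing to \Cref{def:CDL}.

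The only thing I would want to verify carefully is that the class $\agentUClass$ of admissible decision problems (the payoff-normalized class, with $|\agentU(\action,1) - \agentU(\action,0)| \le 1$) is exactly the same class used in the original statements of UCal and CDL in the paper. Since \Cref{def:informativeness gap} introduced $\agentUClass$ as the ambient normalized class and the subsequent definitions of $\UCal{\cdot}$ and $\CDL{\cdot}$ use the same $\agentUClass$, there is nothing to reconcile: the two suprema are taken over identical feasible sets and with identical objectives, so they coincide term-by-term.

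There is no real obstacle here — the statement is essentially a sanity-check observation that the new notion $\inforGap{\cdot,\cdot}$ specializes to the previously studied decision-theoretic calibration measures when one of the two predictors is taken to be the natural perfectly calibrated reference. In the writeup I would just present the two definitional chains in a single display for each identity, without any intermediate lemmas.
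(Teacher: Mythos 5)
Your proof is correct and matches the paper's treatment: the paper states this as a \emph{Fact} with no proof precisely because, with $\UCal{\cdot}$ and $\CDL{\cdot}$ both defined as suprema over the same normalized class $\agentUClass$ used in the definition of $\inforGap{\cdot,\cdot}$, the two identities are literal syntactic instances of that definition with the first argument set to $\predictor^{\NI}$ and $\predictor^{\Bayes}$, respectively. Your caveat about checking that the feasible class of decision problems agrees is the right one to raise, and it is resolved exactly as you say.
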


\xhdr{Upper bounding the informativeness gap}
We next show that we can use existing calibration measures (e.g., the Expected calibration error (ECE), defined as in \Cref{defn:ECE}), together with our relaxed earth mover's distance (REMD), to upper bound  informativeness gap $\inforGap{\cdot, \cdot}$.
\begin{definition}[Expected calibration error (ECE)]
\label{defn:ECE}
Given any (possibly miscalibrated) predictor $\predictor$, its \emph{expected calibration error}, denoted by $\ECE{\predictor}$, is defined as follows:
\begin{align*}
    \ECE{\predictor} 
    \triangleq \expect[\prediction\sim\PDF]{\abs{\prediction - \truePredic_\predictor(\prediction)}}
    = \int_0^1 \PDF(\prediction)\cdot \abs{\prediction - \truePredic_\predictor(\prediction)}\, \dd\prediction~.
\end{align*}
\end{definition}
With the above definition, we can obtain the following upper bound for $\inforGap{\predictor, \predictornew}$.
\begin{proposition}
\label{prop:upper bound dFlow}
For any two (possibly miscalibrated) predictors $\predictor, \predictornew$, their informativeness gap can be upper bounded as
\begin{align*}
    \inforGap{\predictor, \predictornew}
    \leq \REMD{\PDF, \PDFNew} + \ECE{\predictor} + \ECE{\predictornew}~.
\end{align*}
\end{proposition}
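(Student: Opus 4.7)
\textbf{Proof Plan for Proposition \ref{prop:upper bound dFlow}.} My plan is to combine the dual characterization from \Cref{thm:strong duality miscalibrated} with a simple triangle inequality argument, exploiting the fact that the coupling set is a subset of the flow set.

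The first step is to invoke \Cref{thm:strong duality miscalibrated} to rewrite $\inforGap{\predictor,\predictornew}=\earthDistFlow{\PDF,\PDFNew,\truePredic_\predictor,\truePredic_\predictornew}$, so that the quantity on the left is expressed as an infimum over the flow set $\feasibleflowSet(\PDF,\PDFNew)$. Next, I would observe that every coupling $\miscali\in\couplingSpace(\PDF,\PDFNew)$ is automatically a feasible flow, since $\int_0^1 \miscali(\prediction,\predictionNew)\,\dd\predictionNew=\PDF(\prediction)$ and $\int_0^1 \miscali(\predictionNew,\prediction)\,\dd\predictionNew=\PDFNew(\prediction)$ force the flow-conservation constraint to hold with equality. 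Thus $\couplingSpace(\PDF,\PDFNew)\subseteq \feasibleflowSet(\PDF,\PDFNew)$, and restricting the infimum to the coupling set yields an upper bound:
\begin{align*}
\inforGap{\predictor,\predictornew}\le \inf_{\miscali\in\couplingSpace(\PDF,\PDFNew)}\int_0^1 \left|\int_0^1 \miscali(\prediction,\predictionNew)(\prediction-\predictionNew)\,\dd\predictionNew+(\truePredic_\predictor(\prediction)-\prediction)\PDF(\prediction)-(\truePredic_\predictornew(\prediction)-\prediction)\PDFNew(\prediction)\right|\,\dd\prediction.
\end{align*}

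The second step is to apply the triangle inequality $|A+B-C|\le |A|+|B|+|C|$ pointwise inside the outer integral, which separates the integrand into three pieces. The piece depending on $\miscali$ is $\int_0^1\bigl|\int_0^1 \miscali(\prediction,\predictionNew)(\prediction-\predictionNew)\,\dd\predictionNew\bigr|\,\dd\prediction$, and the other two pieces $\int_0^1 |\truePredic_\predictor(\prediction)-\prediction|\PDF(\prediction)\,\dd\prediction$ and $\int_0^1 |\truePredic_\predictornew(\prediction)-\prediction|\PDFNew(\prediction)\,\dd\prediction$ do not depend on $\miscali$. The infimum therefore decouples: the $\miscali$-dependent piece becomes exactly $\REMD{\PDF,\PDFNew}$ by \Cref{def:REMD}, and the other two pieces become $\ECE{\predictor}$ and $\ECE{\predictornew}$ by \Cref{defn:ECE}. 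Summing these three terms yields the desired bound.

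I do not anticipate a genuine obstacle in this argument; the key conceptual points---(i) $\couplingSpace\subseteq \feasibleflowSet$ and (ii) the triangle-inequality decomposition of the miscalibration correction into two ECE terms---are both immediate from the definitions, and the rest is just bookkeeping. The only place where one must be mildly careful is the sign and grouping inside the absolute value: writing the third term as $-(\truePredic_\predictornew(\prediction)-\prediction)\PDFNew(\prediction)$ and bounding it by $|\truePredic_\predictornew(\prediction)-\prediction|\PDFNew(\prediction)$ is what produces the clean $\ECE{\predictornew}$ term.
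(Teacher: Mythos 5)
Your proposal is correct and matches the paper's proof in essence: the paper likewise uses the inclusion $\couplingSpace(\PDF,\PDFNew)\subseteq\feasibleflowSet(\PDF,\PDFNew)$ (instantiated at the coupling attaining $\REMD{\PDF,\PDFNew}$) followed by the same triangle-inequality decomposition into $\REMD{\PDF,\PDFNew}+\ECE{\predictor}+\ECE{\predictornew}$. Your version, which keeps the infimum over the whole coupling set rather than fixing an optimal coupling, is if anything marginally cleaner since it sidesteps the question of whether the infimum is attained.
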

\begin{proof}
Given any two predictors $\predictor, \predictornew$, let 
\begin{align*}
    \miscali_{\Marg}^* & = \arg\inf\nolimits_{\miscali \in \couplingSpace(\PDF, \PDFNew) }
    \int_0^1\abs{ \int_0^1 \miscali(\prediction, \predictionNew)\cdot(\prediction - \predictionNew)\, \dd \predictionNew}\, \dd \prediction~.
\end{align*}
be the optimal coupling that achieves the relaxed earth's mover distance between the two prediction distributions $\PDF$ and $\PDFNew$.
Clearly we also have $\miscali_{\Marg}^*\in \feasibleflowSet(\PDF, \PDFNew)$.
From \Cref{thm:strong-duality-miscalibrated}, we have 
\begin{align*}
    \inforGap{\predictor, \predictornew}
    & = \earthDistFlow{\PDF, \PDFNew, \truePredic_{\predictor}, \truePredic_{\predictornew}}\\
    & = \inf_{\miscali\in\feasibleflowSet(\PDF, \PDFNew)}
    \int_0^1 \left|\int_0^1 \miscali(\prediction, \predictionNew) \cdot (\prediction-\predictionNew)\,\dd \predictionNew 
    + (\truePredic_\predictor(\prediction) - \prediction) \PDF(\prediction)
    - (\truePredic_\predictornew(\prediction) - \prediction) \PDFNew(\prediction) \right|  \, \dd \prediction \\
    & \le 
    \int_0^1 \abs{\int_0^1 \miscali_{\Marg}^*(\prediction, \predictionNew) \cdot (\prediction-\predictionNew)\,\dd \predictionNew 
    + (\truePredic_\predictor(\prediction) - \prediction) \PDF(\prediction)
    - (\truePredic_\predictornew(\prediction) - \prediction) \PDFNew(\prediction) }  \, \dd \prediction 
    \\
    & \le 
    \int_0^1 \abs{\int_0^1 \miscali_{\Marg}^*(\prediction, \predictionNew) \cdot (\prediction-\predictionNew)\,\dd \predictionNew }  \, \dd \prediction 
    + \ECE{\predictor} + \ECE{\predictornew} 
    \\
& = \REMD{\PDF, \PDFNew}
    + \ECE{\predictor} + \ECE{\predictornew}~.
\end{align*}
where the first inequality holds since $\miscali_{\Marg}^*\in\feasibleflowSet(\PDF, \PDFNew)$, and the second inequality holds due to the triangle inequality. This completes the proof of \Cref{prop:upper bound dFlow}.
\end{proof}

Combing \Cref{prop:upper bound dFlow} and \Cref{fact:subsume UCal and CDL},
we obtain the following tighter upper bound on CDL.
\begin{corollary}
\label{cor:upper bound CDL}
Given any (possibly miscalibrated) predictor $\predictor$, let $\predictor^{\Bayes}$ be its perfectly calibrated predictor (see construction in \Cref{def:CDL}), its calibration decision loss can be upper bounded as
\begin{align*}
\CDL{\predictor}
    \le 
    \REMD{f_{\predictor^{\Bayes}}, \PDF} + 
    \ECE{\predictor}
    \le 2\ECE{\predictor}~.
\end{align*}
\end{corollary}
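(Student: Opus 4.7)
The plan is to obtain the first inequality directly from \Cref{prop:upper bound dFlow} and \Cref{fact:subsume UCal and CDL}, and then to prove the second inequality by exhibiting an explicit coupling between $f_{\predictor^{\Bayes}}$ and $\PDF$ that bounds $\REMD{f_{\predictor^{\Bayes}}, \PDF}$ by $\ECE{\predictor}$.

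For the first inequality, I would observe that by \Cref{fact:subsume UCal and CDL}, $\CDL{\predictor} = \inforGap{\predictor^{\Bayes}, \predictor}$. Applying \Cref{prop:upper bound dFlow} with the first predictor equal to $\predictor^{\Bayes}$ gives
\begin{align*}
\CDL{\predictor} \;\le\; \REMD{f_{\predictor^{\Bayes}}, \PDF} + \ECE{\predictor^{\Bayes}} + \ECE{\predictor}.
\end{align*}
Since $\predictor^{\Bayes}$ is by construction perfectly calibrated (it outputs $\truePredic_\predictor(\prediction)$ when $\predictor$ outputs $\prediction$, which matches the true conditional expected outcome), we have $\truePredic_{\predictor^{\Bayes}}(\cdot) \equiv \mathrm{id}$ on its support, hence $\ECE{\predictor^{\Bayes}} = 0$, and the first inequality follows.

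For the second inequality, the key step is to exhibit a coupling $\miscali \in \couplingSpace(f_{\predictor^{\Bayes}}, \PDF)$ whose induced objective is at most $\ECE{\predictor}$. The natural choice is the coupling obtained by drawing $q \sim \PDF$ and then setting the pair to $(\truePredic_\predictor(q), q)$; since $f_{\predictor^{\Bayes}}$ is by definition the push-forward of $\PDF$ under $\truePredic_\predictor$, the first marginal is $f_{\predictor^{\Bayes}}$ and the second marginal is $\PDF$, so $\miscali \in \couplingSpace(f_{\predictor^{\Bayes}}, \PDF)$. Then, pulling the absolute value inside the integral (the triangle inequality $|\int \cdot| \le \int |\cdot|$),
\begin{align*}
\REMD{f_{\predictor^{\Bayes}}, \PDF}
\;\le\; \int_0^1 \int_0^1 \miscali(\prediction, \predictionNew)\, |\prediction - \predictionNew| \, \dd \predictionNew \, \dd \prediction
\;=\; \expect[q \sim \PDF]{|\truePredic_\predictor(q) - q|}
\;=\; \ECE{\predictor},
\end{align*}
which combined with the first inequality yields $\CDL{\predictor} \le 2\ECE{\predictor}$.

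The main (and essentially only) subtlety is verifying that this coupling is indeed admissible, i.e., that its marginals match $f_{\predictor^{\Bayes}}$ and $\PDF$. Once one unpacks the definition of $\predictor^{\Bayes}$ from \Cref{def:CDL}---namely that the prediction distribution $f_{\predictor^{\Bayes}}$ is the law of $\truePredic_\predictor(q)$ when $q \sim \PDF$---this is immediate, and the rest is routine. No integration-by-parts or SCDF machinery is needed for this corollary; the entire argument reduces to combining \Cref{prop:upper bound dFlow}, \Cref{fact:subsume UCal and CDL}, and the one-line coupling bound on $\REMD{f_{\predictor^{\Bayes}}, \PDF}$.
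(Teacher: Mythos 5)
Your proposal is correct and follows essentially the same route as the paper: the first inequality is obtained exactly as in the paper by combining \Cref{fact:subsume UCal and CDL} with \Cref{prop:upper bound dFlow} and noting $\ECE{\predictor^\Bayes}=0$, while for the second inequality the paper passes through $\REMD{f_{\predictor^\Bayes},\PDF}\le\EMD{f_{\predictor^\Bayes},\PDF}\le\ECE{\predictor}$, whose implicit witness is precisely the push-forward coupling $(\truePredic_\predictor(\predictionNew),\predictionNew)$ that you construct explicitly. Your version is just a slightly more self-contained rendering of the same argument.
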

\begin{proof}
Given any miscalibrated predictor $\predictor$ and its corresponding perfectly calibrated predictor $\predictor^{\Bayes}$, we know that 
\begin{align*}
    \CDL{\predictor} 
    & = \inforGap{\predictor^{\Bayes}, \predictor} \tag{by \Cref{fact:subsume UCal and CDL}}\\
    & = \earthDistFlow{f_{\predictor^{\Bayes}}, \PDF, \truePredic_{\predictor^{\Bayes}}, \truePredic_{\predictor}} 
    \tag{by \Cref{thm:strong-duality-miscalibrated}}\\
    & \le \REMD{f_{\predictor^\Bayes}, \PDF} + \ECE{\predictor^\Bayes} + \ECE{\predictor} 
    \tag{by \Cref{prop:upper bound dFlow}}\\
    & = 
    \REMD{f_{\predictor^\Bayes}, \PDF}
    + \ECE{\predictor}
    \tag{$\ECE{\predictor^\Bayes} = 0$}\\
    & \le 
    \EMD{f_{\predictor^\Bayes}, \PDF}
    + \ECE{\predictor} 
    \tag{by triangle inequality}\\
    & \le  2 \ECE{\predictor}
\end{align*}
which finishes the proof of \Cref{cor:upper bound CDL}.
\end{proof}
Previous work \cite{HW-24} have established that $\CDL{\predictor} \le 2\ECE{\predictor}$.
Here, we obtain a slightly tighter upper bound where the CDL of a predictor $\predictor$ can be upper bounded by the summation of the relaxed earth mover's distance $\REMD{f_{\predictor^\Bayes}, \PDF}$ between the two prediction distributions $f_{\predictor^\Bayes}, \PDF$ and its $\ECE{\predictor}$.
Notice that we have 
\begin{align*}
    \REMD{f_{\predictor^\Bayes}, \PDF} \le\EMD{f_{\predictor^\Bayes}, \PDF} \le \ECE{\predictor}~.
\end{align*}

\xhdr{Trilemma for complete, sound, and continuous informativeness measure}
\cite{BGHN-23} proposed one desiderata for an ideal calibration measure that should satisfy: the consistency -- the measure is robust completeness (correct predictions have low error) and robust soundness (incorrect predictions have high error).
Our proposed informativeness measure $\earthDistFlow{\cdot, \cdot, \cdot, \cdot}$ can be served as a tool for quantifying the informativeness gap between any pair of predictors.
{By \Cref{thm:strong-duality-miscalibrated}, $\earthDistFlow{\cdot, \cdot, \cdot, \cdot}$ is both complete and sound (in fact, it satisfies these criteria exactly), and thus it is consistent.}

However, modern machine learning predictors typically make predictions continuously over a probabilistic space. This motivates an additional desideratum: we seek calibration error measures that are continuous in prediction values, meaning they should not be sensitive to small perturbations in predictions. 
For instance, the distance to calibration measure (or the earth mover's distance $\EMD{\cdot, \cdot}$ between predictors) \citep{BGHN-23}, the smooth calibration error \citep{KF-08} and the smooth ECE \citep{KFE-18}, all satisfy this desirable continuity property.

This raises a natural question: Does our informativeness measure $\earthDistFlow{\cdot, \cdot, \cdot, \cdot}$ also satisfy such a continuity property? 
If not, is there an alternative informativeness measure, $\Dist{\cdot, \cdot}$\footnote{Here we allow the informativeness measure $\Dist{\cdot, \cdot}$ to be an arbitrary function over the pair of two predictors.}, that is both consistent w.r.t.\ informativeness gap $\inforGap{\cdot, \cdot}$ and ``continuous''?

Unfortunately, as we show below, our informativeness measure $\earthDistFlow{\cdot, \cdot, \cdot, \cdot}$ fails to satisfy a standard notion of continuity (specifically, continuity under the earth mover's distance $\EMD{\cdot, \cdot}$).\footnote{We adopt $\EMD{\cdot, \cdot}$ as our continuity measure. 
This is also widely adopted in the literature \citep[e.g.,][]{BGHN-23,HWY-25}.}
Moreover, we establish a stronger impossibility result: no informativeness measure  $\Dist{\cdot, \cdot}$ can satisfy completeness, soundness, and continuity simultaneously.
\begin{proposition}[Informativeness measure trilemma]
\label{prop:impossibility}
For any informativeness measure $\Dist{\cdot, \cdot}$ that takes as input any pair of predictors, at least one of the following property must fail:
\begin{itemize}
    \item \emph{\underline{(Approximate completeness)}}:
    for any predictors $\predictor,\predictornew$,
    \begin{align*}
        \Dist{\predictor, \predictornew} = O\left(\left(\inforGap{\predictor,\predictornew}\right)^\completeConstOne\right)
    \end{align*}
    where $\completeConstOne > 0$ is an absolute constant.
    \item \emph{\underline{(Approximate soundness)}}:
    for any predictors $\predictor,\predictornew$,
    \begin{align*}
        \Dist{\predictor, \predictornew} = \Omega\left(\left(\inforGap{\predictor,\predictornew}\right)^\soundConstOne\right)
    \end{align*}
    where $\soundConstOne > 0$ is an absolute constant.
    \item \emph{\underline{(Continuity)}}:  for any predictors $\predictor,\predictor\primed, \predictornew, \predictornew\primed$, 
    \begin{align*}
        \abs{\Dist{\predictor, \predictornew} - \Dist{\predictor\primed, \predictornew\primed}
        }
        = O\left(\EMD{\PDF, f_{\predictor\primed}} + 
        \EMD{\PDFNew, f_{\predictornew\primed}}\right)
    \end{align*}
    where $\EMD{\cdot, \cdot}$ is defined as in \Cref{defn:emd}.
\end{itemize}
\end{proposition}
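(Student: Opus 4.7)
The plan is to derive a contradiction by exhibiting two pairs of predictors $(\predictor,\predictornew)$ and $(\predictor\primed,\predictornew\primed)$ whose prediction distributions coincide exactly (so continuity would force $\Dist{\predictor,\predictornew}=\Dist{\predictor\primed,\predictornew\primed}$), yet whose informativeness gaps are dramatically different (so completeness would pin $\Dist{\predictor,\predictornew}$ at $0$, whereas soundness would push $\Dist{\predictor\primed,\predictornew\primed}$ bounded away from $0$). The key leverage point is that continuity only sees $\EMD{\cdot,\cdot}$ between prediction marginals, but $\inforGap{\cdot,\cdot}$ depends crucially on the true expected outcome functions $\truePredic_\predictor$, so perfectly calibrated predictors and maximally ``anti-informative'' predictors can share the same marginal over $\prediction$ while being operationally opposite.

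Concretely, I would fix a binary outcome with base rate $1/2$ and take $\predictor=\predictor\primed$ to be the constant $1/2$ predictor (perfectly calibrated, $\PDF=f_{\predictor\primed}=\Dirac(1/2)$). Let $\predictornew$ be the fully informative perfectly calibrated predictor that outputs $\prediction=\outcome\in\{0,1\}$, and let $\predictornew\primed$ be the ``flipped'' predictor that outputs $\prediction=1-\outcome$. Both have marginal $\PDFNew=f_{\predictornew\primed}=\tfrac12\Dirac(0)+\tfrac12\Dirac(1)$, so $\EMD{\PDF,f_{\predictor\primed}}=\EMD{\PDFNew,f_{\predictornew\primed}}=0$. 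Since $\predictornew$ Blackwell‑dominates the constant predictor $\predictor$, Proposition~\ref{prop:prelim:informativeness gap captures blackwell informativeness} gives $\inforGap{\predictor,\predictornew}=0$. For the flipped pair, I would instantiate the decision problem $\actionSpace=\{a_0,a_1\}$ with $\agentU(a_1,1)=1$, $\agentU(a_1,0)=0$, $\agentU(a_0,\cdot)=0$ (which lies in $\agentUClass$): the best response to the constant prediction $1/2$ gives $\agentExpPay{\predictor\primed}=1/2$, while an agent naively trusting $\predictornew\primed$ picks $a_0$ on prediction $0$ (outcome is actually $1$) and $a_1$ on prediction $1$ (outcome is actually $0$), yielding $\agentExpPay{\predictornew\primed}=0$. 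Hence $\inforGap{\predictor\primed,\predictornew\primed}\ge 1/2$.

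Now I combine the three properties to close the contradiction. Approximate completeness together with $\inforGap{\predictor,\predictornew}=0$ gives $\Dist{\predictor,\predictornew}=O(0^\completeConstOne)=0$. Approximate soundness together with $\inforGap{\predictor\primed,\predictornew\primed}\ge 1/2$ gives $\Dist{\predictor\primed,\predictornew\primed}\ge \Omega((1/2)^\soundConstOne)=\Omega(1)$. Continuity, applied to the four predictors above, gives $|\Dist{\predictor,\predictornew}-\Dist{\predictor\primed,\predictornew\primed}|=O(0+0)=0$, i.e.\ $\Dist{\predictor,\predictornew}=\Dist{\predictor\primed,\predictornew\primed}$. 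The three equalities/inequalities together force $0\ge\Omega(1)$, a contradiction, so at least one of the three properties must fail for $\Dist$.

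I do not anticipate a serious obstacle: the argument is essentially a ``swap‑the‑labels'' trick that exploits the asymmetry between the definition of $\inforGap$ (which uses $\truePredic$) and the definition of continuity (which only uses prediction marginals). The one point that warrants care is verifying that the chosen utility function lies in $\agentUClass$, i.e.\ satisfies $|\agentU(a,1)-\agentU(a,0)|\le 1$ for every $a$; the construction above is normalized so this holds. A minor additional check is that completeness is stated with $O((\inforGap)^\completeConstOne)$ rather than a strict zero, but since $(0)^\completeConstOne=0$ for any constant $\completeConstOne>0$, the pair $(\predictor,\predictornew)$ with exactly zero informativeness gap still forces $\Dist{\predictor,\predictornew}=0$, which is all the argument needs.
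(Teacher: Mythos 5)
Your proof is correct and follows the same overall strategy as the paper's: exhibit two pairs of predictors whose prediction marginals are (nearly) identical but whose informativeness gaps differ by $\Theta(1)$, then play completeness, soundness, and continuity against each other. The instantiations differ in a way worth noting. The paper compares the pairs $(\predictor,\predictor)$ and $(\predictor,\predictornew)$ where $\predictornew$ is miscalibrated with support $\{0.5\pm\varepsilon\}$, so the continuity bound is $O(\varepsilon)$ and the contradiction $\Theta(1)=O(\varepsilon)$ requires taking $\varepsilon$ small; it computes $\inforGap{\predictor,\predictornew}=\Theta(1)$ via \Cref{thm:strong duality miscalibrated} and the generalized SCDF representation (\Cref{lem:idf miscali}). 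You instead take the second coordinates $\predictornew$ (truthful) and $\predictornew\primed$ (label-flipped) to have \emph{exactly} the same marginal $\tfrac12\Dirac(0)+\tfrac12\Dirac(1)$, so the continuity bound is exactly $0$ and no limiting argument is needed; and you lower-bound $\inforGap{\predictor\primed,\predictornew\primed}$ by exhibiting an explicit decision problem rather than invoking the duality machinery. Both versions exploit the same asymmetry: continuity sees only prediction marginals while $\inforGap{\cdot,\cdot}$ depends on $\truePredic$. Your route is marginally more elementary and self-contained.

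One small point to tighten: with $\agentU(a_0,\cdot)=0$ and $\agentU(a_1,1)=1$, $\agentU(a_1,0)=0$, the best response to the prediction $\prediction=0$ is not unique (both actions give interim expected utility $0$), so $\agentExpPay{\predictornew\primed}=0$ relies on breaking the tie toward $a_0$; if the agent breaks it toward $a_1$ he actually collects utility $1$ on that branch. Since $\inforGap{\cdot,\cdot}$ is a supremum over decision problems, this is repaired by setting $\agentU(a_0,\cdot)=\varepsilon'$ for a tiny $\varepsilon'>0$, which makes $a_0$ the strict best response at $\prediction=0$ and keeps $a_1$ strictly optimal at $\prediction=1/2$ and $\prediction=1$, yielding $\inforGap{\predictor\primed,\predictornew\primed}\ge \tfrac12-\tfrac{\varepsilon'}{2}$. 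With that adjustment the argument is complete.
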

\begin{proof}
We prove the proposition statement by contradiction. Suppose there exists informativeness measure $\Dist{\cdot,\cdot}$ that satisfies all three properties. Fix a sufficiently small $\varepsilon > 0$.
We consider the following two predictors:
\begin{itemize}
    \item Predictor $\predictor$ is perfectly calibrated with a point mass on $\prediction = 0.5$;
    \item Predictor $\predictornew$ is miscalibrated and satisfies that $\PDFNew(0.5+\varepsilon) = \PDFNew(0.5-\varepsilon) = 0.5$, and its true expected outcome function satisfies $\truePredic_\predictornew(0.5+\varepsilon) = 0$, and  $\truePredic_\predictornew(0.5 - \varepsilon) = 1$.
\end{itemize}
By construction, the informativeness gap satisfies
\begin{align*}
    \inforGap{\predictor,\predictor} &= 0
   \intertext{and}
    \inforGap{\predictor,\predictornew} & = \earthDistFlow{\PDF,\PDFNew,\truePredic_\predictor,\truePredic_\predictornew}
    \\
    &= 
    2\cdot \max_{t\in [0, 1]} ~
    \left(\SCDF(t) + \int_0^t (\prediction - \truePredic_\predictor(\prediction))\cdot \PDF(\prediction)\, \dd \prediction \right)
    - 
    \left(\SCDFNew(t) + \int_0^t (\prediction - \truePredic_\predictornew(\prediction)) \cdot \PDFNew(\prediction)\, \dd \prediction\right)
    \\
    &= \Theta(1)
\end{align*}
where the first equality holds due to \Cref{thm:strong-duality-miscalibrated}, the second equality holds due to \Cref{lem:idf miscali}, and the third equality holds by algebra.
Due to the approximate completeness property and the approximate soundness property, the informativeness measure $\Dist{\cdot,\cdot}$ satisfies
\begin{align*}
    \Dist{\predictor,\predictor} = 0
    \;\;
    \mbox{and}
    \;\;
    \Dist{\predictor,\predictornew} = \Theta(1)
\end{align*}
Therefore, 
\begin{align*}
    \Theta(1) = 
    \abs{\Dist{\predictor,\predictornew}-\Dist{\predictor,\predictor}}
    =
    O(\EMD{\PDF,\PDF} + \EMD{\PDFNew,\PDF}) = O(\varepsilon)
\end{align*}
which is a contradiction. 
(Here the second equality holds due to the continuity property.)
This completes the proof of \Cref{prop:impossibility}.
\end{proof}

\section{Sample Efficiency for Estimating \texorpdfstring{$\inforGap{\predictor, \predictornew}$}{Informativeness Gap}}
\label{sec:sample complexity}

In practice, we may not have direct access to the prediction distribution $\PDF$ or the true expected outcome function $\truePredic_\predictor$ of a given predictor $\predictor$. Instead, we often only have sample access. Motivated by this, we study in this section the sample complexity of estimating informativeness gap $\inforGap{\cdot,\cdot}$.
Our main result shows that informativeness gap $\inforGap{\cdot,\cdot}$ can be estimated sample-efficiently under the prediction-only access model, defined as follows:\footnote{In addition to the prediction-only access model, prior work \citep{DKRRY-21,BGHN-23} also considers a more powerful alternative, known as the \emph{sample-access model}, which allows sampling of feature–prediction–label triples. We also remark that our proposed informativeness gap $\inforGap{\cdot,\cdot}$ is inherently defined under the prediction-only access model. This stands in contrast to the ground-truth distance from calibration proposed in \cite{BGHN-23}, which is defined under the more powerful sample-access model.}

\begin{definition}[Prediction-only access model, adopted from \citealp{BGHN-23}]
Given a (possibly miscalibrated) predictor $\predictor$, the \emph{prediction-only access model} assumes the existence of an oracle that samples prediction-label pairs $(\prediction, \outcome)$ consistent with predictor $\predictor$. Specifically, prediction $\prediction$ is drawn from the prediction distribution $\PDF$, and conditional on $\prediction$, the outcome $\outcome$ is drawn from a Bernoulli distribution $\Bern(\truePredic_\predictor(\prediction))$ with mean equal to the true expected outcome $\truePredic_\predictor(\prediction)$.
\end{definition}

\begin{restatable}[Sample complexity]{theorem}{thmSampleMiscali}
    \label{thm:sample miscali}
    Fix any two (possibly miscalibrated) predictors $\predictor, \predictornew$ and any $\varepsilon,\delta \in(0, 1)$. Define $n \triangleq \Theta(\frac{1}{\varepsilon^2}\ln\frac{1}{\delta})$. With $n$ i.i.d.\ prediction-label samples $\{(\prediction_i^{\predictor},\outcome_i^{\predictor})\}_{i\in[n]}$ from predictor $\predictor$ and $n$ i.i.d.\ prediction-label samples $\{(\prediction_i^{\predictornew},\outcome_i^{\predictornew})\}_{i\in[n]}$ from predictor $\predictornew$, define estimator~$\empricaldInforGap{\predictor, \predictornew}$ as:\footnote{Operator $\plus{\cdot} \triangleq \max\{0, \cdot\}$.}
    \begin{align*}
    \empricaldInforGap{\predictor, \predictornew} \triangleq
        \max\limits_{t\in[0, 1]}~
        \frac{1}{n}\sum\limits_{i\in[n]} 
        \left(\plus{t - \prediction_i^{\predictor}} + 
        (\prediction_i^{\predictor} - \outcome_i^{\predictor})\cdot 
        \indicator{\prediction_i^{\predictor} \leq t}
        \right)
        -
        \left(\plus{t - \prediction_i^{\predictornew}} + 
        (\prediction_i^{\predictornew} - \outcome_i^{\predictornew})\cdot 
        \indicator{\prediction_i^{\predictornew} \leq t}
        \right)
    \end{align*}
    Then, informativeness gap $\inforGap{\predictor,\predictornew}$ and estimator~$\empricaldInforGap{\predictor, \predictornew}$ satisfy
    \begin{align*}
        \abs{\inforGap{\predictor,\predictornew} - \empricaldInforGap{\predictor, \predictornew}}\leq \varepsilon
    \end{align*}
    with probability at least $1 - \delta$.
\end{restatable}

For a miscalibrated predictor, the true expected outcome function can be highly non-smooth and, as a result, is unlikely to be estimated in a sample-efficient manner. However, although this function plays a crucial role in determining the agent's payoff (e.g., see \Cref{lem:interim utility expression}), we bypass this challenge in \Cref{thm:sample miscali} by avoiding the need to estimate the true expected outcome function directly.

\begin{proof}[Proof of \Cref{thm:sample miscali}]
Invoking \Cref{thm:strong-duality-miscalibrated}, the informativeness gap $\inforGap{\predictor,\predictornew}$ can be expressed as 
\begin{align*}
    \inforGap{\predictor,\predictornew} 
    &=
    \earthDistFlow{\PDF, \PDFNew,\truePredic_\predictor,\truePredic_\predictornew} 
    \\
    &= 
    2\cdot \max_{t\in [0, 1]} ~
    \left(\SCDF(t) + \int_0^t (\prediction - \truePredic_\predictor(\prediction))\cdot \PDF(\prediction)\, \dd \prediction \right)
    - 
    \left(\SCDFNew(t) + \int_0^t (\prediction - \truePredic_\predictornew(\prediction)) \cdot \PDFNew(\prediction)\, \dd \prediction\right)
    \\
\end{align*}
Note that 
\begin{align*}
    \SCDF(t) &= \int_0^t \CDF(\prediction)\,\dd\prediction = \expect[\prediction\sim\PDF]{\plus{t - \prediction}}
\intertext{and} 
    \int_0^t (\prediction - \truePredic_\predictor(\prediction))\cdot \PDF(\prediction)
    & = \expect[\prediction\sim\PDF]{{(\prediction- \truePredic_\predictor(\prediction))\cdot \indicator{\prediction\le t}}}
    \\    
    & = \expect[\prediction\sim\PDF]{{\left(\prediction- 
    \expect[\outcome\sim\Bern(\truePredic_\predictor(\prediction))]{\outcome}\right)\cdot \indicator{\prediction\le t}}}
    \\
    & = \expect[\prediction\sim\PDF]{\expect[\outcome\sim\Bern(\truePredic_\predictor(\prediction))]{(\prediction-\outcome)\cdot \indicator{\prediction\le t}}}
\end{align*}
where the last equality holds due to the law of total expectation. Similar expressions hold for predictor $\predictornew$. Thus, the informativeness gap $\inforGap{\predictor,\predictornew}$ can be expressed as 
\begin{align}
\label{eq:inforGap reformulation for estimating}
    \begin{split}
    \inforGap{\predictor,\predictornew} 
    & = 
    2\cdot \max_{t\in [0, 1]}~
    \left(
    \expect[\prediction\sim\PDF]{\expect[\outcome\sim\Bern(\truePredic_\predictor(\prediction))]{
    \plus{t - \prediction} + 
    (\prediction-\outcome)\cdot \indicator{\prediction\le t}
    }}
    \right.
    \\
    & \qquad\qquad\qquad \left.
    -
    \expect[\prediction\sim\PDFNew]{\expect[\outcome\sim\Bern(\truePredic_\predictornew(\prediction))]{
    \plus{t - \prediction} + 
    (\prediction-\outcome)\cdot \indicator{\prediction\le t}
    }}
    \right)
    \end{split}
\end{align}
Given the construction of $\empricaldInforGap{\cdot,\cdot}$, we conclude that $\empricaldInforGap{\predictor,\predictornew}$ is a consistent estimator of the informativeness gap $\inforGap{\predictor,\predictornew}$. 

We next analyze the sample complexity of estimator $\empricaldInforGap{\predictor,\predictornew}$. Note that in the reformulation~\eqref{eq:inforGap reformulation for estimating} of the informativeness gap $\inforGap{\predictor,\predictornew}$, the family of functions $\{\plus{t - \prediction}\}_{t \in[0, 1]}$ has a pseudo-dimension of $1$, and family of of functions $\{(\prediction-\outcome)\cdot \indicator{\prediction\leq t}\}_{t\in[0, 1]}$ also has a pseudo-dimension of $1$. Therefore, by the uniform convergence theorem (for bounded real-valued class), we have that when the number of samples, $n$, satisfies 
\begin{align*}
    n\geq \frac{16}{\varepsilon^2}\cdot \left(\ln (2e n) + \ln \frac{8}{\delta}\right)
\end{align*}
each of the following holds with probability at least $1-\frac{\delta}{2}$:
\begin{align*}
    \max_{t\in[0, 1]} \abs{
    \frac{1}{n}\sum\limits_{i\in[n]} 
    \plus{t - \prediction_i^{\predictor}} + 
    (\prediction_i^{\predictor} - \outcome_i^{\predictor})\cdot 
    \indicator{\predictor_i^{\predictor} \leq t}
    -
    \expect[\prediction\sim\PDF]{\expect[\outcome\sim\Bern(\truePredic_\predictor(\prediction)]{
    \plus{t - \prediction} + 
    (\prediction-\outcome)\cdot \indicator{\prediction\le t}
    }}
    } & \leq \frac{\varepsilon}{2}\\
    \max_{t\in[0, 1]} \abs{
    \frac{1}{n}\sum\limits_{i\in[n]} 
    \plus{t - \prediction_i^{\predictornew}} + 
    (\prediction_i^{\predictornew} - \outcome_i^{\predictornew})\cdot 
    \indicator{\predictor_i^{\predictornew} \leq t}
    -
    \expect[\prediction\sim\PDFNew]{\expect[\outcome\sim\Bern(\truePredic_\predictornew(\prediction)]{
    \plus{t - \prediction} + 
    (\prediction-\outcome)\cdot \indicator{\prediction\le t}
    }}
    } & \leq \frac{\varepsilon}{2}
\end{align*}
Thus, invoking the triangle inequality, we obtain
\begin{align*}
    \abs{\inforGap{\predictor,\predictornew} - \empricaldInforGap{\predictor, \predictornew}}\leq \varepsilon
\end{align*}
and completes the proof of \Cref{thm:sample miscali}.
\end{proof}

\section{Experiments}
\label{sec:numerical}

In this section, we conduct two set of experiments on real-world models and datasets. 
Our goal is to illustrate two practical benefits of the proposed informativeness gap (\InfoGap):
(1) it provides an alternative -- and more decision-relevant -- criterion for benchmarking LLMs' forecasting capabilities
(2) it offers a principled way to evaluate how ad hoc post-processing methods designed for reducing model miscalibration impact decision usefulness.

\subsection{Experiment Setup}
\label{subsec:experiment setup}
Throughout our experiments, we focus on binary prediction tasks with realized outcome $\outcome\in\{0, 1\}$. Each predictor outputs a probabilistic forecast $\prediction\in [0,1]$, interpreted as its belief that the event will occur, i.e., $\outcome = 1$. 

\xhdr{Datasets} 
We conduct experiments on two binary forecasting datasets: weather rain dataset and Bitcoin price dataset. The detailed description of the two datasets are as follows:
\begin{itemize}
    \item The weather rain data are obtained from public Kaggle dataset.\footnote{\url{https://www.kaggle.com/datasets/guillemservera/global-daily-climate-data}} It contains meteorological variables such as temperature, humidity, wind speed, cloud cover and pressure, along with an observed rain indicator. Based on these data, we construct two families of binary events.
\item The Bitcoin price data are obtained from CoinMarketCap.\footnote{\url{https://coinmarketcap.com/currencies/bitcoin/historical-data/}} 
    We use daily historical records retrieved from 2024-10-04 to 2025-11-06, including daily historical records with timestamps, open/close prices and daily high/low prices. 
\end{itemize}
For each dataset, we construct a collection of event templates to generate labeled binary outcomes; see \Cref{apx:experimental process} for the event definitions and threshold selections.

\xhdr{Models} We evaluate four LLMs as probabilistic predictors: GPT-4o mini \citep{MLZW-24}, DeepSeek-V3.2 \citep{LMLX-25}, Gemini 2.0 Flash-Lite \citep{Gemini-25}, Qwen-Plus \citep{Qwen-25}. Throughout the paper, we refer to these models using shortened names: {\ChatGPT} (GPT-4o-mini), {\DeepSeek} (DeepSeek-V3.2), {\Gemini} (Gemini 2.0 Flash-Lite) and {\Qwen} (Qwen-Plus).

\xhdr{Metrics} In addition to informativeness gap, we report two other standard metrics that capture different dimensions of probabilistic predictive quality. 
\begin{itemize}
    \item We use Brier score ($\textsc{BS}$) to quantify the deviation between probabilistic predictions and true outcomes.
Given a label set $\{\outcome_i\}_{i\in[n]}$ and corresponding prediction set $\{\prediction_i\}_{i\in[n]}$, the Brier score of a LLM $\predictor$ is defined as: 
    \begin{align*}
        \brierScore{\predictor}=\frac{1}{n}\sum\nolimits_{i\in[n]}(\prediction_i-\outcome_i)^2~.
    \end{align*}
In essence, it measures the squared deviation of predicted probabilities from the realized outcomes; thus, lower values indicate better performance.
    \item 
    We use ECE to quantify the degree of miscalibration in predicted probabilities.
To compute ECE in our setting, we further derive a discrete and computational form. Given the prediction set $\{\prediction_i\}_{i\in[n]}$ along with their frequency function $\frequency_\predictor(\cdot)$ and the empirical outcome frequency $\truePredic_\predictor(\cdot)$, the ECE of a LLM $\predictor$ is computed as:
    \begin{align*}
        \ECE{\predictor}=\sum\nolimits_{i\in[n]} \frequency_\predictor(\prediction_i)\cdot\abs{\prediction_i-\truePredic_\predictor(\prediction_i)}~.
    \end{align*}
    Intuitively, it aggregates the absolute gaps $\abs{\prediction_i-\truePredic_\predictor(\prediction_i)}$ weighted by the empirical frequency $\frequency_\predictor(\prediction_i)$, and small values indicate better calibration.
\end{itemize}
See \Cref{apx:experimental process} for the detailed experimental process.

\subsection{Experiment I: \InfoGap\ to Benchmark LLMs}
\label{subsec:benchmark}

Experiment I investigates whether \InfoGap\ offers an alternative yet more decision-relevant benchmark for evaluating predictors' forecasting performance, particularly when standard metrics provide limited guidance.

A priori, {\InfoGap} need not induce a total ordering: it is possible that both $\InfoGap[\predictor,\predictornew]$ and $\InfoGap[\predictornew,\predictor]$ are large, meaning that each predictor can outperform the other on different downstream tasks.
This motivates our first question: how often does {\InfoGap} nevertheless yield a clear, actionable deployment choice in practice?
To answer it, we begin by quantifying how frequently predictor pairs are comparable under {\InfoGap} and  find that in most cases they are indeed comparable.
We then use two representative cases to illustrate the mechanisms driving these comparisons.

\xhdr{How comparable \InfoGap\ is in practice} 
We introduce an {\InfoGap} tolerance (\infogapTolerance) $\IGtol\in[0,1]$, interpreted as a tolerance level, to quantify the extent to which pairs of LLM predictors are comparable under our proposed \InfoGap, aggregated across all events.
\begin{definition}
Two predictors $\predictor$ and $\predictornew$ are {\em $\IGtol$-comparable} if 
\begin{align*}
    \min\{\inforGap{\predictor, \predictornew}, \inforGap{\predictornew, \predictor}\}\le \IGtol
\end{align*}
and they are {\em strictly $\IGtol$-comparable} if 
\begin{align*}
    \min\{\inforGap{\predictor, \predictornew}, \inforGap{\predictornew, \predictor}\}\le \IGtol
    \;\;
    \mbox{and}
    \;\;\max\{\inforGap{\predictor, \predictornew}, \inforGap{\predictornew, \predictor}\}> \IGtol
\end{align*}
\end{definition}
This definition distinguishes two regimes: (1) both directional gaps are at most $\IGtol$, in which case neither predictor can achieve more than $\IGtol$ payoff advantage over the other across decision tasks (so they are effectively interchangeable under {\InfoGap}); 
or (2) exactly one direction is at most $\IGtol$ while the other exceeds it, in which case one predictor is never much worse than the other across tasks, yet the other direction can exhibit a substantial advantage, providing a clear directional preference.
Figure \ref{fig:IG tolerance} reports the fraction of comparable (red) and strictly comparable (blue) LLM pairs aggregated across all events.

\begin{figure}[H]
  \centering
  \includegraphics[width=0.7\textwidth]{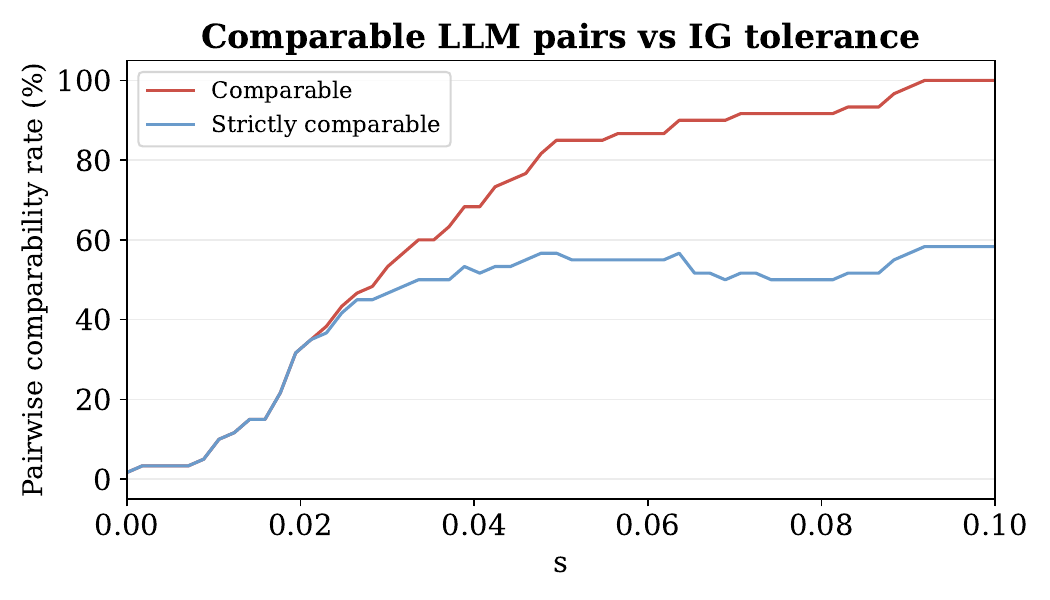}
  \caption{\textbf{{\infogapTolerance} threshold and pairwise comparability.} The plot reports the fraction of LLM pairs that are comparable (red) and strictly comparable (blue) as the {\infogapTolerance} threshold $\IGtol$ increases, aggregated over 10 events (4 LLMs per event, $\binom{4}{2}=6$ pairs; 60 pairs total). 
  }
  \label{fig:IG tolerance}
\end{figure}

The red curve suggests that approximate comparability is common even under a small tolerance: around $\IGtol = 0.05$, more than $80\%$
$\IGtol$-comparable, and by $\IGtol = 0.1$ essentially all pairs in our experiments are comparable. Put differently, while exact comparability at $\IGtol = 0$ is rare in our evaluated setting, allowing a modest {\infogapTolerance} threshold quickly yields meaningful comparisons for most model pairs. 
This observation is consistent with the fact that the classical Blackwell order is an exact, task-uniform notion of informativeness (it defines a partial order and requires one predictor to be weakly better for every downstream decision problem). 
Our results should not be read as challenging the relevance of Blackwell dominance; rather, they suggest that in practical evaluations of predictors, an approximate notion of comparability can be useful for producing actionable comparisons.

Beyond comparability, we also ask whether the comparisons are strict or not, i.e., whether the high comparability rate is driven merely by near-ties. 
The blue curve reports the fraction of strictly $\IGtol$-comparable pairs. Even at $\IGtol = 0.1$, nearly $60\%$ of LLM pairs are strictly comparable, indicating that for a substantial fraction of model pairs, 
{\InfoGap} yields a clear directional preference rather than declaring the two predictors essentially interchangeable. Overall, these results suggest that a small tolerance $\IGtol$ makes {\InfoGap} broadly comparable while still preserving directional discrimination for many predictor pairs in our experiments.

\xhdr{Representative case studies} 
We next analyze two representative cases that illustrate how \InfoGap\ provides additional insight when Brier score and ECE provide conflicting results (\textbf{Case I}) or are nearly tied (\textbf{Case II}). 
We select one case from the weather rain dataset and the other from the Bitcoin price dataset.

\indent \textbf{Case I: Brier score and ECE conflict.}
Figure \ref{fig:bs and ece of weather} reports results for weather single-day event. We observe a metric disagreement between two LLM predictors, {\Gemini} and {\Qwen}. Concretely, {\Gemini} has a Brier score of $0.232$ compared to $0.252$ for {\Qwen}, while {\Qwen} has an ECE of $0.358$ compared to $0.365$ for {\Gemini}. This example shows that Brier score and ECE may lead to opposite conclusions for the same pair of predictors, reflecting their emphasis on different aspects of predictive quality (overall probabilistic accuracy versus calibration). 
In contrast, the pairwise \InfoGap\ comparison in Figure \ref{fig:infogap of weather} gives a clear separation between the two models:
$\inforGap{\Gemini, \Qwen}=0.215$ versus $\inforGap{\Qwen, \Gemini}=0.082$.
Thus, even when standard metrics point disagree, 
{\InfoGap} can serve as a complementary criterion by quantifying how large a decision-relevant advantage one predictor may have over the other across tasks.

\begin{figure}
  \centering
  \subfloat[Weather: single-day rain event]{\includegraphics[width=0.48\linewidth]{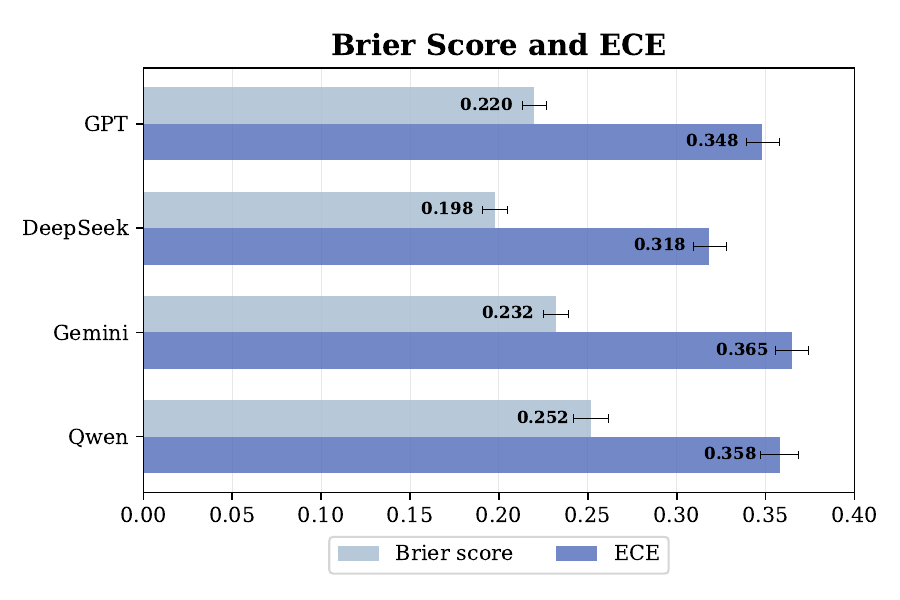}\label{fig:bs and ece of weather}}\hspace{3mm}
  \subfloat[Bitcoin: \$100 price-increase event]{\includegraphics[width=0.48\linewidth]{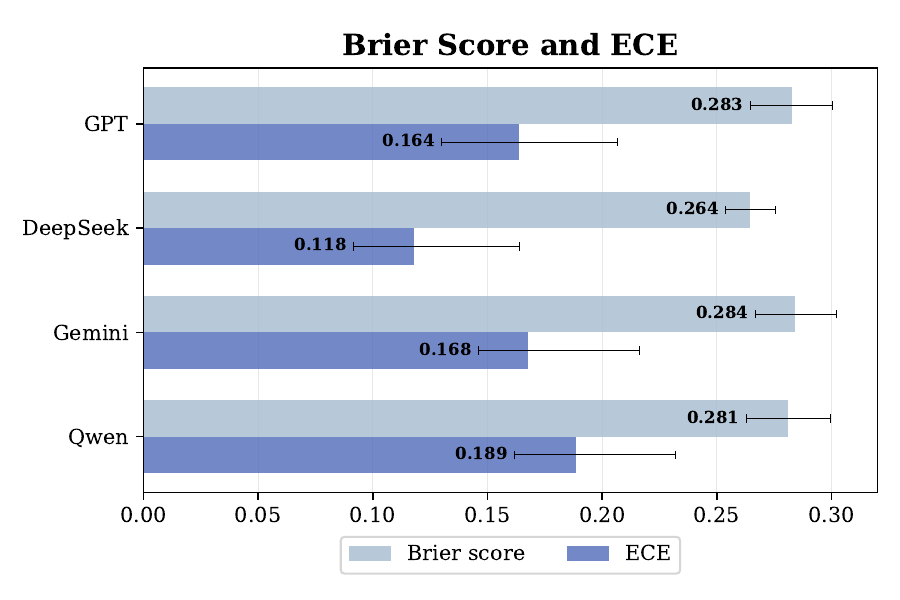}\label{fig:bs and ece of bitcoin}}
  \caption{\textbf{Brier score and ECE across different LLMs.} We report Brier score and expected calibration error (ECE) of four LLM predictors on two representative events: single-day rain event on weather rain dataset (left) and absolute increase threshold of \$100 on Bitcoin price dataset (right). Error bars show 95\% confidence intervals computed via bootstrap resampling of the obtained $\{\prediction_i, \outcome_i\}$ pairs.}
  \label{fig:bs and ece of both}
\end{figure}

\indent \textbf{Case II: Brier and ECE nearly tied.}
Figure \ref{fig:bs and ece of bitcoin} reports results for Bitcoin event with an absolute threshold of $\$100$. In this setting, both Brier score and ECE metrics are almost the same for two models, {\ChatGPT} and {\Gemini}. Specifically, {\ChatGPT} has a Brier score of $0.283$ compared to $0.284$ for {\Gemini}, while {\ChatGPT} has an ECE of $0.164$ compared to $0.168$ for {\Gemini}, making it difficult to separate the two predictors using standard metrics. In contrast, 
the pairwise \InfoGap\ comparison in Figure \ref{fig:infogap of bitcoin} produces a more pronounced directional distinction:
$\inforGap{\Gemini, \ChatGPT}=0.118$ and $\inforGap{\ChatGPT, \Gemini}=0.042$. This example highlights that \InfoGap\ can provide finer resolution even when both Brier score and ECE are inconclusive.
\begin{figure}
  \centering
  \subfloat[Weather: single-day rain event]{\includegraphics[width=0.4\linewidth]{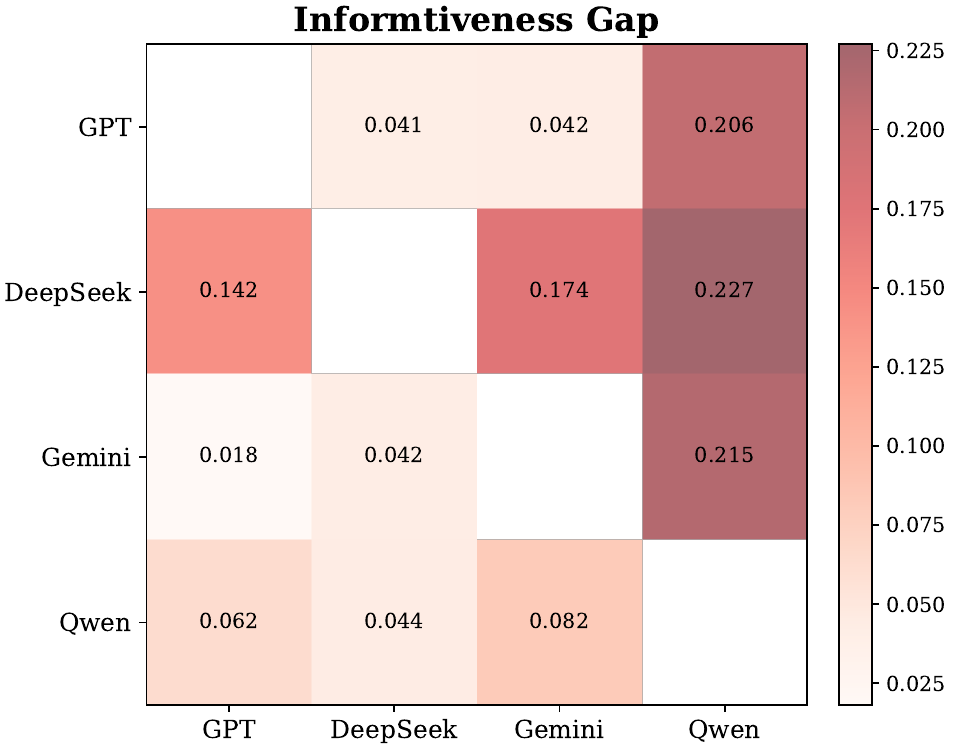}\label{fig:infogap of weather}}\hspace{5mm}
  \subfloat[Bitcoin: \$100 price-increase event]{\includegraphics[width=0.4\linewidth]{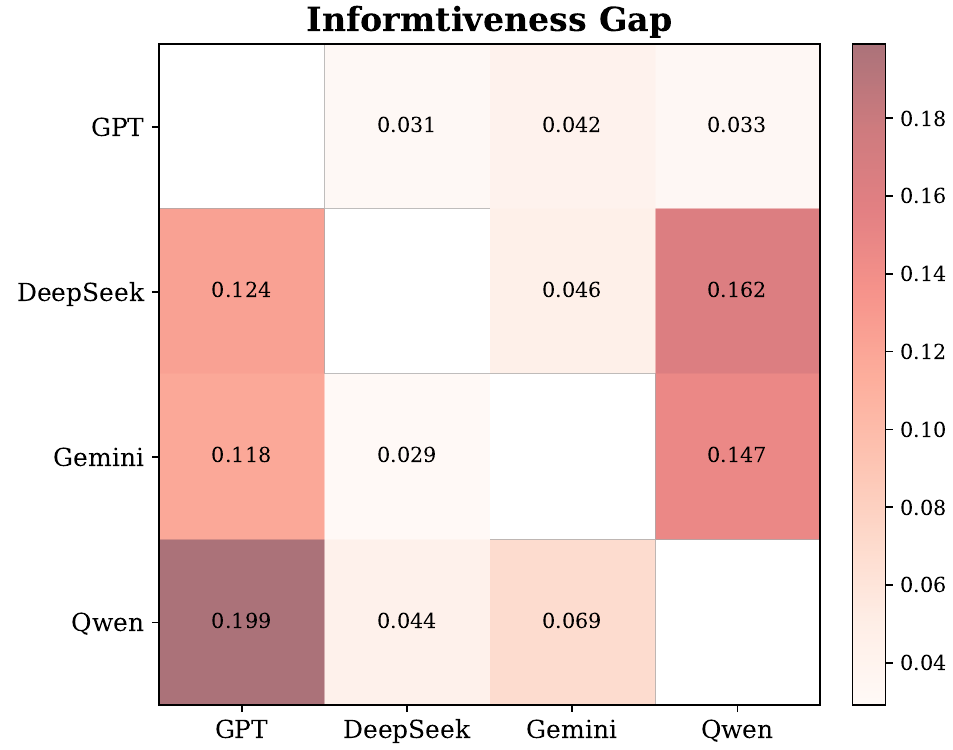}\label{fig:infogap of bitcoin}}
  \caption{\textbf{{\InfoGap} across LLMs.} Heatmaps compare the (directional) {\InfoGap} values among four LLMs for the same two events as in Figure \ref{fig:bs and ece of both}. Each entry $(i, j)$ reports $\inforGap{\predictor_i,\predictor_j}$, with larger values indicating a larger decision-relevant advantage of $\predictor_i$ over $\predictor_j$.}
  \label{fig:infogap of both}
\end{figure}

\xhdr{Why \InfoGap\ resolves both cases} 
Brier score and ECE capture different aspects of predictive quality: the former evaluates overall probabilistic accuracy, while the latter measures how closely predicted probabilities align with empirical outcome frequencies conditional on the forecast.
As a result, these metrics can disagree when accuracy and calibration trade off (as in \textbf{Case I}), or become inconclusive when two models perform nearly the same under both metrics (as in \textbf{Case II}).
In both scenarios, our results show that the informativeness gap ({\InfoGap}) offers additional, decision-relevant insights.

We interpret these insights through two complementary tools: the {\generalizedSCDF} characterization (see \Cref{thm:strong-duality-miscalibrated} and \Cref{def:ca-scdf}) of {\InfoGap} and reliability diagrams that highlight miscalibration around thresholds relevant to decision-making.
Intuitively, a predictor $\predictor$ that has higher {\generalizedSCDF} curve than another predictor $\predictor$ at some threshold $t$, then predictor $\predictor$ yields a higher decision payoff for the decision task whose indirect utility is $V$-shaped with threshold $t$.
More generally, the pairwise \InfoGap\ between two predictors is governed by the maximum vertical separation between their {\generalizedSCDF} curves over thresholds $t\in[0,1]$.

In \textbf{Case I}, the {\generalizedSCDF} curves of {\Qwen} and {\Gemini} intersect near $t=0.5$, indicating that neither predictor uniformly dominates the other across all threshold: {\Qwen} is favored at lower thresholds, whereas {\Gemini} is favored at higher ones. In particular, {\Qwen} reaches its largest advantage over {\Gemini} at $t^\dag=0.23$, where the vertical gap is approximately $0.041$, yielding $\inforGap{\Qwen, \Gemini}=0.082$. The ordering reverses in higher-threshold region where the separation becomes substantially more pronounced: it peaks at $t^*=0.65$ with a vertical gap reaches $0.1077$, implying $\inforGap{\Gemini, \Qwen}=0.215$. Since {\Gemini}'s maximal advantage substantially exceeds {\Qwen}'s in the opposite direction, the \InfoGap\ criterion ultimately favors {\Gemini}. 

To further explain the behavior near $t^*=0.65$, we inspect the reliability diagrams in Figure \ref{single day RD: gemini and qwen}. Empirically, {\Qwen} assigns noticeably more weight in the decision-flipping quadrants, while {\Gemini} concentrates relatively more mass in the decision-consistent quadrants, aligning with the larger directional gap $\inforGap{\Gemini,\Qwen}$. Taken together, {\generalizedSCDF} identifies the threshold region that maximally separates the two predictors, and the reliability diagram provides complementary and decision-level evidence for why this separation concentrates around $t^*$. Consequently, \InfoGap\ remains decisive even when Brier score and ECE lead to conflicting rankings.

\begin{figure}[H]
  \centering
  \subfloat[Weather: single-day rain event]{
    \includegraphics[width=0.45\linewidth]{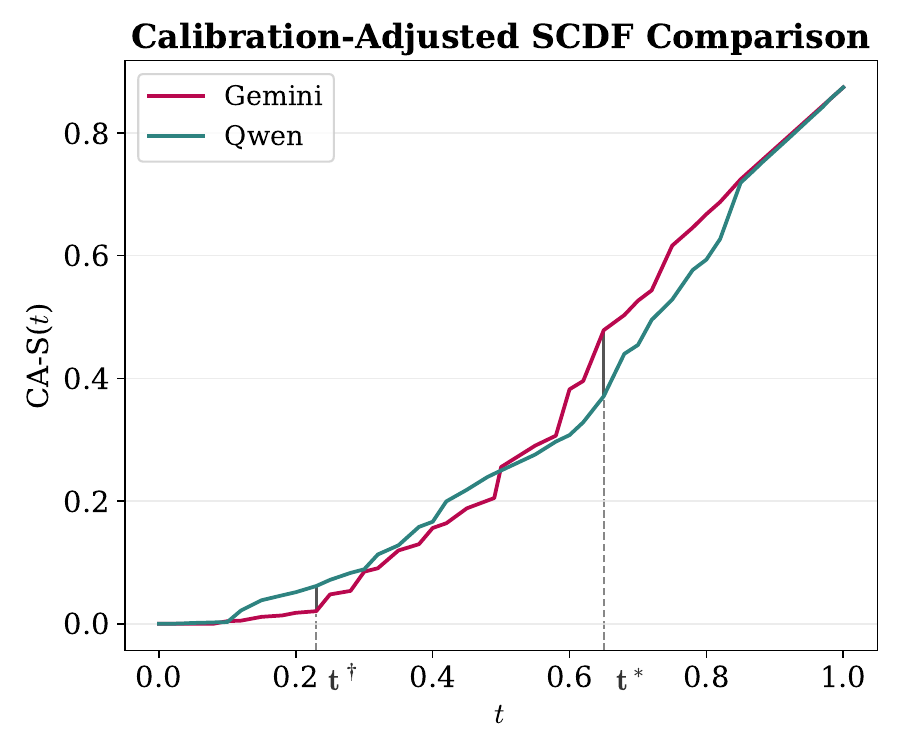}
    \label{single day scdf gemini qwen}
    }\hspace{6mm}
  \subfloat[Bitcoin: \$100 price-increase event]{
    \includegraphics[width=0.45\linewidth]{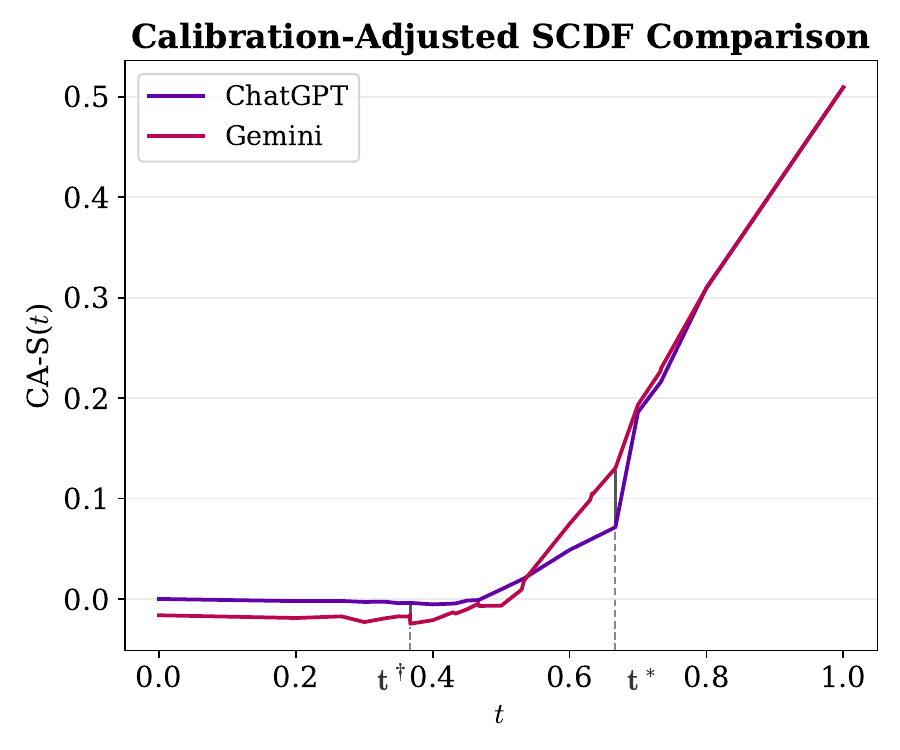}
    \label{abs 100 scdf: gemini and gpt}
    }
  \caption{\textbf{{\ProGeneralizedSCDF} curves and {\InfoGap} maximizer.} Each panel plots the {\generalizedSCDF} for two LLMs on the weather and Bitcoin tasks. The pairwise {\InfoGap} is characterized by the maximum vertical separation between the two curves over thresholds $t$, and the dashed line marks the maximizer $t^*$ where the two LLMs are most distinguishable.}
  \label{CA-SCDF}
\end{figure}

In \textbf{Case II}, despite both {\ChatGPT} and {\Gemini} have similar Brier score and ECE, the {\generalizedSCDF} still reveals meaningful region-dependent differences. For small thresholds $t$, the two curves remain very close with {\Gemini} slightly below {\ChatGPT} in this region, suggesting that the two predictors behave similarly in the low-$t$ region and are weakly distinguishable there. As $t$ increases, a clearer asymmetry emerges: {\ChatGPT} attains its largest advantage over {\Gemini} at $t^\dag=0.367$, where the vertical gap is approximately $0.021$, corresponding to $\inforGap{\ChatGPT, \Gemini}=0.042$. In the opposite direction, {\Gemini} exhibits a markedly stronger separation, peaking at $t^*=0.667$ with a vertical gap of $0.059$, which implies $\inforGap{\Gemini,\ChatGPT}=0.118$. Comparing the two directional maxima shows that {\Gemini}'s advantage is substantially stronger, so the {\InfoGap} criterion prefers {\Gemini}
even though the two standard metrics are numerically inconclusive.

To interpret why the dominant separation concentrates near $t^*=0.667$, we again inspect the reliability diagram in Figure \ref{abs 100 RD: gemini and gpt}. We observe that {\ChatGPT} places more weight in the decision-flipping quadrants, whereas {\Gemini} concentrates relatively more mass in the decision-consistent quadrants. This local imbalance near $t^*$ aligns with the {\generalizedSCDF} attaining its maximal separation near $t^*$ and provides a decision-level explanation for why $\inforGap{\Gemini, \ChatGPT}$ exceeds $\inforGap{\ChatGPT,\Gemini}$ in this case.

\begin{figure}
  \centering
  \subfloat[Weather: single-day rain event]{
    \includegraphics[width=0.75\linewidth]{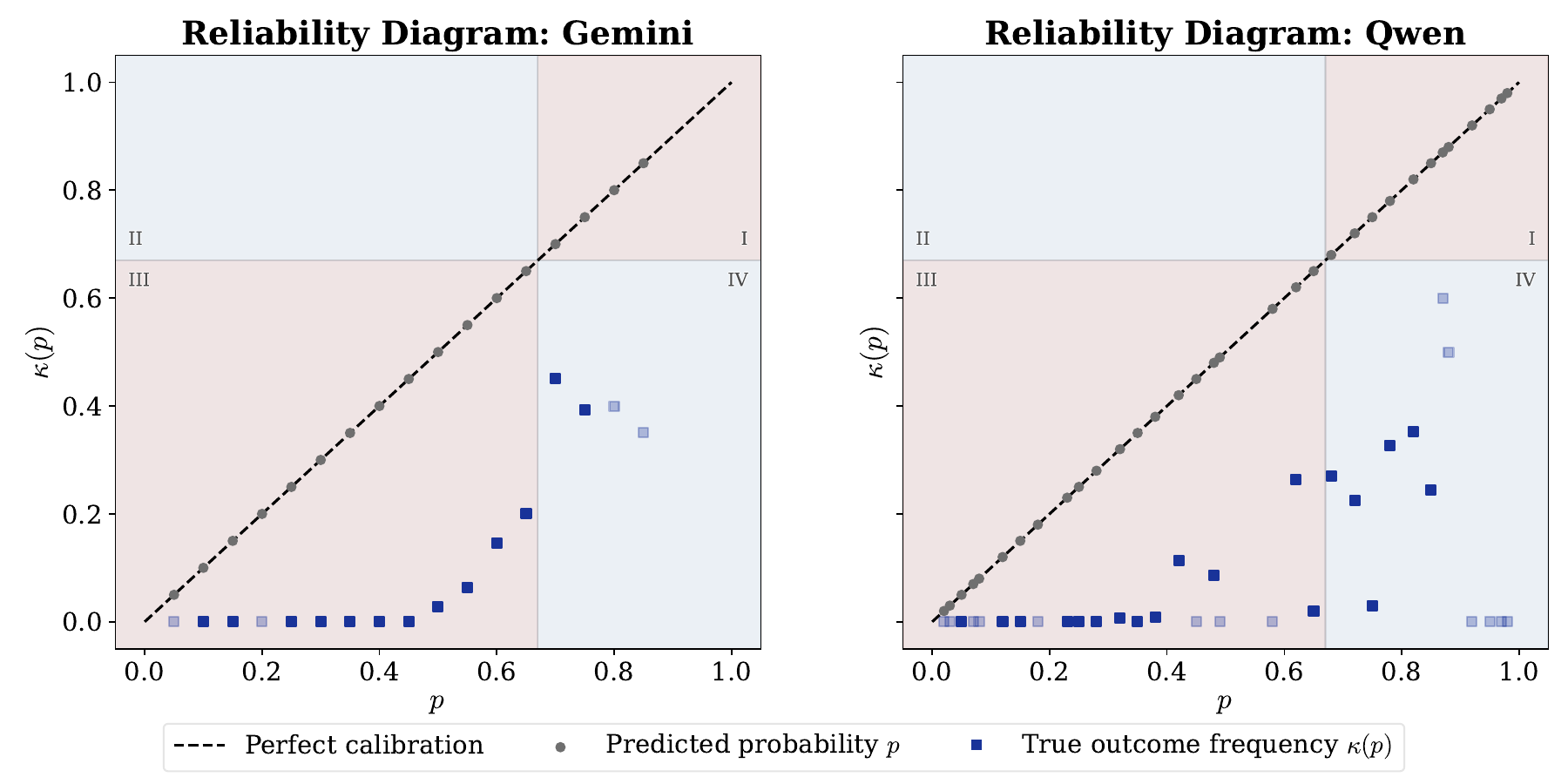}
    \label{single day RD: gemini and qwen}
    }\hspace{0.5mm}
  \subfloat[Bitcoin: \$100 price-increase event]{
    \includegraphics[width=0.75\linewidth]{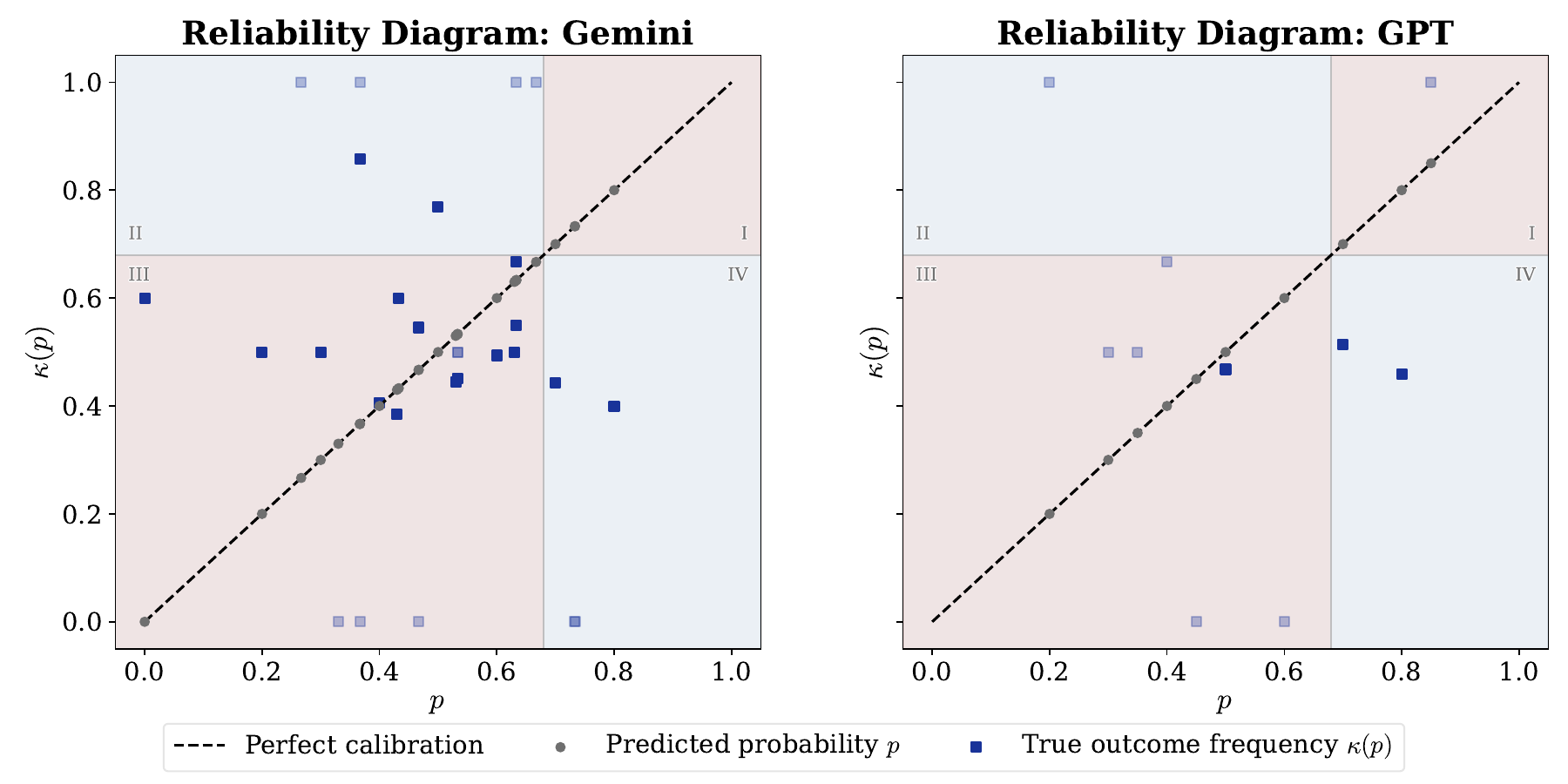}
    \label{abs 100 RD: gemini and gpt}
    }
  \caption{\textbf{Reliability diagram with decision-relevant quadrants.} Each panel plots the true outcome frequency $\truePredic(\prediction)$ (darker squares correspond to high frequency predictions, while the lighter ones represent the less frequent predictions) against predicted probability $\prediction$, and the dashed diagonal indicates perfect calibration ($\truePredic(\prediction)=\prediction$). For a reference threshold $t^*$, we draw a threshold cross at $\prediction=t^*$ and $\truePredic(\prediction)=t^*$, partitioning the unit square into four quadrants (I-IV). Quadrants I and III correspond to decision-consistent regions where $\prediction$ and $\truePredic(\prediction)$ lie on the same side of $t^*$, while Quadrants II and IV are decision-flipping regions where they fall on opposite sides. Specifically, the total prediction frequency of decision-relevant regions (Quadrants I and III) are 0.82 for {\Gemini} and 0.60 for {\Qwen} in Weather setting; 0.75 for {\Gemini} and 0.34 for {\ChatGPT} in Bitcoin setting.}
  \label{reliability diagram}
\end{figure}

\subsection{Experiment II: \InfoGap\ to Evaluate Post-processing Methods}
\label{subsec:post-processing}

Most calibration post-processing methods are designed to reduce ECE. In this experiment, we aim to examine their impact through the lens of {\InfoGap}: specifically, whether post-processing improves or degrades decision-relevant performance as measured by {\InfoGap}, and which method yields the best {\InfoGap} performance when applied to the same underlying probabilistic forecasts.

\xhdr{Post-processing methods and implementation} 
We evaluate six widely used post-processing methods: Bayesian binning, Histogram binning, Isotonic regression, Platt scaling, Scaling binning, and Temperature scaling. 
Each method requires a hold-out validation set to learn its recalibration mapping.In our experiments, we randomly split each dataset into a 45\% calibration set used to fit the post-processing mappings and a 55\% test set on which we evaluate the post-processed predictors.
Implementation details for each method are provided in \Cref{apx:post-processing}.

\xhdr{Impact of post-processing on {\InfoGap}} Figure \ref{infogap curve by methods} summarizes how different post-processing methods affect {\InfoGap}. For any fixed tolerance $\IGtol$,  one method outperforms another if its curve lies higher, meaning that it improves {\InfoGap} under threshold $s$ in a larger fraction of cases. Figure \ref{infogap curve by methods} reveals a clear ranking across methods: temperature scaling stands out as the weakest performer across all thresholds, followed by histogram binning. In contrast, scaling binning consistently yields the highest curve, indicating the most reliable improvement in {\InfoGap}. The remaining three methods - Bayesian binning, Isotonic regression and Platt scaling - are very close when the threshold is small, but become distinguishable as $\IGtol$ grows. It suggests that the relative advantage of these three methods depends on how much threshold we can accept. Notably, even for a moderate threshold, some methods still worsen {\InfoGap} on a substantial subset of cases in our experiments: at $\IGtol=0.05$, temperature scaling exceeds the threshold in nearly half of all the cases, and histogram binning in roughly $30\%$, indicating that post-processing can still degrade {\InfoGap} for many instances.

Overall, Figure \ref{infogap curve by methods} suggests that these post-processing methods for recalibration are not uniformly beneficial from the {\InfoGap} perspective. While scaling binning delivers the most consistent gains, several widely used post-processing methods can degrade {\InfoGap} on a substantial fraction of instances in our experiments, even when the threshold is moderately large. This highlights that selecting post-processing methods solely on ECE may be insufficient, especially when we primarily concern about the performance on decision-level tasks.

\begin{figure}[H]
  \centering
  \includegraphics[width=0.7\textwidth]{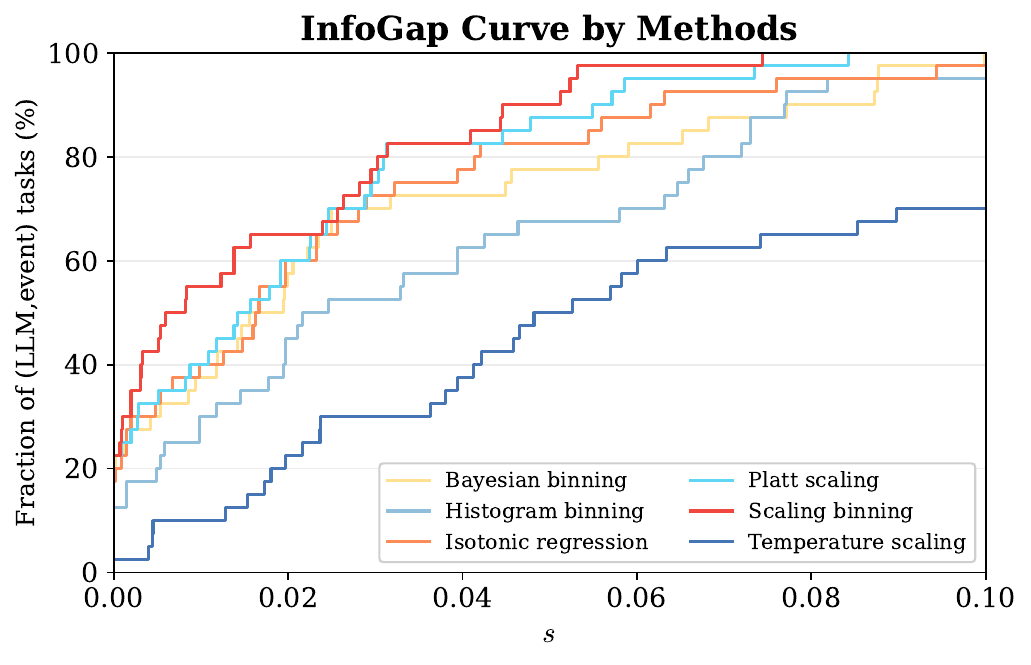}
  \caption{\textbf{Post-processing effects on {\InfoGap}.} For each post-processing method, we compute the directional change $\inforGap{\predictor^{raw},\predictor^{post}}$ for every (LLM, event) forecasting task, where $\predictor^{raw}$ denotes the original probabilities and $\predictor^{post}$ denotes the post-processed probabilities. The curve reports the fraction of LLMs whose $\inforGap{\predictor^{raw},\predictor^{post}}\leq\IGtol$ for each threshold $\IGtol$, and higher curves indicate reduced {\InfoGap} degradation.}
  \label{infogap curve by methods}
\end{figure}

\xhdr{{\InfoGap}-regularized post-processing} In the above experiment, we evaluate six widely used post-processing methods and illustrate how they affect our {\InfoGap}. These methods can be roughly categorized into parametric, non-parametric, and semi-parametric approaches. Parametric methods learn a transformation by optimizing an explicit objective (e.g. Platt scaling and temperature scaling). Non-parametric methods learn flexible mappings, for example via binning or monotone constraint (e.g. Bayesian binning, histogram binning and isotonic regression). Semi-parametric methods combine a scaling step with a subsequent binning step (e.g. scaling-binning). A key observation from the results above is that many post-processing methods designed to reduce calibration errors may simultaneously degrade {\InfoGap}. This raises a natural question that can we introduce a simple modification to these post-processing methods to avoid large {\InfoGap} deterioration?

To address this question, we propose a {\InfoGap}-regularized post-processing method. Since our modification operates by altering the optimizing objective, it is naturally most effective for methods that are fit through objective-based optimization, i.e. methods with a parametric component. Concretely, instead of minimizing negative log-likelihood (NLL) alone, we minimize a weighted combination of NLL and an {\InfoGap} penalty:
\begin{align*}
    \postParamEstim{\alpha}\in\argmin\nolimits_{\postParam\in\proPostParam}\left\{\hyperParam\cdot\nll{\predictor_\postParam^{post}}+(1-\hyperParam)\cdot\inforGap{\predictor^{ref},\predictor_\postParam^{post}}\right\}_\calibData,\quad\hyperParam\in[0,1]
\end{align*}
Here, $\calibData=\{\prediction_i,\outcome_i\}_{i\in[n]}$ denotes the calibration dataset of predicted probabilities and outcomes. The post-processed predictor $\predictor^{post}$ is obtained by applying the post-processing map to the original predictions on $\calibData$. The reference predictor $\predictor^{ref}$ is selected as a Bayes-updated predictor $\predictor^{Bayes}$, which is constructed by replacing each prediction $\prediction_i$ with its corresponding true outcome frequency $\truePredic(\prediction_i)$, yielding the paired data $\{\truePredic(\prediction_i),\outcome_i\}_{i\in[n]}$. The hyper-parameter $\hyperParam$ controls the tradeoff between likelihood-based optimization and decision-relevant informativeness preservation. 

In what follows, we replace the NLL objective with our {\InfoGap}-regularized objective for post-processing methods with a parametric component. In particular, for scaling binning, we implement its scaling step via Platt scaling in our experiment and keep the subsequent discretization step unchanged. The results below compare the original methods with their {\InfoGap}-regularized variants.

\begin{figure}[H]
  \centering
  \subfloat[Temperature scaling]{\includegraphics[width=0.32\linewidth]{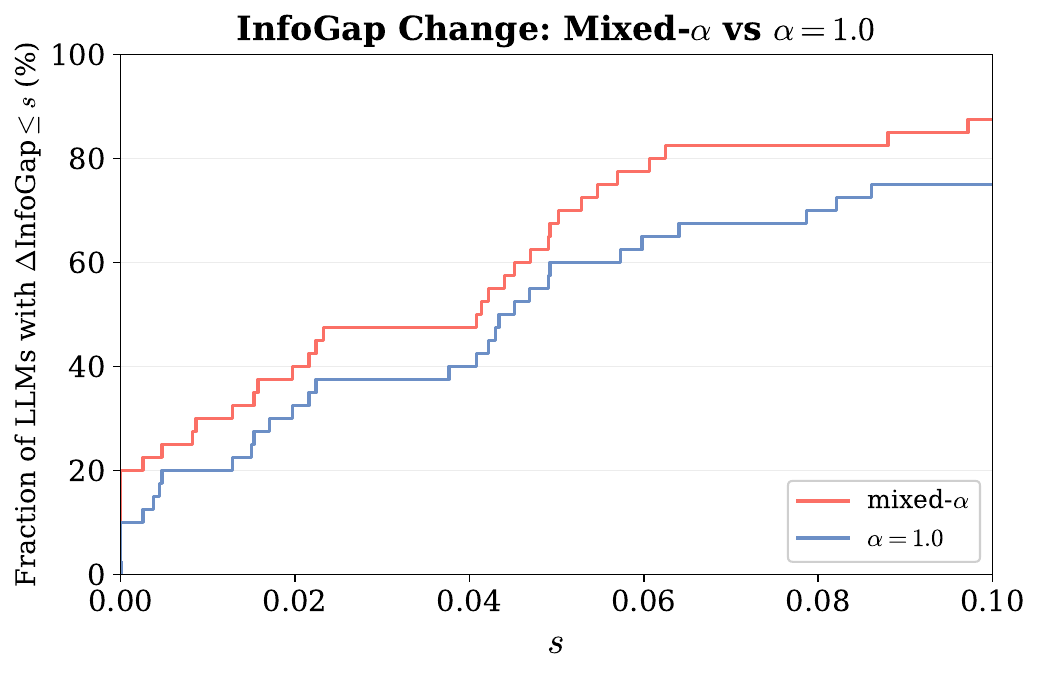}\label{temperature scaling}}\hspace{1mm}
  \subfloat[Platt scaling]{\includegraphics[width=0.32\linewidth]{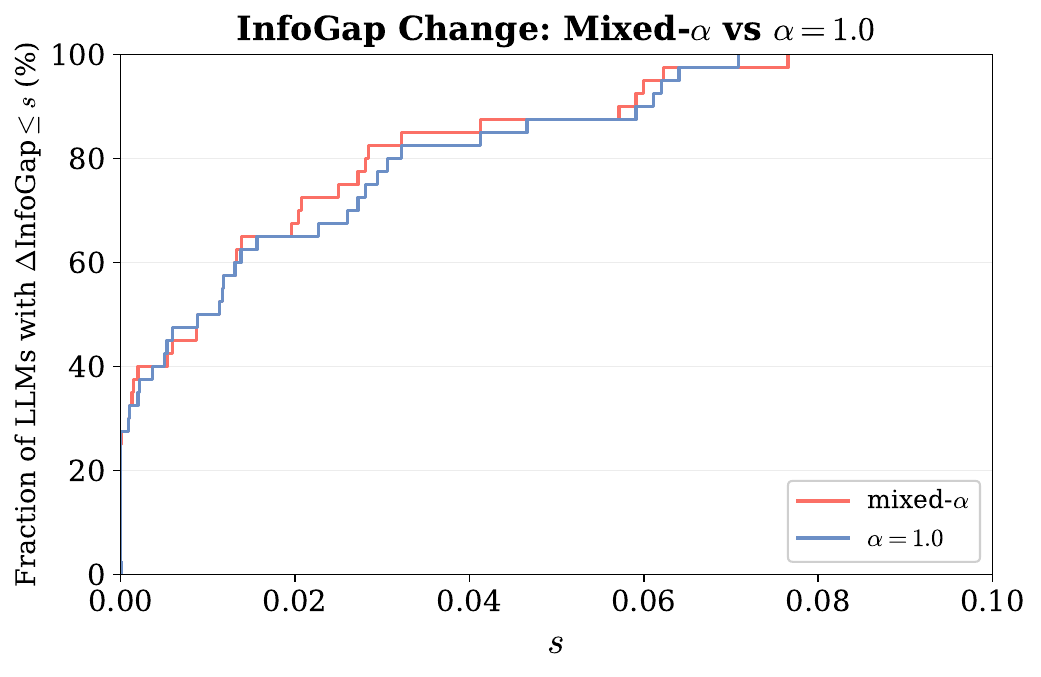}\label{platt scaling}}\hspace{1mm}
  \subfloat[Scaling binning]{\includegraphics[width=0.32\linewidth]{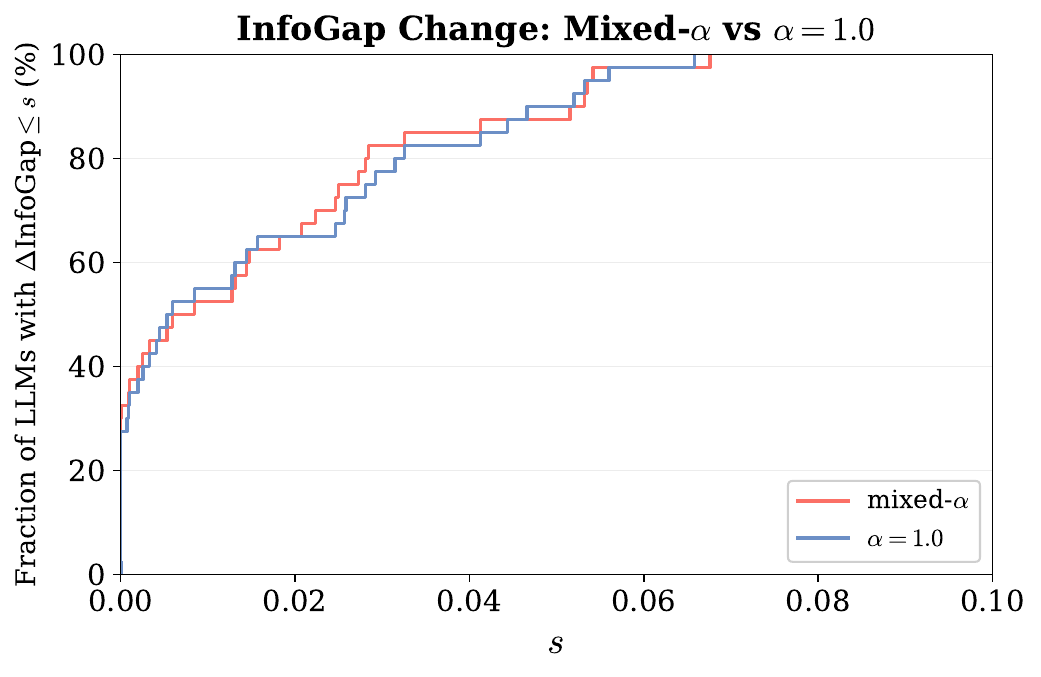}\label{scaling binning}}
  \caption{\textbf{Comparison of standard post-processing with {\InfoGap}-regularized variants.} We compare three post-processing methods trained with the standard NLL-only objective ($\hyperParam=1.0$) against our {\InfoGap}-regularized counterparts (mixed-$\hyperParam$). Here, ``mixed-$\hyperParam$'' means that we allow $\hyperParam$ to be chosen separately for each event, resulting in an event-specific set of $\hyperParam$.}
  \label{post-processing improvement}
\end{figure}

We start with temperature scaling, whose one-parameter optimization provides the most transparent testbed for comparing our {\InfoGap}-regularized method with the original post-processing method. In Figure \ref{temperature scaling}, the mixed-$\hyperParam$ curve consistently lies above the $\hyperParam=1.0$ (NLL-only) baseline across all the entire range of $\IGtol$, indicating a systematic reduction in {\InfoGap} harm across all LLMs. And for Platt scaling, which optimizes two parameters, the improvement is more moderate and depends on the threshold: the mixed-$\hyperParam$ curve exceeds than the baseline over a subset of $\IGtol$, meaning that for these thresholds our method yields less {\InfoGap} degradation for more LLMs, while the two approaches behave similarly over other regions. Finally, scaling binning adds an additional discretization step on top of the scaling step. Correspondingly, the two curves are close for many thresholds, yet the mixed-$\hyperParam$ variant still dominates the baseline for some $\IGtol$ intervals. Overall, Figure \ref{post-processing improvement} shows that {\InfoGap}-regularized method can mitigate the {\InfoGap} deterioration caused by standard post-processing, with the clearest gains observed for temperature scaling.

More broadly, the {\InfoGap}-regularized objective provides a simple and effective way to make post-processing more robust from a decision-relevant perspective. Across the three methods with an explicit optimization step, mixed-$\hyperParam$ strategy consistently shifts the {\InfoGap} curves upward, indicating reduced {\InfoGap} harm across LLMs. The effect is strongest for temperature scaling and weaker for scaling binning, which possibly suggest that the remaining discretization step can partially limit the performance of objective optimization. This points to an interesting direction of extending {\InfoGap}-relevant design beyond the scaling step, for example, to discretization choices or other nonparametric components.

\bibliographystyle{plainnat}
\bibliography{mybib.bib}

\newpage
\appendix

\section{Missing Proofs in \texorpdfstring{\Cref{sec:prelim}}{Section 2}}

\subsection{Proof of Proposition~\ref{prop:prelim:informativeness gap captures blackwell informativeness}}
\label{apx:prelim:informativeness gap captures blackwell informativeness proof}
\propInfoGAPandBW*
\begin{proof}
    The only if direction is trivial since the definition of Blackwell informativeness considers all decision problems. 
    
    For the if direction, 
    It suffices to show for all (possibly unbounded) convex (univariate) interim utility functions the expected utility under predictor $\predictor$ is weakly higher than the expected utility under predictor $\predictornew$. To see this, fix an arbitrary (possibly unbounded) convex interim utility function $\indirectU$. Define bounded convex interim utility function $\indirectU^{(k)}$ which is obtained by truncating the derivative of function $\indirectU$ at a large constant $k$ from top. Consider the sequence of bounded convex interim utility function $(\indirectU^{(k)})$ as $k$ goes to infinite. Observe that $\inforGap{\predictornew,\predictor} = 0$ implies that for every bounded convex interim utility function $\indirectU^{(k)}$, the expected utility under predictor $\predictor$ is weakly higher than the expected utility under predictor $\predictornew$, i.e., $\expect[\prediction\sim\PDF]{\indirectU^{(k)}(\prediction)} \geq \expect[\prediction\sim\PDFNew]{\indirectU^{(k)}(\prediction)}$. Invoking the dominated convergence theorem, we obtain $\lim_{k\rightarrow\infty}\expect[\prediction\sim\PDF]{\indirectU^{(k)}(\prediction)} = \expect[\prediction\sim\PDF]{\indirectU(\prediction)}$ and $\lim_{k\rightarrow\infty}\expect[\prediction\sim\PDFNew]{\indirectU^{(k)}(\prediction)} = \expect[\prediction\sim\PDFNew]{\indirectU(\prediction)}$. Hence, $\expect[\prediction\sim\PDF]{\indirectU(\prediction)} \geq \expect[\prediction\sim\PDFNew]{\indirectU(\prediction)}$. This completes the proof of \Cref{prop:prelim:informativeness gap captures blackwell informativeness}.
\end{proof} 
\subsection{Proof of Lemma~\ref{lem:Lipschitz convex indirect utility}}
\label{apx:Lipschitz convex indirect utility proof}
\lemLipConvexIndirectUtil*
\begin{proof}  
    According to \citet{KLST-23,HW-24}, for every decision problem (of the agent), its interim utility function is equivalent to a proper scoring function,
    which is piecewise linear with maximum absolute slope of $\max_{\prediction\in[0, 1]} |\tildeagentU(\prediction, 1) - \tildeagentU(\prediction, 0)|$. 
    The bounded utility difference $\max_{\prediction\in[0, 1]} |\tildeagentU(\prediction, 1) - \tildeagentU(\prediction, 0)| \le 1$ is equivalent to the 1-Lipschitzness of the univariate form of the interim utility function $\indirectU$. Finally, recall that a scoring function is proper if and only if its univariate form is convex \citep{Mcc-56}. This completes the proof of \Cref{lem:Lipschitz convex indirect utility}.
\end{proof}

\section{Missing Proofs in \texorpdfstring{\Cref{sec:misc}}{Section 4}}

\subsection{Proof of \texorpdfstring{\Cref{thm:strong-duality-miscalibrated}}{Theorem 4.1}}
\label{subsec:strong duality miscalibrated proof}
\label{apx:idf miscali proof}

Similar to the proof of \Cref{thm:strong-duality-perfectly-calibrated}, we first reformulate the program \eqref{eq:primal} as a program with treating the univariate form of indirect utility as the variables. 
\begin{lemma}
\label{lem:simplify miscali}
For any two perfectly calibrated predictors $\predictor, \predictornew$, their informativeness gap satisfies
\begin{align*}
    \inforGap{\predictor, \predictornew}= 
    \OBJ{\ProGOneMisC{\PDF,\PDFNew,\truePredic_\predictor,\truePredic_\predictornew}}
\end{align*}
where $\OBJ{\ProGOneMisC{\PDF,\PDFNew,\truePredic_\predictor,\truePredic_\predictornew}}$ is the optimal objective value of the following program with inputs $\dist\gets \PDF$, $\distNew\gets\PDFNew$, $\truePredic_1\gets \truePredic_\predictor$ and $\truePredic_2 \gets \truePredic_\predictornew$ (i.e., equal to two predictors' prediction distributions $\PDF,\PDFNew$ and true expected outcome functions $\truePredic_\predictor,\truePredic_\predictornew$):
\begin{align}
\label{eq:primal miscali}
    \arraycolsep=5.4pt\def\arraystretch{1}
    \tag{$\textsc{P}^\textsc{MisC}_1[\dist,\distNew,\truePredic_1,\truePredic_2]$}
    \begin{array}{rll}
    \sup
    \limits_{\indirectU, \delta:[0,1]\rightarrow\reals} 
    ~ &
    \displaystyle 
    \int_0^1
    \indirectU(\prediction) \cdot \left(\dist(\prediction) - \distNew(\prediction)\right)
    + \delta(\prediction)\cdot \left(
    (\truePredic_1(\prediction) - \prediction)\cdot \dist(\prediction) 
     - (\truePredic_2(\prediction) - \prediction) \cdot\distNew(\prediction) \right) \, \dd \prediction
    \vspace{1mm} & \text{s.t.}
    \\
     & 
    \delta(\prediction) \in[-1, 1]
    &  \prediction\in[0, 1]
    \vspace{1mm}
    \\
    & 
    \displaystyle 
    \indirectU(\prediction) - \indirectU(\predictionNew) \le \delta(\prediction) \cdot (\prediction - \predictionNew)
    &  \prediction, \predictionNew\in[0, 1]
    \vspace{1mm}
    \end{array}
\end{align}
\end{lemma}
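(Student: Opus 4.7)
The plan is to translate the definition of $\inforGap{\predictor,\predictornew}$ into the program $\ProGOneMisC{\PDF,\PDFNew,\truePredic_\predictor,\truePredic_\predictornew}$ by unpacking the interim utility representation of Lemma~\ref{lem:interim utility expression}. The starting observation is that the bivariate interim utility $\indirectU(\predictionNew;\prediction)=\predictionNew\cdot\tildeagentU(\prediction,1)+(1-\predictionNew)\cdot\tildeagentU(\prediction,0)$ is affine in $\predictionNew$. Introducing the slope $\delta(\prediction)\triangleq\tildeagentU(\prediction,1)-\tildeagentU(\prediction,0)$ and the univariate form $\indirectU(\prediction)=\indirectU(\prediction;\prediction)$, we get
\begin{align*}
\indirectU(\predictionNew;\prediction)=\indirectU(\prediction)+\delta(\prediction)\cdot(\predictionNew-\prediction).
\end{align*}
Substituting this into $\agentExpPay{\predictor}=\int_0^1 \PDF(\prediction)\,\indirectU(\truePredic_\predictor(\prediction);\prediction)\,\dd\prediction$ (and likewise for $\predictornew$) and subtracting yields exactly the objective of $\ProGOneMisC{\PDF,\PDFNew,\truePredic_\predictor,\truePredic_\predictornew}$.

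For the inequality $\inforGap{\predictor,\predictornew}\le \OBJ{\ProGOneMisC{\PDF,\PDFNew,\truePredic_\predictor,\truePredic_\predictornew}}$, I will show that every decision problem $(\actionSpace,\agentU)\in\agentUClass$ induces a feasible $(\indirectU,\delta)$ via the construction above. The normalization $|\agentU(\action,1)-\agentU(\action,0)|\le 1$ gives $|\delta(\prediction)|\le 1$ immediately. For the second constraint, I use the defining property of best response: for any $\prediction,\predictionNew\in[0,1]$, the action $\bestr(\predictionNew)$ is (weakly) optimal against the Bernoulli$(\predictionNew)$ outcome, so
\begin{align*}
\indirectU(\prediction)+\delta(\prediction)\cdot(\predictionNew-\prediction)
=\predictionNew\,\tildeagentU(\prediction,1)+(1-\predictionNew)\,\tildeagentU(\prediction,0)\le \indirectU(\predictionNew),
\end{align*}
which, after swapping $\prediction$ and $\predictionNew$, is precisely the Lipschitz/convexity constraint in $\ProGOneMisC{\cdot}$.

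For the reverse inequality, I must produce a decision problem from an arbitrary feasible $(\indirectU,\delta)$ whose payoff gap matches the objective. The natural construction is to index actions by points of $[0,1]$: set $\actionSpace=\{\action_\prediction:\prediction\in[0,1]\}$ with
\begin{align*}
\agentU(\action_\prediction,0)\triangleq \indirectU(\prediction)-\prediction\cdot\delta(\prediction),\qquad \agentU(\action_\prediction,1)\triangleq \indirectU(\prediction)+(1-\prediction)\cdot\delta(\prediction).
\end{align*}
Then $|\agentU(\action_\prediction,1)-\agentU(\action_\prediction,0)|=|\delta(\prediction)|\le 1$, so $(\actionSpace,\agentU)\in\agentUClass$. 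A direct computation shows the expected utility of $\action_\prediction$ against Bernoulli$(\prediction)$ equals $\indirectU(\prediction)$, while against Bernoulli$(\prediction')$ it equals $\indirectU(\prediction)+\delta(\prediction)\cdot(\prediction'-\prediction)$; the program's second constraint (applied with roles swapped) is exactly what certifies that $\action_\prediction$ is a best response to prediction $\prediction$. Therefore $\bestr(\prediction)=\action_\prediction$, $\tildeagentU(\prediction,y)=\agentU(\action_\prediction,y)$, and unwinding Lemma~\ref{lem:interim utility expression} recovers the program's objective as the payoff difference.

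\textbf{Main obstacle.} The mechanical steps are routine; the only real care is in the reverse direction, namely verifying that the uncountable, continuously indexed action set $\{\action_\prediction\}_{\prediction\in[0,1]}$ does produce $\action_\prediction$ as a best response at each $\prediction$ (this is where feasibility of the constraint is essential) and that measurability of $\prediction\mapsto\action_\prediction$ is not an issue for integrating the payoff. If desired, one can sidestep measurability entirely by first reducing the supremum in $\ProGOneMisC{\cdot}$ to piecewise-affine $\indirectU$ with finitely many kinks (using the argument in the proof of \Cref{prop:idf}), which yields finite action sets; the same construction then works verbatim, and a standard approximation argument passes to the general feasible solution.
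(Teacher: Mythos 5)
Your proposal is correct and follows essentially the same route as the paper: both identify $\delta(\prediction)=\tildeagentU(\prediction,1)-\tildeagentU(\prediction,0)$, use the affine identity $\indirectU(\predictionNew;\prediction)=\indirectU(\prediction)+\delta(\prediction)\cdot(\predictionNew-\prediction)$ to rewrite the payoff gap as the program's objective, and read off the two constraints from the normalization and the best-response (convexity/Lipschitz) property via \Cref{lem:Lipschitz convex indirect utility}. In fact you are more complete than the paper's own proof, which only explicitly verifies the ``$\le$'' direction and leaves the converse (realizing an arbitrary feasible $(\indirectU,\delta)$ by a decision problem) implicit; your explicit action set $\{\action_\prediction\}$ handles this cleanly, with the only residual looseness being the tie-breaking of $\bestr(\prediction)$ at kinks of $\indirectU$ (where $\delta$ is non-unique and, for miscalibrated predictors, the selection affects the payoff) --- a point the paper also glosses over and which your finite-kink reduction plus a small perturbation would resolve.
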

\begin{proof}
Invoking \Cref{lem:interim utility expression} and the definition of $\inforGap{\predictor,\predictornew}$, the objective function in program~\eqref{eq:primal} can be rewritten as follows: 
\begin{align*}
    & \int_0^1
    \indirectU(\truePredic_\predictor(\prediction); \prediction) \cdot \PDF(\prediction) - 
    \indirectU(\truePredic_\predictornew(\prediction); \prediction) \cdot \PDFNew(\prediction) ~\dd  \prediction \\
    = ~ &  
    \int_0^1
    \left(\truePredic_\predictor(\prediction) \cdot \delta(\prediction) + \tildeagentU(\prediction, 0)\right)
    \cdot \PDF(\prediction) - 
    \left(\truePredic_\predictornew(\prediction) \cdot \delta(\prediction) + \tildeagentU(\prediction, 0)\right) \cdot \PDFNew(\prediction) ~\dd  \prediction \\
    = ~ &  
    \int_0^1
    \indirectU(\prediction)\cdot \left(\PDF(\prediction) - \PDFNew(\prediction)\right) ~\dd  \prediction 
    + 
    \int_0^1 \delta(\prediction)\cdot \left(\truePredic_\predictor(\prediction) - \prediction\right) \cdot \dist(\prediction) ~\dd  \prediction
    - 
    \int_0^1 \delta(\prediction)\cdot \left(\truePredic_\predictornew(\prediction) - \prediction\right)
    \cdot\distNew(\prediction) ~\dd  \prediction~,
\end{align*}
where function $\delta(\prediction) \triangleq \tildeagentU(\prediction, 1) - \tildeagentU(\prediction, 0)$.
Invoking \Cref{lem:Lipschitz convex indirect utility} and the definition of $\inforGap{\predictor,\predictornew}$, both constraints are satisfied due to the 1-Lipschitzness and convexity of interim utility $\indirectU$, respectively. This complete the proof of \Cref{lem:simplify miscali}.
\end{proof}

With \Cref{lem:simplify miscali}, we are ready to prove \Cref{thm:strong-duality-miscalibrated}. 
Unlike the proof of \Cref{thm:strong-duality-perfectly-calibrated} where we established the strong duality by deriving a dual representation of program~\eqref{opt:EMD}, in the proof of \Cref{thm:strong-duality-miscalibrated}, we instead derive a dual of program~$\ProGOneMisC{\PDF,\PDFNew,\truePredic_\predictor,\truePredic_\predictornew}$ and we show that such dual is essentially program \eqref{opt:flow miscali}.
\begin{proof}[Proof of \Cref{thm:strong-duality-miscalibrated}]
Our proof proceeds with two main steps.

\xhdr{Step 1: Prove that $ \inforGap{\predictor, \predictornew}
 = 
\earthDistFlow{\PDF, \PDFNew, \truePredic_\predictor,\truePredic_\predictornew}$}
To see this, we consider the following Lagrange of program~$\ProGOneMisC{\PDF,\PDFNew,\truePredic_\predictor,\truePredic_\predictornew}$
with Lagrange multipliers $\lambda:[0,1]\times[0,1]\rightarrow\reals_+$,
\begin{align*}
    \mathcal{L} 
    & = 
    \int_0^1
    \indirectU(\prediction) \cdot (\PDF(\prediction) - \PDFNew(\prediction))
     ~\dd  \prediction  
     + \int_0^1 (\truePredic_\predictor(\prediction) - \prediction) \cdot \delta(\prediction)\cdot \PDF(\prediction)~\dd \prediction 
     - \int_0^1 (\truePredic_\predictornew(\prediction) - \prediction) \cdot \delta(\prediction)\cdot \PDFNew(\prediction)~\dd \prediction \\
     & \quad + \int_0^1 \int_0^1 \lambda(\prediction, \predictionNew) \cdot \left(
     \delta(\prediction) \cdot (\prediction-\predictionNew) - (\indirectU(\prediction) - \indirectU(\predictionNew))\right) ~\dd \prediction\dd \predictionNew \\
     & = 
     \int_0^1
    \indirectU(\prediction) \cdot \left(\PDF(\prediction) - \PDFNew(\prediction) - \int_0^1 \lambda(\prediction, \predictionNew)\,\dd \predictionNew + \int_0^1 \lambda(\predictionNew, \prediction)\, \dd \predictionNew\right)\, \dd \prediction \\
    & \quad 
    + \int_0^1 (\truePredic_\predictor(\prediction) - \prediction) \cdot \delta(\prediction)\cdot \PDF(\prediction)~\dd \prediction
    - \int_0^1 (\truePredic_\predictornew(\prediction) - \prediction) \cdot \delta(\prediction)\cdot \PDFNew(\prediction)~\dd \prediction 
    + 
    \int_0^1 \int_0^1 \lambda(\prediction, \predictionNew) \cdot 
    \delta(q) \cdot (\predictionNew - \prediction)~\dd \prediction\dd \predictionNew
\end{align*}
By the first-order condition, the optimal dual variable $\lambda^*$ satisfies  
\begin{align}
    \label{eq:flow first-order}
    \PDF(\prediction) - \PDFNew(\prediction) - \int_0^1 \lambda^*(\prediction, \predictionNew)\,\dd \predictionNew + \int_0^1 \lambda^*(\predictionNew, \prediction)\, \dd \predictionNew = 0
\end{align}
for all $\prediction\in[0,1]$.
Thus, under optimal dual variable $\lambda^*$, we can simplify $\mathcal{L}$ as follows:
\begin{align*}
    \mathcal{L} 
    & = 
    \int_0^1 \delta(\prediction) \cdot \left(
    \int_0^1 \lambda^*(\prediction, \predictionNew)  \cdot (\prediction-\predictionNew) ~\dd \predictionNew
    \right)\dd \prediction
    + 
    \int_0^1 (\truePredic_\predictor(\prediction) - \prediction) \cdot \delta(\prediction)\cdot  \PDF(\prediction)~\dd \prediction
    - 
    \int_0^1 (\truePredic_\predictornew(\prediction) - \prediction) \cdot \delta(\prediction)\cdot \PDFNew(\prediction)~\dd \prediction \\
    & = 
    \int_0^1 \delta(\prediction) \cdot \left(
    \int_0^1 \lambda^*(\prediction, \predictionNew)  \cdot (\prediction-\predictionNew) ~\dd \predictionNew
    + (\truePredic_\predictor(\prediction) - \prediction) \cdot \PDF(\prediction)
    - (\truePredic_\predictornew(\prediction) - \prediction) \cdot \PDFNew(\prediction)
    \right)\dd \prediction
\end{align*}
Since $\delta(\prediction)\in [-1, 1]$, under optimal $\delta^*$, it satisfies 
\begin{align*}
    \mathcal{L} = 
    \int_0^1 \abs{
    \int_0^1 \lambda^*(\prediction, \predictionNew)  \cdot (\prediction-\predictionNew) ~\dd \predictionNew
    + (\truePredic_\predictor(\prediction) - \prediction) \cdot \PDF(\prediction)
    - (\truePredic_\predictornew(\prediction) - \prediction) \cdot \PDFNew(\prediction)
    }\dd \prediction~,
\end{align*}
which is equivalent to the objective function in program \eqref{opt:flow miscali}.
The flow constraint in program \eqref{opt:flow miscali} is equivalent to the first-order condition that we derived in Equation~\eqref{eq:flow first-order}.
Thus, due to \Cref{lem:simplify miscali}, program \eqref{opt:flow miscali} is indeed the dual of program~$\ProGOneMisC{\PDF,\PDFNew,\truePredic_\predictor,\truePredic_\predictornew}$. 

\xhdr{Step 2: Establish a {\generalizedSCDF} representation for the $\earthDistFlow{\dist, \distNew,\truePredic_1,\truePredic_2}$}
We next establish derive a {\generalizedSCDF} representation for the $\earthDistFlow{\dist, \distNew,\truePredic_1,\truePredic_2}$.
We summarize this step as the following lemma. 
\begin{restatable}[{\generalizedSCDF} representation]{lemma}{lemIDFMiscali}
\label{lem:idf miscali}
Given any two distributions $\dist, \distNew$ with supports $\supp(\dist)\subseteq[0, 1], \supp(\distNew)\subseteq[0, 1]$ and two functions $\truePredic_1,\truePredic_2:[0,1]\rightarrow[0,1]$, if $\expect[\prediction\sim\dist]{\truePredic_1(\prediction)} = \expect[\prediction\sim\distNew]{\truePredic_2(\prediction)}$,
the informativeness measure $\earthDistFlow{\dist, \distNew,\truePredic_1,\truePredic_2}$ satisfies
\begin{align*}
    \earthDistFlow{\dist, \distNew,\truePredic_1,\truePredic_2} = 
    2\cdot \max_{t\in [0, 1]} ~
    \left(\distSCDF(t) + \int_0^t (\prediction - \truePredic_1(\prediction))\cdot \dist(\prediction)\, \dd \prediction \right)
    - 
    \left(\distSCDFNew(t) + \int_0^t (\prediction - \truePredic_2(\prediction)) \cdot \distNew(\prediction)\, \dd \prediction\right)
\end{align*}
where $\distSCDF(t) \triangleq \int_0^t \distCDF(\prediction)\,\dd\prediction$ and $\distSCDFNew(t) \triangleq \int_0^t \distCDFNew(\prediction)\,\dd\prediction$ are the SCDFs of distributions $\dist,\distNew$, respectively.
\end{restatable}
We below prove \Cref{lem:idf miscali}.
    Invoking previous step and \Cref{lem:simplify miscali}, we obtain the identity between $\earthDistFlow{\dist, \distNew,\truePredic_1,\truePredic_2}$ and the optimal objective of program~$\ProGOneMisC{\dist,\distNew,\truePredic_1,\truePredic_2}$, i.e.,
    \begin{align*}
        \earthDistFlow{\dist, \distNew,\truePredic_1,\truePredic_2} = \OBJ{\ProGOneMisC{\dist,\distNew,\truePredic_1,\truePredic_2}}
    \end{align*}
    Define auxiliary function $\omega_{\dist,\distNew}:[0,1]\rightarrow\reals$ as 
    \begin{align*}
        \forall t\in[0,1]:\qquad
        \omega_{\dist,\distNew}(t) \triangleq \displaystyle\int_0^t 
        \dist(\prediction) - \distNew(\prediction)\,\dd \prediction
    \end{align*}
    The objective function of program~$\ProGOneMisC{\dist,\distNew,\truePredic_1,\truePredic_2}$ can be reformulated as follows:
    \begin{align*}
        & \int_0^1
        \indirectU(\prediction) \cdot \left(\dist(\prediction) - \distNew(\prediction)\right)
        + \delta(\prediction)\cdot \left(
        (\truePredic_1(\prediction) - \prediction)
        \cdot \dist(\prediction) 
         - (\truePredic_2(\prediction) - \prediction) \cdot \distNew(\prediction) \right) \, \dd \prediction \\
        = {} & \left[\indirectU(\prediction)\cdot\omega_{\dist,\distNew}(\prediction)\right]_0^1 - \int_0^1 \omega_{\dist,\distNew}(\prediction)\cdot\indirectUDeriv(\prediction)\, \dd \prediction + 
        \int_0^1 \delta(\prediction)\cdot \left(
        (\truePredic_1(\prediction) - \prediction) \cdot \dist(\prediction) 
        - (\truePredic_2(\prediction) - \prediction) 
        \cdot \distNew(\prediction) \right) \, \dd \prediction\\
        = {} & 
        - \int_0^1 \omega_{\dist,\distNew}(\prediction) \cdot \indirectUDeriv(\prediction)\, \dd \prediction + 
        \int_0^1 \delta(\prediction)\cdot \left(
        (\truePredic_1(\prediction) - \prediction) \cdot \dist(\prediction) 
        - (\truePredic_2(\prediction) - \prediction) \cdot \distNew(\prediction) \right) \, \dd \prediction~.
        \\
        = {} & 
        - \int_0^1 \omega_{\dist,\distNew}(\prediction) \cdot \indirectUDeriv(\prediction)\, \dd \prediction + 
        \int_0^1 \indirectUDeriv(\prediction)\cdot \left(
        (\truePredic_1(\prediction) - \prediction) \cdot \dist(\prediction) 
        - (\truePredic_2(\prediction) - \prediction) \cdot \distNew(\prediction) \right) \, \dd \prediction~.
    \end{align*}
    where the first equality holds due to integration by parts, and the second equality considers derivative $\indirectUDeriv(\prediction) \triangleq \indirectU'(\prediction)$ which exists almost everywhere and uses the fact that $\omega_{\dist,\distNew}(0) = \omega_{\dist,\distNew}(1) = 0$. 
    To understand the third equality, note that constraint $\indirectU(\prediction) - \indirectU(\predictionNew) \le \delta(\prediction)\cdot(\prediction-\predictionNew)$ for all $\prediction, \predictionNew\in[0, 1]$ in program~$\ProGOneMisC{\dist,\distNew,\truePredic_1,\truePredic_2}$ implies that 
    \begin{itemize}
        \item function $\delta$ is non-decreasing. 
        To see this, fix any $0\le \prediction < \predictionNew\le 1$. Observe that for every pair $(\prediction, \predictionNew)$, $\indirectU(\prediction) - \indirectU(\predictionNew) \le \delta(\prediction)\cdot(\prediction-\predictionNew)$, while for pair $(\predictionNew, \prediction)$, $\indirectU(\predictionNew) - \indirectU(\prediction) \le \delta(\predictionNew)\cdot(\predictionNew-\prediction)$.
        Thus, combining these two inequalities we have 
        \begin{align*}
            \delta(\prediction) \le \frac{\indirectU(\predictionNew) - \indirectU(\prediction)}{\predictionNew - \prediction} \le \delta(\predictionNew)~,
        \end{align*}
        which shows the monotonicity of function $\delta$.
    
        \item Furthermore, it guarantees that
        for every $\prediction \in [0, 1]$, $\delta(\prediction)\in [\indirectU'_-(\prediction), \indirectU'_+(\prediction)]$ where $\indirectU'_-(\prediction)$ and $\indirectU'_+(\prediction)$ are the left and right derivative of function $\indirectU$ at $\prediction$.  
        This can be verified by plugging in $(\prediction, \prediction-\varepsilon)$ and $(\prediction+\varepsilon, \prediction)$ for sufficiently small $\varepsilon$ in the constraint $\indirectU(\prediction) - \indirectU(\predictionNew) \le \delta(\prediction)\cdot(\prediction-\predictionNew)$.
        Since $\delta\in [\indirectU'_-(\prediction), \indirectU'_+(\prediction)]$ and it is monotone,  $\indirectU$ is a convex function, and thus $\indirectU'_-(\prediction) = \indirectU'_+(\predictionNew)$ almost everywhere except on a set of Lebesgue measure $0$. Thus, functions $\indirectUDeriv = \delta$ almost everywhere.
    \end{itemize}
    Consequently, program~$\ProGOneMisC{\dist,\distNew,\truePredic_1,\truePredic_2}$ is equivalent to the following program:
    \begin{align*}
        \arraycolsep=5.4pt\def\arraystretch{1}
        &\begin{array}{rlll}
        \inf
        \limits_{\indirectUDeriv:[0,1]\rightarrow\reals} 
        ~ &
        \displaystyle 
        \int_0^1
        \indirectUDeriv(\prediction) \cdot \left(
        \omega_{\dist,\distNew}(\prediction) - 
        (\truePredic_1(\prediction) - \prediction)\cdot  \dist(\prediction) 
        + (\truePredic_2(\prediction) - \prediction) \cdot \distNew(\prediction)
        \right) ~\dd  \prediction 
        \quad & \text{s.t.} &
        \vspace{1mm}
        \\
        & 
        \displaystyle 
        \indirectUDeriv(\prediction)\in[-1,1], 
        &  \prediction\in[0, 1]
        \vspace{1mm}
        \\
        & 
        \displaystyle 
        \indirectUDeriv(\prediction) \le \indirectUDeriv(\predictionNew) 
        &  \prediction\in[0, 1], \predictionNew\in[p, 1]
        \vspace{1mm}
        \end{array}
    \end{align*}
    According to \cite{S-06,B-15}, the extreme points of the uniformly bounded, non-decreasing functions are step functions with only one jump. Thus, the optimal solution $\indirectUDeriv^*$ to the above (infinite-dimensional) linear program must be a step function where 
    \begin{align*}
        \indirectUDeriv^*(\prediction) = -1 \cdot \indicator{\prediction\le t^*} + 1 \cdot\indicator{\prediction> t^*}
    \end{align*}
    for some threshold $t^*\in[0, 1]$.

    With the above observation, to solve program~$\ProGOneMisC{\dist,\distNew,\truePredic_1,\truePredic_2}$, it suffices to optimize over all V-shaped functions across different kinks $t\in[0, 1]$.
    In particular, we can write the objective in program~$\ProGOneMisC{\dist,\distNew,\truePredic_1,\truePredic_2}$  under a V-shaped function with kink $t\in[0, 1]$ as follows: 
    \begin{align*}
        &\int_0^t -(\prediction-t) \cdot (\dist(\prediction) - \distNew(\prediction))\, \dd \prediction 
        + 
        \int_{t}^1 (\prediction-t) \cdot (\dist(\prediction) - \distNew(\prediction))\, \dd \prediction
        \\
        &\quad -
        \int_0^t 
        (\truePredic_1(\prediction) - \prediction)
        \cdot \dist(\prediction) 
         - (\truePredic_2(\prediction) - \prediction) \cdot \distNew(\prediction)  \, \dd \prediction
        +
        \int_t^1 
        (\truePredic_1(\prediction) - \prediction)
        \cdot \dist(\prediction) 
         - (\truePredic_2(\prediction) - \prediction) \cdot \distNew(\prediction)  \, \dd \prediction
        \\
        = {} &
        \int_0^t -2(\prediction-t) \cdot (\dist(\prediction) - \distNew(\prediction))\, \dd \prediction 
        + 
        \int_{0}^1 (\prediction-t) \cdot (\dist(\prediction) - \distNew(\prediction))\, \dd \prediction
        \\
        &\quad -
        \int_0^t 2\left(
        (\truePredic_1(\prediction) - \prediction)
        \cdot \dist(\prediction) 
         - (\truePredic_2(\prediction) - \prediction) \cdot \distNew(\prediction) \right) \, \dd \prediction
        +
        \int_0^1 
        (\truePredic_1(\prediction) - \prediction)
        \cdot \dist(\prediction) 
         - (\truePredic_2(\prediction) - \prediction) \cdot \distNew(\prediction)  \, \dd \prediction
        \\
        = {}&
        \int_0^t -2(\prediction-t) \cdot (\dist(\prediction) - \distNew(\prediction))\, \dd \prediction~
        -
        \int_0^t 2\left(
        (\truePredic_1(\prediction) - \prediction)
        \cdot \dist(\prediction) 
         - (\truePredic_2(\prediction) - \prediction) \cdot \distNew(\prediction) \right) \, \dd \prediction
    \end{align*}
    Here the second equality holds since $\int_{0}^1 (\prediction-t) \cdot (\dist(\prediction) - \distNew(\prediction))\, \dd \prediction
    +
    \int_0^1 
    (\truePredic_1(\prediction) - \prediction)
    \cdot \dist(\prediction) 
    - (\truePredic_2(\prediction) - \prediction) \cdot \distNew(\prediction)  \, \dd \prediction = 0$, which is implied by the statement assumption that distributions $\dist,\distNew$ are valid distribution and have the same mean.

    Observe that 
    \begin{align*}
        \frac{\partial}{\partial t}\int_0^t -2(\prediction-t) \cdot (\dist(\prediction) - \distNew(\prediction))\, \dd \prediction
        &=
        \int_0^t 2  (\dist(\prediction) - \distNew(\prediction))\,\dd\prediction
        -
        2(t - t)\cdot (\dist(t) - \distNew(t)) 
        \\
        &= 
        2(\distCDF(t) - \distCDFNew(t))
    \end{align*}
    Thus, the optimal objective value of program~$\ProGOneMisC{\dist,\distNew,\truePredic_1,\truePredic_2}$ can be further reformulated as 
    \begin{align*}
        &\max\nolimits_{t\in[0, 1]}~
        \int_0^t 2(\distCDF(\prediction) - \distCDFNew(\prediction))\,\dd \prediction
        -
        \int_0^t 2\left(
        (\truePredic_1(\prediction) - \prediction)
        \cdot \dist(\prediction) 
         - (\truePredic_2(\prediction) - \prediction) \cdot \distNew(\prediction) \right) \, \dd \prediction
        \\
        = {} & 
    2\cdot \max_{t\in [0, 1]} ~
    \left(\distSCDF(t) + \int_0^t (\prediction - \truePredic_1(\prediction))\cdot \dist(\prediction)\, \dd \prediction \right)
    - 
    \left(\distSCDFNew(t) + \int_0^t (\prediction - \truePredic_2(\prediction)) \cdot \distNew(\prediction)\, \dd \prediction\right)
    \end{align*}
    where the equality holds due to SCDF definition. This completes the proof of \Cref{lem:idf miscali}.

Combining above two steps would finish the proof of \Cref{thm:strong-duality-miscalibrated}.
\end{proof}

\subsection{Proof of \texorpdfstring{\Cref{prop:bad approx of marg}}{}}
To prove \Cref{prop:bad approx of marg}, we first show the following lower bound.

\begin{lemma}
\label{lem:lower bound bar-REMDMISCB}
For the predictors $\predictor, \predictornew$ defined as in \Cref{ex:unbounded gap miscalibrated predictor measure}, we have that 
$\REMDNew{\PDF, \PDFNew,\truePredic_\predictor,\truePredic_\predictornew}
\geq 4 \int_0^{\frac{1}{2}} t\cdot \PDFNew(t)\,\dd t$.
\end{lemma}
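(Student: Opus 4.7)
The plan is to exploit the very special structure of \Cref{ex:unbounded gap miscalibrated predictor measure}---namely that $\predictor$ is a two–point distribution and $\predictornew$ has a density symmetric about $\tfrac12$---to reduce the inner $\inf$ over couplings to a one–dimensional constrained problem. Fix any coupling $\miscali\in\couplingSpace(\PDF,\PDFNew)$. Since $\PDF=\tfrac12\delta_0+\tfrac12\delta_1$, the first marginal forces $\miscali=\tfrac12\delta_0\otimes\miscali_0+\tfrac12\delta_1\otimes\miscali_1$ for two probability measures $\miscali_0,\miscali_1$ on $[0,1]$, and the second marginal becomes the single constraint $\tfrac12(\miscali_0+\miscali_1)=\PDFNew$. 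In particular $\miscali_0,\miscali_1$ are each supported in $(\varepsilon,1-\varepsilon)$, and each has density at most $2\PDFNew$.

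Next I would interpret the integrand inside the absolute value of \eqref{opt:flow miscali} (restricted to couplings) as the density of a signed measure $\nu_\miscali$ on $[0,1]$, so that $\int_0^1 |\,\cdot\,|\,\dd\prediction=\|\nu_\miscali\|_{\mathrm{TV}}$. Because $\predictor$ is perfectly calibrated, $\truePredic_\predictor(\prediction)-\prediction\equiv 0$, and $\nu_\miscali$ splits into two pieces supported on \emph{disjoint} sets: a purely atomic part on $\{0,1\}$ coming from the coupling, and an absolutely continuous part on $(\varepsilon,1-\varepsilon)$ coming from the $-(\truePredic_\predictornew(\prediction)-\prediction)\PDFNew(\prediction)$ term. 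Disjoint supports give $\|\nu_\miscali\|_{\mathrm{TV}}$ as an exact sum of the two parts, with no cancellation.

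Then I would compute the two pieces. The atomic part has weight $-\tfrac12\expect_{\miscali_0}[\predictionNew]$ at $\prediction=0$ and weight $\tfrac12(1-\expect_{\miscali_1}[\predictionNew])$ at $\prediction=1$, each of a definite sign, so it contributes exactly $\tfrac12\expect_{\miscali_0}[\predictionNew]+\tfrac12(1-\expect_{\miscali_1}[\predictionNew])$. Using the symmetry of $\PDFNew$ we have $\expect_{\PDFNew}[\prediction]=\tfrac12$, so the marginal constraint gives $\expect_{\miscali_0}[\predictionNew]+\expect_{\miscali_1}[\predictionNew]=1$; this collapses the atomic contribution to exactly $\expect_{\miscali_0}[\predictionNew]$. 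Likewise, using $\truePredic_\predictornew(\prediction)=\indicator{\prediction\ge 1/2}$ and symmetry of $\PDFNew$, the absolutely continuous part evaluates to $\int_\varepsilon^{1/2}\prediction\PDFNew(\prediction)\,\dd\prediction+\int_{1/2}^{1-\varepsilon}(1-\prediction)\PDFNew(\prediction)\,\dd\prediction=2\int_0^{1/2}t\,\PDFNew(t)\,\dd t$, independent of $\miscali$.

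It remains to minimize $\expect_{\miscali_0}[\predictionNew]$ subject to $\miscali_0$ being a probability measure on $[0,1]$ with density bounded by $2\PDFNew$. This is a classical bathtub-type problem: the optimum puts mass at the smallest available values, so $\miscali_0^*=2\PDFNew\cdot\indicator{\prediction\le 1/2}$ (which is a probability measure by symmetry), achieving $\expect_{\miscali_0^*}[\predictionNew]=2\int_0^{1/2}t\,\PDFNew(t)\,\dd t$. Adding the two contributions yields the claimed bound $\REMDNew{\PDF,\PDFNew,\truePredic_\predictor,\truePredic_\predictornew}\ge 4\int_0^{1/2}t\,\PDFNew(t)\,\dd t$. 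I expect the only subtle step to be the measure-theoretic bookkeeping in the second paragraph---justifying that the total variation of $\nu_\miscali$ is the \emph{sum} of the atomic and absolutely continuous contributions because their supports are separated by $\varepsilon>0$; the remaining steps are algebraic manipulations using symmetry and the marginal constraint.
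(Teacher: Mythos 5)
Your proof is correct and takes a genuinely different, more elementary route than the paper's. The paper's argument is a weak-duality one: it writes the Lagrangian of the coupling optimization, extracts the dual via first-order and complementary-slackness conditions, and then lower-bounds by plugging in the specific V-shaped dual function $V_t(p)=|p-t|$ reused from the proof of \Cref{thm:strong duality perfectly calibrated}. You instead do a direct primal computation that exploits the two-point structure of $\PDF$: disintegrate any coupling as $\miscali=\tfrac12\delta_0\otimes\miscali_0+\tfrac12\delta_1\otimes\miscali_1$ with $\miscali_0+\miscali_1=2\PDFNew$, observe that the signed measure inside the total-variation split has an atomic piece on $\{0,1\}$ and an absolutely continuous piece on $(\varepsilon,1-\varepsilon)$ with disjoint supports (so no cancellation), evaluate both pieces, and reduce to a one-dimensional bathtub problem for $\expect_{\miscali_0}[\predictionNew]$. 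This is self-contained and in fact pins down the exact value $\REMDNew{\PDF,\PDFNew,\truePredic_\predictor,\truePredic_\predictornew}=4\int_0^{1/2}t\,\PDFNew(t)\,\dd t$, whereas the paper's weak-duality route only delivers the lower bound needed. The one point worth making explicit (which you correctly flag) is that since $\PDF$ is atomic, the objective $\int_0^1|\cdot|\,\dd\prediction$ must be read as the total-variation norm of the underlying signed measure rather than a pointwise Lebesgue integral; this is the same interpretation the paper uses implicitly in its Lagrangian derivation, so nothing is at odds.
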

\begin{proof}
    We consider Lagrange multipliers $\alpha_1, \alpha_2:[0,1]\rightarrow\reals$, and $\beta:[0,1]\times[0,1]\rightarrow\reals_+$ for the optimization program in \Cref{prop:bad approx of marg}:
\begin{align*}
    \mathcal{L}
    & = 
    \int_0^1 \gamma(p)\cdot\left(\int_0^1 \miscali(\prediction, \predictionNew)\cdot (\prediction-\predictionNew) \, \dd \predictionNew 
    + (\truePredic_1(\prediction) - \prediction) \cdot \dist(\prediction)
    - (\truePredic_2(\prediction) - \prediction) \cdot \distNew(\prediction)
    \right)  \, \dd \prediction \\
    & \quad 
    - \int_0^1\int_0^1 \beta(\prediction, \predictionNew)\cdot \miscali(\prediction, \predictionNew) 
    \, \dd \predictionNew  \dd \prediction \\
    & \quad 
    + 
    \int_0^1 \alpha_1(\prediction) \cdot\left(\dist(\prediction) - \int_0^1 \miscali(\prediction, \predictionNew) \, \dd \predictionNew\right)\, \dd \prediction
    + 
    \int_0^1 \alpha_2(\predictionNew) \cdot\left(\distNew(\predictionNew) - \int_0^1 \miscali(\prediction, \predictionNew) \, \dd \prediction\right)\, \dd \predictionNew \\
    & = 
    \int_0^1 \gamma(\prediction) \cdot \prediction \cdot \dist(\prediction)
    \, \dd \prediction
    -
    \int_0^1 \gamma(\prediction) \cdot
    \left(
    \int_0^1 \predictionNew\cdot \miscali(\prediction, \predictionNew) \, \dd \predictionNew
    \right) \, \dd \prediction 
    - \int_0^1\int_0^1 \beta(\prediction, \predictionNew)\cdot \miscali(\prediction, \predictionNew) 
    \, \dd \predictionNew  \dd \prediction  \\
    & \quad 
    + \int_0^1 \gamma(\prediction) \cdot \left((\truePredic_1(\prediction) - \prediction) \cdot \dist(\prediction)
    - (\truePredic_2(\prediction) - \prediction) \cdot \distNew(\prediction)
    \right)\, \dd \prediction \\
    & \quad 
    + 
    \int_0^1 \alpha_1(\prediction) \cdot\left(\dist(\prediction) - \int_0^1 \miscali(\prediction, \predictionNew) \, \dd \predictionNew\right)\, \dd \prediction
    + 
    \int_0^1 \alpha_2(\predictionNew) \cdot\left(\distNew(\predictionNew) - \int_0^1 \miscali(\prediction, \predictionNew) \, \dd \prediction\right)\, \dd \predictionNew \\
    & = 
    \int_{0}^1\int_0^1 \miscali(\prediction, \predictionNew) \cdot\left(
    - \beta(\prediction, \predictionNew) - \gamma(\prediction) \cdot \predictionNew 
    - \alpha_1(\prediction) 
    - \alpha_2(\predictionNew)\right)
    \,  \dd \predictionNew \dd \prediction\\
    & \quad 
    + 
    \int_0^1 \left(\gamma(\prediction) \cdot \truePredic_1(\prediction) 
    + \alpha_1(\prediction)\right) \cdot \dist(\prediction)
    \, \dd \prediction
    +
    \int_0^1 (\alpha_2(\prediction) - \gamma(\prediction)(\truePredic_2(\prediction) - \prediction)) \cdot \distNew(\prediction)
    \, \dd \prediction
\end{align*}
By the first-order condition, the optimal Lagrange multipliers $\beta^*, \alpha_1^*, \alpha_2^*, \gamma^*$ satisfy 
\begin{align*}
    - 
    \beta^*(\prediction, \predictionNew) - \gamma^*(\prediction) \cdot \predictionNew 
    - \alpha_1^*(\prediction) 
    - \alpha_2^*(\predictionNew) = 0
\end{align*}
for all $\prediction, \predictionNew\in[0, 1]$. 
In addition, under the optimal coupling $\miscali^*$, for any $\prediction, \predictionNew\in[0, 1]$, we have the following complementary slackness conditions: 
\begin{align*}
    \alpha_1^*(\prediction) 
    + \alpha_2^*(\predictionNew) 
    & = -\gamma^*(\prediction) \cdot \predictionNew, \quad \miscali^*(\prediction, \predictionNew) > 0 \\
    \alpha_1^*(\prediction) 
    + \alpha_2^*(\predictionNew) 
    & \le -\gamma^*(\prediction) \cdot \predictionNew, \quad \miscali^*(\prediction, \predictionNew) = 0~.
\end{align*}
With the above first-order conditions and complementary slackness, together with the miscalibrated predictions defined as in \Cref{ex:unbounded gap miscalibrated predictor measure}, the Lagrange dual then becomes
\begin{align*}
    \mathcal{L} 
    & = 
    \int_0^1 \left(\gamma^*(\prediction) \cdot \truePredic_1(\prediction) 
    + \alpha_1^*(\prediction)\right) \cdot \dist(\prediction)
    \, \dd \prediction
    +
    \int_0^1 (\alpha_2^*(\prediction) - \gamma^*(\prediction)(\truePredic_2(\prediction) - \prediction)) \cdot \distNew(\prediction)
    \, \dd \prediction  \\
    & = 
    \int_0^1 \left(\gamma^*(\prediction) \cdot \prediction
    + \alpha_1^*(\prediction)\right)
    \cdot \PDF(\prediction)
    \, \dd \prediction
    +
    \int_0^1 (\alpha_2^*(\prediction) - \gamma^*(\prediction)(\truePredic_{\predictornew}(\prediction) - \prediction)) \cdot \PDFNew(\prediction)
    \, \dd \prediction \tag{due to $\truePredic_{\predictor}(\prediction) = \prediction$} \\
    & = 
    \int_0^1 \left(\gamma^*(\prediction) \cdot \prediction
    + \alpha_1^*(\prediction)\right)
    \cdot \PDF(\prediction)
    \, \dd \prediction 
    +
    \int_0^{0.5} (\alpha_2^*(\prediction) + \gamma^*(\prediction)\prediction ) \cdot \PDFNew(\prediction)
    \, \dd \prediction
    + 
    \int_{0.5}^1 (\alpha_2^*(\prediction) - \gamma^*(\prediction)(1-\prediction)) \cdot \PDFNew(\prediction)
    \, \dd \prediction 
    \tag{due to $\truePredic_{\predictornew}(\prediction) = \indicator{\prediction\ge 0.5}$}\\
    & = 
    \int_0^1 \left(\gamma^*(\prediction) \cdot \prediction
    + \alpha_1^*(\prediction)\right)
    \cdot \PDF(\prediction)
    \, \dd \prediction
    +
    \int_0^1\alpha_2^*(\prediction) 
    \PDFNew(\prediction)
    \, \dd \prediction \\
    & \quad +
    \int_0^{0.5} \gamma^*(\prediction)\prediction \cdot \PDFNew(\prediction)
    \, \dd \prediction
    - 
    \int_{0.5}^1 \gamma^*(\prediction)(1-\prediction) \cdot \PDFNew(\prediction)
    \, \dd \prediction
\end{align*}
Similar to the analysis of \Cref{thm:strong-duality-perfectly-calibrated}, we know that 
\begin{align*}
    \mathcal{L} 
    & \ge \sup\limits_{U \text{ is 1-Lipschitz}}\int_0^1 U(\prediction) \cdot (\PDF(\prediction) - \PDFNew(\prediction)) \, \dd\prediction 
    + \int_0^{0.5} U'(\prediction)\prediction \cdot \PDFNew(\prediction)
    \, \dd \prediction
    - 
    \int_{0.5}^1 U'(\prediction)(1-\prediction) \cdot \PDFNew(\prediction)
    \, \dd \prediction\\
& \ge 
    \max_{t\in[0, 1]}\int_0^1 V_t(\prediction)(\prediction) \cdot (\PDF(\prediction) - \PDFNew(\prediction)) \, \dd\prediction 
    + \int_0^{0.5} V'_t(\prediction)(\prediction)\prediction \cdot \PDFNew(\prediction)
    \, \dd \prediction
    - 
    \int_{0.5}^1 V'_t(\prediction)(\prediction)(1-\prediction) \cdot \PDFNew(\prediction)
    \, \dd \prediction\\
    & = \max_{t\in[0, 1]} 
    4 \int_0^{\frac{1}{2}} t\cdot \PDFNew(t)\,\dd t~,
\end{align*}
where $V_t(\prediction) = |\prediction - t|$.
Then by weak duality, we further have 
\begin{align*}
    \REMDNew{\PDF, \PDFNew,\truePredic_\predictor,\truePredic_\predictornew} 
    \ge 4 \int_0^{\frac{1}{2}} t\cdot \PDFNew(t)\,\dd t~.
\end{align*}
We thus finish the proof.
\end{proof}
 
We are ready to prove \Cref{prop:bad approx of marg}.
\propBadApproxMarg*
\begin{proof}
Our first step is to lower bound the value $\REMDNew{\PDF, \PDFNew,\truePredic_\predictor,\truePredic_\predictornew}$. 
In particular, 
we show that by analyzing the dual of the optimization program in $\REMDNew{\PDF, \PDFNew,\truePredic_\predictor,\truePredic_\predictornew}$, we can lower bound its value as follows
\footnote{\label{footnote:welfare revenue interpretation}Notably, two terms $4\int_0^{\frac{1}{2}}t\cdot \PDFNew(t) \,\dd t$ and $t\cdot \left(1 - \frac{t-\varepsilon}{(1-2\varepsilon)t}\right) = t\cdot (1 - 2\int_0^t\PDF(\prediction)\,\dd\prediction)$ can be interpreted as the welfare and revenue in the context of mechanism design.} 
\begin{align*}
    \REMDNew{\PDF, \PDFNew,\truePredic_\predictor,\truePredic_\predictornew}
    \geq 4 \int_0^{\frac{1}{2}} t\cdot \PDFNew(t)\,\dd t
    =
    4\int_0^{\frac{1}{2}} t \cdot \frac{\varepsilon}{1-2\varepsilon}\cdot \frac{1}{t^2}\,\dd t
    =
    \frac{2\varepsilon}{1-2\varepsilon}\ln\frac{1}{2\varepsilon}~.
\end{align*}
The above lower bound analysis utilizes the previous analysis in \Cref{thm:strong-duality-perfectly-calibrated} for optimizing over the coupling set $\couplingSpace(\PDF, \PDFNew)$ and the specific structure of constructed predictions $\predictor, \predictornew$. 
Invoking \Cref{thm:strong-duality-miscalibrated} and \Cref{lem:idf miscali}, informativeness gap $\inforGap{\predictor,\predictornew}$ can be computed as\textsuperscript{\ref{footnote:welfare revenue interpretation}}
\begin{align*}
    \inforGap{\predictor,\predictornew} 
    &=
    \earthDistFlow{\PDF, \PDFNew,\truePredic_\predictor,\truePredic_\predictornew} 
    \\
    &= 
    2\cdot \max_{t\in [0, 1]} ~
    \left(\SCDF(t) + \int_0^t (\prediction - \truePredic_\predictor(\prediction))\cdot \PDF(\prediction)\, \dd \prediction \right)
    - 
    \left(\SCDFNew(t) + \int_0^t (\prediction - \truePredic_\predictornew(\prediction)) \cdot \PDFNew(\prediction)\, \dd \prediction\right)
    \\
    &= 
    2\cdot \max_{t\in [0, 0.5]} ~
    \left(\SCDF(t) + \int_0^t (\prediction - \truePredic_\predictor(\prediction))\cdot \PDF(\prediction)\, \dd \prediction \right)
    - 
    \left(\SCDFNew(t) + \int_0^t (\prediction - \truePredic_\predictornew(\prediction)) \cdot \PDFNew(\prediction)\, \dd \prediction\right)
    \\
    &= 
    \max_{t\in [0, 0.5]} ~
    t
    -
    \int_\varepsilon^t\int_\varepsilon^s \frac{\varepsilon}{1-2\varepsilon}\cdot \frac{1}{\prediction^2}\,\dd\prediction
    \,\dd s
    -
    \int_\varepsilon^t
    \prediction\cdot \frac{\varepsilon}{1-2\varepsilon}\cdot \frac{1}{\prediction^2}\,\dd\prediction
    \\
    &=
    \max_{t\in [0, 0.5]} ~
    t
    \cdot 
    \left(
    1 - \frac{t-\varepsilon}{(1-2\varepsilon)t}
    \right)
    =
    \Theta(\varepsilon)
\end{align*}
where the third equality holds since both predictors are symmetric around $t = 0.5$.

Combing the two pieces together, we completes the proof of \Cref{prop:bad approx of marg}.
\end{proof}

\section{Missing Experimentation Details}

\subsection{Experimental Process in Section~\ref{subsec:experiment setup}}
\label{apx:experimental process}

\xhdr{Event construction} For each dataset, we define a collection of event templates that can occur repeatedly and instantiate each template on every record, yielding a labeled binary outcome $\outcome\in\{0,1\}$ for each corresponding record. On the weather rain dataset, we design two events as \textbf{Event 1} and \textbf{Event 2} over the daily meteorological records.
\begin{itemize}
    \item \textbf{Event 1 (Single-day rain)}: Will it rain on day $\newday$?
    \item \textbf{Event 2 (Two-day rain)}: Will it rain on two consecutive days $(\newday,\newday+1)$?\footnote{We follow the construction of combination questions proposed by \cite{KBYJ-24}, which composes multiple base events into a single one to expand the dataset}
\end{itemize}
Next, on the Bitcoin price dataset, we define eight threshold-based events over the daily price changes: \textbf{Event 3-6} use absolute thresholds $\absoIncr_i$ while \textbf{Event 7-10} use relative thresholds $\relaIncr_i$.\footnote{Here, $\absoIncr_i\in\{\$100, \$500, \$1500, \$3000\}$ and $\relaIncr_i\in\{0.5\%, 1.5\%, 2.0\%, 4.0\%\}$. To select these threshold candidates, we compute the empirical occurrence frequency of each event under different absolute and relative thresholds. We then choose thresholds whose occurrence frequencies span a wide range from $49\%$ down to about $6\%$, thereby covering events of varying rarity and forecasting difficulty.}
\begin{itemize}
    \item \textbf{Event 3-6 (Absolute increase)}: Will the close price of day $\newday$ increase by $\absoIncr_i$?
    \item \textbf{Event 7-10 (Relative increase)}: Will the close price of day $\newday$ increase by $\relaIncr_i$?
\end{itemize}
For each event, we evaluate 4 different LLM models, so in total, we have $10\times 4$ different pairs of $(\text{LLM predictor}, \text{Event})$.
With the event templates fixed, we instantiate each template on every record $\record_i$ to obtain labels $\outcome_i\in\{0,1\}$, yielding a labeled set $\{(\record_i,\outcome_i)\}_{i\in[n]}$ for each event, which serves as the basis for subsequent forecasting and evaluation.

\xhdr{Prompt design} To ensure a fair comparison across the four LLMs, we adopt a unified prompting protocol to elicit a probabilistic prediction $\prediction\in[0,1]$ for each event instance. Inspired by previous work of \cite{JWMC-23}, our prompt input follows a structured format that separates the event description, the instance-specific record data and explicit restrictions on information access. See \Cref{apx:prompt} for the prompt samples.

Having specified the prompt input, we further impose a strict requirement for the output format to ensure standardized and parseable predictions across all LLMs. Specifically, each LLM must return only a valid JSON object without additional text and all probability entries must be real numbers in $[0,1]$.

\xhdr{Prediction collection} Each LLM is queried once per event instance using default settings, without tuning generation parameters. Responses are parsed as JSON and validated to ensure the extracted probability lies in $[0,1]$. Raw outputs are stored for traceability and a cleaned file containing only scalar probabilities is exported for metric computation. Whenever the validation fails. the identical prompt is retried until a valid prediction is obtained.

\xhdr{Evaluation} For each LLM and each event, we form the paired dataset $\{(\prediction_i,\outcome_i)\}_{i\in[n]}$ from the predictions and the ground-truth labels. Utilizing this dataset, we compute three metrics: Brier score, ECE and informativeness gap.

\subsection{Missing Details in Section~\ref{subsec:post-processing}}
\label{apx:post-processing}
In our experiment, we study binary-event calibration. Each raw prediction is a probability $\prediction_i\in(0,1)$ and the corresponding outcome is $\outcome_i\in\{0,1\}$. For the implementation of all post-processing methods, we fit the calibrator using a hold-out calibration set and then apply the learned mapping to the test set without further adjustment.

\xhdr{Bayesian binning \citep{NCH-15}} Bayesian binning is an expanded version of histogram binning that improves robustness by model averaging over a collection of candidate binning models rather than committing to a single discretization. For a candidate binning model $\binningModel$, we sort the raw predictions and partition them into $\binsNum$ equal-frequency bins. Within each bin $\bin$, the labels are modeled by a Binomial distribution with parameter $\binomPara_\bin$, and a Beta prior is placed on $\binomPara_\bin$. The Beta-Binomial conjuacy yields a closed form marginal likelihood $\prob{\calibData\givenn\binningModel}$, which can be used to score candidate binning models and compute their posterior weights. The final calibrated probability is obtained by averaging the model-specific outputs:
\begin{align*}
\prob{\outcome=1 \givenn \prediction}=\sum\nolimits_{i\in[n]}\frac{\prob{\binningModel_i}\prob{\calibData\givenn\binningModel_i}}{\sum\nolimits_{j\in[n]}\prob{\binningModel_j}\prob{\calibData\givenn\binningModel_j}}\prob{\outcome=1\givenn\prediction, \binningModel_i}
\end{align*}
In our setting, we directly apply Bayesian binning to the predicted probabilities on the calibration set. In the test set, each prediction is mapped to its bin under every candidate model, yielding a candidate calibrated probability; we then output the weighted average across candidate models using the learned posterior weights. All bin boundaries and model weights are fixed after calibration and reused for the test set.

\xhdr{Histogram binning \citep{ZE-01}} Histogram binning is a simple non-parametric post-processing method that maps miscalibrated scores to probabilities via discretization. The core idea is to divide the range of outputs into a finite number of bins and represent all outputs within the same bin by a single probability estimate. Concretely, we choose bin boundaries that partition the output range into $\binsNum$ disjoint intervals. For each bin, we assign a scalar parameter $\predictPara_\bin$ as it calibrated value. Given the boundaries, the optimal choice of $\predictPara_\bin$ is simply the average of the labels among samples whose outputs fall into that bin, i.e. the empirical event frequency within the bin. In the test set, calibration reduces to a lookup operation: determine which bin the input output belongs to and return the corresponding $\predictPara_\bin$.

\xhdr{Isotonic regression \citep{ZE-02}} Isotonic regression is a non-parametric recalibration method that learns a monotone mapping from miscalibrated scores to probabilities, motivated by the assumption that higher scores should correspond to higher event likelihoods. Specifically, it fits a stepwise-constant function that transforms uncalibrated outputs by minimizing a squared-loss objective under the monotonicity constraint. A standard approach is the pair-adjacent violators (PAV) algorithm, which efficiently computes the optimal monotone step function and returns a set of score intervals together with an estimated probability for each interval. In the test set, a prediction is calibrated by locating the interval it falls into and outputting the corresponding estimated probability. Conceptually, isotonic regression can be viewed as a flexible generalization of histogram binning, where the partition are determined in a data-driven manner subject to monotonicity.

\xhdr{Platt scaling \citep{Pla-99}}  Platt scaling is a parametric post-processing calibration method that converts a non-probabilistic output into a calibrated probability by fitting a logistic regression on a calibration set. In the standard form, it takes a real-valued score $\logit_i$ from a fixed classifier and outputs a sigmoid-transformed probability $\postProPred_i=\sigmoid{\plattCoef \logit_i+\plattInterc}$, where parameters $\plattCoef$ and $\plattInterc$ are learned in the calibration set by minimizing the negative log-likelihood (NLL). In our setting, the raw forecasts are already probabilities $\prediction_i\in(0,1)$. We therefore apply Platt scaling on the log-odds of the raw probabilities: $\logit_i=\log(\prediction_i/{1-\prediction_i})$, and fit $\postProPred_i=\sigmoid{\plattCoef \logit_i+\plattInterc}$, with $\plattCoef$ and $\plattInterc$ estimated by minimizing NLL on calibration set. This preserves a monotone, sigmoid-shaped mapping while operating on an unbounded input, consistent with the original motivation of Platt scaling as a sigmoid model in log-odds space. 

\xhdr{Scaling binning \citep{KLM-19}} Scaling binning is a hybrid recalibration method that combines a learned scaling function with a subsequent discretization step. It It proceeds in three stages. First, it fits a function $\scalingFunc\in\scalingFuncFamily$ on a calibration set to map the original outputs to intermediate scores by minimizing the squared loss $\sum(\outcome-g(\prediction))^2$. Second, it constructs $\binsNum$ bins using an equal-frequency rule based on the values $\{\scalingFunc(\prediction_i)\}$, so that each bin contains approximately the same number of calibration samples in the transformed space. Finally, it discretizes the learned mapping by replacing $\scalingFunc(\prediction)$ with a bin-wise constant: for any input $\prediction$, one computes $\scalingFunc(\prediction)$, then locates the bin that $\scalingFunc(\prediction)$ falls into, and outputs the average of $\scalingFunc(\prediction_i)$ over calibration samples whose transformed scores fall in the same bin. The resulting calibrator is therefore a piecewise-constant approximation of the fitted scaling function in $\scalingFuncFamily$, which retains the global trend learned by $\scalingFunc$ while reducing sensitivity to local noise through binning step. Then we use the binned mapping to recalibrate predictions in the test set.

\xhdr{Temperature scaling \citep{GPSW-17}} Temperature scaling is a parametric recalibration method that introduces a single positive parameter $\temperature>0$. Given a logit vector $\logit_i$, it outputs the recalibrated predictive distribution as $\softmax(\logit_i/\temperature)$. The temperature $\temperature$ controls the sharpness of the distribution: $\temperature=1$ recovers the original probabilities; $\temperature>1$ produces a softer distribution and approaches the uniform distribution as $\temperature\rightarrow\infty$; while $\temperature<1$ makes the distribution more peaked and collapses to a point as $\temperature\rightarrow 0$. The parameter $\temperature$ is learned from a calibration set by minimizing the NLL. Since rescaling logits by a positive constant does not change the predicted class, temperature scaling preserves the predicted label and therefore does not affect classification accuracy.

\subsection{Prompt Sample in Appendix~\ref{apx:experimental process}}
\label{apx:prompt}

\begin{promptbox}{Weather}
    \ttfamily
    You are a careful forecaster.
    \par\vspace{1\baselineskip}
    CRITICAL RULES:
    
    - USE ONLY the five provided numbers: temperature, humidity, wind\_speed, cloud\_cover, pressure.
    
    - Do NOT use internet, retrieval, climatology, location priors, memory, or any training-time facts.
    
    - Even if uncertain, provide your best probability estimate.
    \par\vspace{1\baselineskip}
    OUTPUT POLICY:
    
    - Return ONLY valid JSON. No extra text. No chain-of-thought.
    \par\vspace{1\baselineskip}
    JSON schema:
    
    \{"rain\_probability": <float in [0,1]>, "confidence\_note": "<=20 words>"\}
    \par\vspace{1\baselineskip}
    Below is a single weather snapshot. Using ONLY these five values, estimate the probability that it will rain (precipitation >= 0.1 mm) in the next \{hours\} hours.
    \par\vspace{1\baselineskip}
    Current readings (SI units):
    
    - temperature (°C): \{temperature\}
    
    - humidity (\%): \{humidity\}
    
    - wind\_speed (m/s): \{wind\_speed\}
    
    - cloud\_cover (\%): \{cloud\_cover\}
    
    - pressure (hPa): \{pressure\}
    \par\vspace{1\baselineskip}
    Constraints:
    
    - Base your judgment solely on the five numbers above; do NOT use any external knowledge or priors.
    
    - Return ONLY JSON with keys exactly:
    \{\{"rain\_probability": <float in [0,1]>, "confidence
    \_note": "<=20 words>"\}\}
\end{promptbox}

\begin{promptbox}{Bitcoin}
    \ttfamily
    The Bitcoin dataset records daily OHLCV data. We aim to predict whether the closing
    price on \{target\_date\} will exceed the previous day's close (\{prev\_date\}) by
    several relative and absolute thresholds.
    \par\vspace{1\baselineskip}
    Use ONLY information prior to \{target\_date\}. You may reference the last 30 days of OHLCV data (up to {prev\_date}) and any news summaries dated on or before \{prev\_date\}.
    \par\vspace{1\baselineskip}
    [Threshold events to evaluated]
    
    Relative (vs previous close):
    
    \{relative\_events\}
    \par\vspace{1\baselineskip}
    Absolute (USD):
    
    \{absolute\_events\}
    \par\vspace{1\baselineskip}
    Return a single JSON object:
    
    \{\{
    
      "relative": \{\{
      
        "rel\_0p5": probability close increases >=0.5\%,
        
        "rel\_1p5": probability close increases >=1.5\%,
        
        "rel\_2p0": probability close increases >=2.0\%,
        
        "rel\_4p0": probability close increases >=4.0\%
        
      \}\},
      
      "absolute": \{\{
      
        "abs\_100": probability close increases >= \$100,
        
        "abs\_500": probability close increases >= \$500,
        
        "abs\_1500": probability close increases >= \$1500,
        
        "abs\_3000": probability close increases >= \$3000
        
      \}\}
      
    \}\}
    
    All probabilities must be between 0 and 1. 
    \par\vspace{1\baselineskip}
    [BEGIN DATA]
    \par\vspace{1\baselineskip}
    [30-day OHLCV up to \{prev\_date\} (date, open, high, low, close, volume)]
    
    \{price\_table\}
    \par\vspace{1\baselineskip}
    [News summary (up to \{prev\_date\})]
    
    \{news\_summary\}
    \par\vspace{1\baselineskip}
    [END DATA]
\end{promptbox}
For weather forecasting tasks, the prompt prohibits the use of any external information, so the LLMs' predictive ability is evaluated based solely on the provided meteorological records. In contrast, for Bitcoin forecasting tasks, we explicitly allow the model to use the past 30 days of price data and a news summary which is restricted to information available up to the targeted day $\newday$, as cryptocurrency prices can be strongly influenced by market conditions and external events. 

\end{document}